\documentclass{article} 
\usepackage{iclr2027_conference,times}

\usepackage{graphicx} 
\usepackage{natbib}
\usepackage{subcaption}
\usepackage{xcolor}
\usepackage{hyperref}
\usepackage{booktabs}
\usepackage{tabularx}
\usepackage{makecell}

\usepackage{algorithm}
\usepackage{algpseudocode}
\usepackage{thmtools}
\usepackage{thm-restate}

\usepackage{url}
\usepackage{paralist}
\usepackage{amsthm}
\usepackage{amsmath,amsfonts,bm}
\usepackage{mathtools}

\usepackage{cleveref}
\usepackage{comment}
\usepackage{todonotes}
\usepackage{placeins}
\usepackage{tcolorbox}
\usepackage{multirow}
\usepackage{longtable}

\def\eqref#1{equation~\ref{#1}}

\def\1{\bm{1}}

\DeclareMathAlphabet{\mathsfit}{\encodingdefault}{\sfdefault}{m}{sl}
\SetMathAlphabet{\mathsfit}{bold}{\encodingdefault}{\sfdefault}{bx}{n}

\newcommand{\R}{\mathbb{R}}

\DeclareMathOperator{\sign}{sign}

\newcommand{\vecop}{\operatorname{vec}}
\newcommand{\polar}{\operatorname{polar}}

\newtheorem{theorem}{Theorem}
\newtheorem{lemma}{Lemma}
\newtheorem{corollary}{Corollary}
\newtheorem{definition}{Definition}
\newtheorem{proposition}{Proposition}

\newtheorem{assumption}{Assumption}

\usepackage{xargs}                      

\usepackage{mathrsfs}

\newcommandx{\del}[2]{\dfrac{\mathrm{d} #1}{\mathrm{d} #2}}

\newcommand{\lp}{\left(}
\newcommand{\rp}{\right)}

\newcommand{\diag}{\mathrm{diag}}

\definecolor{darkblue}{RGB}{0,0,139} %

\Crefname{equation}{Eq.}{Eqs.}
\Crefname{figure}{Fig.}{Figs.}
\Crefname{appendix}{App.}{App.}
\Crefname{section}{Sec.}{Sec.}
\Crefname{assumption}{Assumption}{Assumptions}
\usepackage{xcolor}

\newtheorem*{theorem*}{Theorem}

\newcommand{\important}[1]{\textcolor{blue}{#1}}

\definecolor{applegreen}{rgb}{0.01, 0.75, 0.24}
\definecolor{brightmaroon}{rgb}{0.66, 0.13, 0.24}
\definecolor{warningyellow}{rgb}{0.86, 0.56, 0.14} 
\definecolor{skyblue}{rgb}{0.15, 0.4, 0.91} 

\title{Optimizer Geometry Transforms the Spectral Low-Rank Bias in Deep Matrix Factorization}
\title{Every Mode Has Its Time: Optimizer Geometry in Deep Matrix Factorization}
\title{Beyond the Final Rank: Optimizer Dynamics in Deep Matrix Factorization}
\title{Geometry Sets the Pace: Spectral Learning Dynamics in Deep Matrix Factorization}
\title{Optimizer Geometry Sets the Pace: Spectral Learning Dynamics in Matrix Factorization}

\author{Mahalakshmi Sabanayagam $\qquad$ Simon Lucey \\
Australian Institute for Machine Learning, Adelaide University \\
\texttt{\{mahalakshmi.sabanayagam, simon.lucey\}@adelaide.edu.au} \\
}

\iclrfinalcopy 
\begin{document}

\maketitle

\begin{abstract}

Recent successes of matrix- and curvature-based optimizers have renewed interest in how update geometry shapes learning. These methods normalize or precondition updates, changing how different components progress during training. In deep matrix factorization, the geometry that slows gradient descent (GD) favors low-rank solutions by delaying the emergence of small singular modes. This raises the question of what remains of that spectral bias when normalization or curvature correction weakens or removes the slowdown. We study how optimizer geometry and depth jointly govern this behavior through a common framework for singular mode dynamics. Under explicit balance and alignment assumptions, we derive birth, saturation, and decay laws for Euclidean GD, coordinate-wise updates of SignGD and an instantaneous Adam approximation, spectral updates of Muon and cumulative Shampoo, and a block curvature model of K-FAC.  The resulting picture is not a simple ordering from stronger to weaker low-rank bias.  To highlight, SignGD and ideal Muon eliminate the divergent birth barrier and drive unsupported modes to zero in finite time. Cumulative Shampoo initially retains GD's depth-dependent barrier, then accumulated gradients produce a catch-up phase while making previously active modes increasingly persistent. Undamped K-FAC cancels the factorization-induced slowdown while preserving the ordering of the target singular values, whereas positive damping introduces a spectral threshold below which the slow GD phase laws reappear. These results give normalization, accumulated state, damping, and depth a direct interpretation as controls determining when modes emerge, persist, and disappear during training.

\end{abstract}

\section{Introduction}
\label{sec:intro}

For much of modern deep learning, large scale optimization was dominated by SGD and Adam-like coordinate-wise methods, while matrix preconditioning and approximate curvature methods remained comparatively limited in practice. Their additional memory, numerical, and systems requirements made their advantages difficult to realize at scale \citep{martens2015optimizing,gupta2018shampoo}. This landscape is beginning to change. Recent foundation models with multi-billion and trillion parameters \citep{team2025kimi,liu2025muon,khona2026soap,ji2026quantization} are effectively trained with spectral optimizers such as Muon \citep{jordan2024muon} 
and SOAP \citep{vyas2025soap} at substantially large model, data, and batch scales.
%
This empirical shift raises a theoretical question. The implicit bias analysis of gradient descent (GD) shows that Euclidean geometry slows the progress of weak singular modes and thereby promotes low-rank solutions in deep matrix factorization \citep{arora2019implicit,li2021towards}.
Whereas spectral normalization and curvature correction are designed to reduce precisely this kind of anisotropy, in idealized limits they can weaken or even cancel the singular value dependent slowdown.
Their recent success therefore raises the question of whether GD's spectral bias induced by depth is lost, preserved in a weaker form, or replaced by a qualitatively different mechanism of mode selection.
More broadly, we ask:

\begin{quote}
    \emph{When do modern optimizers merely accelerate a common learning trajectory, and when do they change which spectral components emerge, persist, or disappear?}
\end{quote}


We investigate these questions in deep linear networks, where depth changes the parameterization and optimization dynamics without changing the end-to-end linear functions. This makes it a particularly clean setting for separating optimizer-induced dynamics from parameterization-induced bias. Our theoretical analysis focuses on deep matrix factorization, a canonical special case in which the evolution of singular modes can be characterized explicitly.
Specifically, in full observation regression, it exposes transient singular mode learning \citep{saxe2013exact,soudry2018implicit}, while in underdetermined matrix sensing and completion, it permits direct study of which interpolating matrix is selected \citep{arora2019implicit,li2021towards,rhee2026muon,parviz2026reassessing}.


%
In contrast to the Euclidean geometry of GD,
non-Euclidean optimizers transform the factorization-induced geometry in qualitatively different ways. 
SignGD and Adam apply coordinate-wise adaptive scaling \citep{bernstein2018signsgd,kingma2014adam}; Muon acts directly on the spectrum of matrix updates \citep{jordan2024muon} while Shampoo introduces history-dependent matrix preconditioning \citep{gupta2018shampoo}; and K-FAC attempts to compensate for the parameterization through curvature using approximate natural gradients \citep{martens2015optimizing}. 
In our study, we categorize these geometries based on their updates into \emph{Euclidean} (GD), \emph{coordinate-wise} (SignGD and Adam), \emph{spectral} (Muon and Shampoo) and \emph{curvature} based (K-FAC). 
These methods can reach nearly identical training losses while producing weights with different spectra, representations, and performance.
Recent studies have begun to identify optimizer-dependent feature and simplicity biases, including differences between Adam and GD in nonlinear models \citep{cattaneo2023implicit,zhang2024implicit,xie2024implicit} and between spectral and Euclidean updates in deep linear networks \citep{dragutinovic2026use}. 
These findings further motivate the central question we posed, and answering this requires distinguishing how quickly modes emerge from how they settle or disappear, since an optimizer can affect these phases differently.



%


\begin{figure}[t]
    \centering
    \begin{subfigure}{0.32\textwidth}
        \centering
        \includegraphics[width=\linewidth]{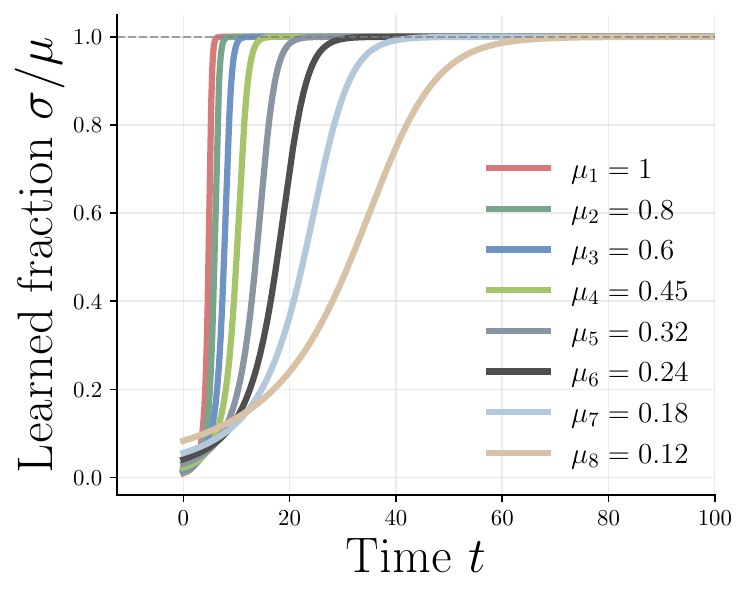}
        \vspace{-0.3cm}
        \caption{GD}
        \label{fig:gd_intro}
    \end{subfigure}
    \begin{subfigure}{0.32\textwidth}
        \centering
        \includegraphics[width=\linewidth]{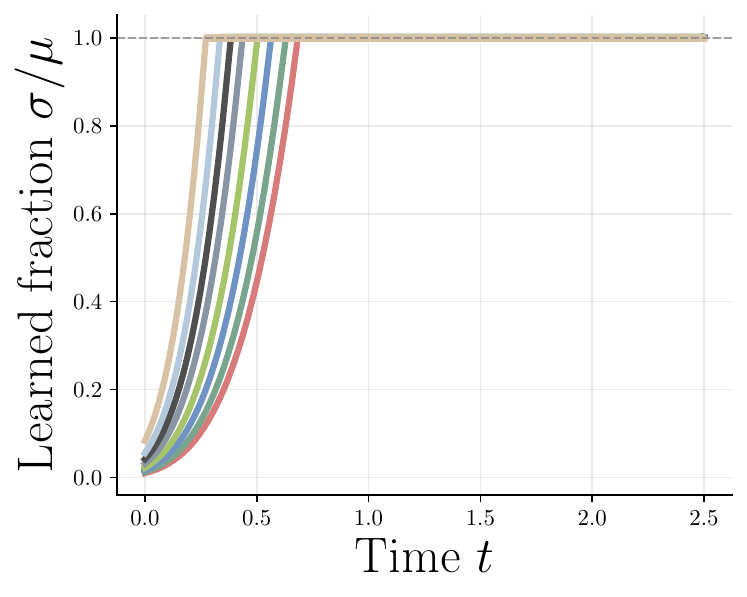}
        \vspace{-0.3cm}
        \caption{Muon}
        \label{fig:muon_intro}
    \end{subfigure}
    \begin{subfigure}{0.32\textwidth}
        \centering
        \includegraphics[width=\linewidth]{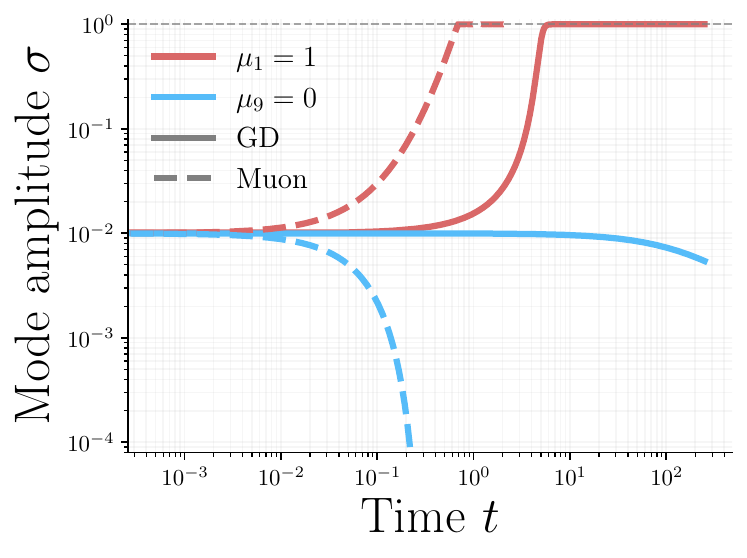}
        \vspace{-0.3cm}
        \caption{GD vs Muon: Birth and decay}
        \label{fig:muon_intro}
    \end{subfigure}
    \vspace{-0.2cm}
    \caption{\textbf{Optimizer geometry sets the pace of spectral learning.}
    In balanced aligned deep matrix factorization, GD and Muon produce distinct spectral learning trajectories (experimental details in \Cref{app:exp_settings}).
\textbf{(a)} Under GD, larger target modes are learned earlier, while smaller modes remain suppressed for longer.
\textbf{(b)} Under Muon, factor amplitudes grow at a common speed, allowing smaller targets to reach full recovery earlier.
\textbf{(c)} From the same initial amplitude $0.01$, Muon accelerates the growth of a positive target mode (red) and, importantly, removes the zero target mode in finite time (blue).
GD instead exhibits delayed growth and slow decay.
Thus, faster emergence of weak modes does not imply weaker removal of unsupported modes. Optimizer geometry determines which spectral components emerge, persist, and disappear. Other optimizers are discussed in later sections.
    }
    \label{fig:intro_new}
    \vspace{-0.4cm}
\end{figure}

Our approach separates the geometry induced by the factorization from the update rule and state maintained by the optimizer.
The product dynamics first express how each layer updates 
combine into the dynamics of the effective matrix. 
Under explicit balance, target alignment, and state compatibility assumptions, this description reduces to a common equation for each singular mode.
The factorization determines how factor dynamics changes the mode amplitude, while the optimizer determines the update through its normalization, accumulated statistics, and damping.
This gives a controlled setting in which these mechanisms can be compared without attributing differences to changing singular directions.
%
Our analysis separates three aspects of spectral learning that are often conflated: the \emph{birth} of initially small nonzero modes to reach a prescribed amplitude, \emph{saturation} near their targets, and \emph{decay} of unsupported modes whose target is zero.
These phases need not respond to an optimizer in the same way.
A method can permit rapid emergence while retaining a slow decay tail, and accumulated gradient history can change the rate at which an already learned component settles.
Consequently, the emergence of small modes alone does not describe the spectral bias of an entire trajectory as studied in prior works. Our results make these distinctions explicit as shown in \Cref{fig:intro_new}.  
Moreover, it also brings out how optimizer state parameters, including adaptive stabilization and curvature damping, interpolate between distinct rank selection regimes, and verifies that these differences persist in underdetermined matrix factorization and regression.
\textbf{Our contributions} are: \\ 
    \textbf{ $\mathbf{(1)}$ A common framework for spectral learning.}
    We express effective matrix dynamics through the layer updates and derive an exact mode reduction under explicit balance and alignment 
    assumptions. 
    The reduction separates the contribution of the factorization from the optimizer update states.

    \textbf{$\mathbf{(2)}$ Phase laws revealing how optimizer settings shape the bias of depth.}
    We derive birth, saturation, and decay laws that distinguish normalization, accumulated history, and damping.
    GD and SignGD dynamics establish contrasting reference behaviors. Instantaneous Adam identifies how a positive stabilizer restores slow dynamics at small gradient magnitudes. Ideal Muon accelerates both emergence and removal of modes in finite time. 
    For cumulative Shampoo, an exact accumulator identity quantifies the transition from initialization dominated to history dominated dynamics. 
    For the specified K-FAC block model, an exact birth time decomposition separates an undamped contribution from a damping-weighted GD contribution.
    Together, these results identify when optimizer geometry preserves, modifies, or cancels the suppression of small modes induced by depth.

    \textbf{$\mathbf{(3)}$ Numerical validation of phase laws and regime transitions.}
    Controlled experiments in deep matrix factorization and regression 
    verify the derived phase laws. 
    These experiments demonstrate that settings commonly associated with numerical stability can change relative spectral progress, determining which components emerge, persist, and disappear during training.

\vspace{-0.15cm}
\section{Related Work}
\label{sec:related_work}
\vspace{-0.1cm}
\paragraph{Optimization geometry and solution bias.}  
Optimization geometry influences which solution is selected among those that fit the training data \citep{wilson2017marginal,gunasekar2018characterizing}.
For Adam, implicit bias analyses establish margin characterizations in separable classification under specified optimizer and learning rate assumptions \citep{zhang2024implicit,gronich2026implicit}.
Natural gradient provides another illustration of the role of geometry: under suitable rank conditions, its effective matrix dynamics can eliminate the dependence on depth that arises under Euclidean gradient flow \citep{kerekes2021depth,bernacchia2018exact}.
These results motivate examining both the symmetries respected by an optimizer and the trajectory it induces.
Our framework complements this perspective by expressing changes in unobserved matrix directions through accumulated optimizer induced motion, while using mode dynamics to isolate the effects of different optimizers. 

\paragraph{Training dynamics in matrix factorization.}
Analyses of gradient flow and gradient descent connect spectral dynamics to low rank recovery and incremental rank growth under specified initialization and measurement assumptions \citep{gunasekar2017implicit,arora2019implicit,li2021towards,jin2023understanding}.
While depth strengthens the tendency of low rank bias, it is not always equivalent to minimizing a single Schatten norm, indicating that trajectory-level geometry is essential \citep{razin2020implicit}.  
Recent Muon analyses characterize how polar normalization changes spectral learning and simplicity bias in deep linear networks and linear attention \citep{dragutinovic2026use}, and establish approximately uniform singular value growth under assumptions on LoRA factorization \citep{kang2026uniform}.
For fully observed bilinear matrix factorization, \citet{rhee2026muon} derive uniform factor singular value growth after alignment, 
and analyze oscillations caused by finite optimization steps.
Complementary empirical comparisons show that Muon's relative performance depends on the factorization problem and hyperparameter tuning \citep{parviz2026reassessing}.
Shampoo has been interpreted as an adapted version of spectral descent that retains information about stochasticity and the parameter trajectory \citep{eschenhagen2026clarifying}.
Building on these results, we compare mode birth, decay, and saturation within a common balanced aligned model.
While most analyses study one optimizer in isolation, our analysis includes SignGD and simplified Adam,
derives an exact accumulator identity for cumulative Shampoo, and quantifies the effect of additive damping in K-FAC.
We give a broader discussion on related works in \Cref{app:related_work}.


\vspace{-0.15cm}
\section{Preliminaries}
\label{sec:prelims}
\vspace{-0.1cm}
\paragraph{Notations.}
For matrices $X$ and $Y$ of the same dimensions, $\langle X,Y\rangle_F=\operatorname{tr}(X^\top Y)$ denotes the Frobenius inner product.
We use $\|\cdot\|_F$ for the Frobenius norm of a matrix, which reduces to the Euclidean norm for a vector, and matrix $I_d$ denotes the identity matrix of size $d\times d$.
The notation $\nabla_W f(W)$ denotes the gradient of $f$ with respect to $W$. 
The symbols $\odot$ and $\oslash$ denote entrywise multiplication and division, respectively, with $H^{\odot 2}=H\odot H$.
In entrywise expressions, square roots are also applied entrywise.
A dot denotes differentiation with respect to time, so $\dot a=da/dt$.

\paragraph{Deep matrix factorization.}

Let $W_\ell\in\R^{d_\ell\times d_{\ell-1}}$ denote the parameters of layer $\ell$ in a depth $L$ linear network with $L\geq2$. The effective matrix is
\begin{align}
W=W_L W_{L-1} \cdots W_2 W_1. 
\label{eq:eff_W}
\end{align}
We assume $d_\ell\geq\min(d_0,d_L)$ for every hidden layer, so that the architecture does not impose a rank bottleneck.
Let $S_1,\ldots,S_m\in\R^{d_L\times d_0}$ be known measurement matrices and let $y=(y_1,\ldots,y_m)\in\R^m$ be the observed values.
We define $\mathcal{A}(W)=(\langle S_j,W\rangle_F)_{j=1}^m$, and 
$$
\mathcal{L}(W)=\frac12\sum_{j=1}^m((\mathcal{A}(W))_j-y_j)^2,
\qquad
G=\nabla_W\mathcal{L}(W)
=\sum_{j=1}^m((\mathcal{A}(W))_j-y_j)S_j.
$$
The model is trained by minimizing the mean squared loss $\mathcal{L}(W)$. In noiseless recovery, $M\in\R^{d_L\times d_0}$ denotes the unknown target matrix and $y_j=\langle S_j,M\rangle_F$.
Matrix completion corresponds to selecting individual entries of $M$ through the measurement matrices.
Full observation gives $\mathcal{L}(W)=\|W-M\|_F^2/2$.
The gradient with respect to the effective matrix is $G$.
In particular, $G=W-M$ under full observation.
Define the prefix and suffix products
\begin{align}
A_\ell=W_{\ell-1}\cdots W_1,
\qquad
B_\ell=W_L\cdots W_{\ell+1},
\qquad
A_1=I_{d_0},
\quad B_L=I_{d_L}.
\label{eq:AB_l}
\end{align}
The chain rule gives the layer gradient
\begin{align}
H_\ell=\nabla_{W_\ell}\mathcal{L}(W)
=B_\ell^\top G A_\ell^\top. 
\label{eq:grad_l}
\end{align}
For full observation, $\mathcal{L}(W)=\|W-M\|_F^2/2$ has the unique minimizer $M$ for any depth.
Consequently, different spectra at finite time describe transient learning and cannot establish different interpolating solutions. However, the training trajectories differ across optimizers.
Therefore, we focus on how depth and optmizer geometry affect the spectrum before convergence in the full observation setting and distinguish finite time spectral bias from asymptotic solution selection throughout.

\begin{table}[t]
\centering
\small
\renewcommand{\arraystretch}{1.7}
\begin{tabular}{@{}lll@{}}
\toprule
\textbf{Optimizer} & ${Z_\ell}$ & \textbf{Update category} \\
\midrule
GD
& $H_\ell$
& Euclidean \\

SignGD \citep{bernstein2018signsgd}
& $\sign(H_\ell)$
&  Coordinate-wise \\

Adam \citep{kingma2014adam}
& \makecell{$ {m_\ell}\oslash({\sqrt{ v_\ell}+\epsilon})$ \,\, with \\   $\tau_m\dot m_\ell=H_\ell-m_\ell, \,\,
\tau_v\dot v_\ell=H_\ell^{\odot2}-v_\ell$} 
& Coordinate-wise (adaptive) \\ 

Ideal Muon \citep{jordan2024muon}
& $U_\ell V_\ell^\top$ \,\, where \,\, $H_\ell = U_\ell \Sigma_\ell V_\ell^\top$ 
& Spectral \\ 

Shampoo \citep{gupta2018shampoo}
& \makecell{$C_\ell^{-1/4}H_\ell D_\ell^{-1/4}$ \, \, with \,\, $\lambda_{\mathrm{Sh}} >0, $ \\ 
$C_\ell(t)=\lambda_{\mathrm{Sh}}I+
\int_0^t H_\ell(u)H_\ell(u)^\top\,du,$ \\
$
D_\ell(t)=\lambda_{\mathrm{Sh}}I+
\int_0^t H_\ell(u)^\top H_\ell(u)\,du
$ 
}
& Spectral \\ 


K-FAC \citep{martens2015optimizing}
& \makecell{$\operatorname{vec}(Z_\ell) = (\mathcal F_\ell+\lambda_{\mathrm{KF}} I )^{-1}\operatorname{vec}(H_\ell),$ \\ 
$\mathcal F_\ell= (A_\ell A_\ell^\top) \otimes (B_\ell^\top B_\ell)$, \,\,
$\lambda_{\mathrm{KF}}\geq 0$}
& Curvature based \\ 
\bottomrule
\end{tabular}
\caption{Layer update directions in the flow $\dot W_\ell=-Z_\ell$, where $H_\ell=B_\ell^\top G A_\ell^\top$.
SignGD and Adam applies coordinate-wise updates, Muon uses polar normalization, Shampoo uses cumulative statistics, and K-FAC uses additive block damping. Detailed account of these optimizers is in \Cref{app:optimizer}.}
\vspace{-0.2cm}
\label{tab:optimizer_conventions}
\end{table}

\paragraph{Optimizer conventions.}
We study deterministic flows driven by the full gradient, without weight decay or explicit regularization. For each layer, the continuous time dynamics of the matrix $W_\ell$ is 
$$
\dot W_\ell(t)=-Z_\ell(t),
$$
where $t$ is continuous time, a dot denotes its derivative, 
The flow uses unit learning rate, with any constant common learning rate absorbed into time and the stated memory times. All moment matrices have the same dimensions as $W_\ell$.
The equations for $Z_\ell$ for all the optimizers 
analyzed in this paper are given in \Cref{tab:optimizer_conventions} and each optimizer is discussed in detail in \Cref{app:optimizer}. We present the exact effective matrix dynamics in \Cref{prop:effective_dyn} (proof in \Cref{app:eff_W_dyn}) which is the main object in our analysis.

\begin{proposition}[Exact effective matrix $W$ dynamics]
\label{prop:effective_dyn}
Using the definitions of $A_\ell$ and $B_\ell$ from \Cref{eq:AB_l}, for every layer trajectory satisfying $\dot W_\ell=-Z_\ell$, the effective matrix satisfies
$$
\dot W=-\sum_{\ell=1}^L B_\ell Z_\ell A_\ell.
$$
\end{proposition}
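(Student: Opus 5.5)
The plan is a direct application of the product rule for matrix-valued functions of time. The effective matrix $W(t)=W_L(t)\cdots W_1(t)$ is a product of $L$ differentiable matrix trajectories. Its derivative is therefore the sum of $L$ terms, where the $\ell$-th term differentiates only the $\ell$-th factor and keeps the others fixed in place:
\begin{align*}
\dot W=\sum_{\ell=1}^L W_L\cdots W_{\ell+1}\,\dot W_\ell\,W_{\ell-1}\cdots W_1 .
\end{align*}
I would justify this by induction on $L$. For $L=1$ it is trivial. For the inductive step, write $W=W_L\,(W_{L-1}\cdots W_1)$ and apply the two-factor product rule $\tfrac{d}{dt}(XY)=\dot X Y+X\dot Y$. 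This rule holds entrywise because each entry of $XY$ is a finite sum of products of scalar differentiable functions. Matrix multiplication does not commute, so the ordering of the factors has to be kept in every term. The two-factor rule does respect that ordering.

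Next I would identify the left and right products with the prefix and suffix products of \Cref{eq:AB_l}. The left factor $W_L\cdots W_{\ell+1}$ is exactly $B_\ell$, and the right factor $W_{\ell-1}\cdots W_1$ is exactly $A_\ell$. The boundary conventions $B_L=I_{d_L}$ and $A_1=I_{d_0}$ cover the terms $\ell=L$ and $\ell=1$, where one side is empty. Finally I would substitute the layer flow $\dot W_\ell=-Z_\ell$ into each term, which gives $\dot W=-\sum_{\ell}B_\ell Z_\ell A_\ell$.

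There is no substantive obstacle. The only point needing care is regularity. The identity holds wherever each $W_\ell$ is differentiable with $\dot W_\ell=-Z_\ell$, and it does not depend on the particular form of $Z_\ell$. For optimizers like SignGD or ideal Muon, $Z_\ell$ may be discontinuous, so I would state the identity along any trajectory that is absolutely continuous and satisfies the flow almost everywhere. The same product-rule argument then gives the result almost everywhere, or equivalently in integrated form. I would also check that the dimensions match: $B_\ell\in\R^{d_L\times d_\ell}$, $Z_\ell\in\R^{d_\ell\times d_{\ell-1}}$ and $A_\ell\in\R^{d_{\ell-1}\times d_0}$, so each term lies in $\R^{d_L\times d_0}$.
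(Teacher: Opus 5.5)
Your proposal is correct and follows the same route as the paper: apply the product rule to $W_L\cdots W_1$, identify the factors to the left and right of $\dot W_\ell$ as $B_\ell$ and $A_\ell$, and substitute $\dot W_\ell=-Z_\ell$. Your induction justification of the multi-factor product rule and your remark that the identity holds almost everywhere along absolutely continuous trajectories (relevant for discontinuous updates such as SignGD and ideal Muon) are sound refinements that the paper leaves implicit.
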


\section{A Common Framework for Optimizer Geometry}
\label{sec:framework}
We develop a common framework to compare how optimizer geometry controls the emergence, growth, and decay of singular modes. 
To derive explicit phase laws, we adopt the training dynamics analysis that connects low rank learning to the evolution of the effective matrix spectrum and consider a balanced, aligned setting in which singular directions remain fixed \citep{gunasekar2017implicit,arora2019implicit,li2021towards}.
This construction follows the exact mode analysis of deep linear networks \citep{saxe2013exact} and isolates the effects of different optimizer geometries. 

We consider full observation with
$\mathcal{L}(W) =\frac12\|W_L \cdots W_1 -M\|_F^2$, and the gradient $G=W-M.$
While the minimizer is unique, this setting fixes the target of every mode and allows its learning time to be compared across optimizers. We make the following assumptions.

\begin{assumption}[Balanced initialization]
\label{assump:balancedness}
The factors are balanced at initialization:
$
W_\ell(0)W_\ell(0)^\top
=
W_{\ell+1}(0)^\top W_{\ell+1}(0),
\text{ for }
\ell=1,\ldots,L-1.
$
\end{assumption}

Balanced initialization is a standard condition in the analysis of deep matrix factorization \citep{arora2019implicit,rhee2026muon}.
For Euclidean gradient flow,  $W_\ell(t) W_\ell(t)^\top-W_{\ell+1}(t)^\top W_{\ell+1}(t)$ is conserved for all $t$, so initial balance persists throughout training.
For the other optimizers, preservation follows within the aligned construction from the compatibility condition stated below. 

\begin{assumption}[Alignment with the target]
\label{assump:aligned_modes}
There exist orthogonal matrices $Q_0,\ldots,Q_L$ with  $Q_\ell \in\mathbb{R}^{d_\ell \times d}$ where $d = \min(d_0, d_L)$ such that 
$$
M=Q_L\operatorname{diag}(\mu_1,\ldots,\mu_d)Q_0^\top,
\qquad
\text{ and } \qquad
W_\ell(0) =Q_\ell \operatorname{diag}(a_{\ell 1}(0),\ldots,a_{\ell d}(0)) Q_{\ell-1}^\top. 
$$
where $\mu_i\geq0$ and $a_{\ell i}(0)>0$.
$Q_0,\ldots,Q_L$ remain the reference bases for the mode analysis.
\end{assumption}

Together, \Cref{assump:balancedness,assump:aligned_modes} imply $a_{\ell i}(0)^2=a_{\ell+1,i}(0)^2$, and positivity therefore gives a common initial amplitude
$a_{\ell i}(0)=a_i(0)$ across layers.
While equal layer strengths and decoupled singular directions underlie the explicit learning curves of \citet{saxe2013exact},
fixing the directions gives every optimizer the same initial spectral configuration, so differences in mode evolution can be attributed to their update geometry and state.

\begin{assumption}[Compatible optimizer state and updates]
\label{assump:compatible_updates}
All layers use the same optimizer hyperparameters and have identical initial states when expressed in their respective mode bases.
The joint evolution of factors $W_\ell$ and optimizer states preserves the aligned form.
In particular, whenever the factor amplitudes and mode states agree across layers, the update directions satisfy
$$
Z_\ell
=
Q_\ell\operatorname{diag}(z_1,\ldots,z_d)Q_{\ell-1}^\top,
\qquad
\ell=1,\ldots,L,
$$
\end{assumption}

This condition makes the reduction a property of a specified optimizer flow.
It is satisfied by GD, Muon, and cumulative Shampoo 
in the aligned setting.
For Adam and SignGD, diagonality must hold in the actual parameter coordinates, which can be imposed by taking $Q_\ell=I$ for every $\ell$.
For K-FAC, we use the full observation block curvature model with common additive damping.
These choices retain the moment and preconditioner states responsible for the phase behavior studied below and shown in \Cref{app:aligned-mode-dynamics}.
Under \Cref{assump:compatible_updates}, equal initial amplitudes have equal updates across layers.
Consequently, the factors remain balanced and admit the representation
$$
W_\ell(t)
=
Q_\ell\operatorname{diag}(a_1(t),\ldots,a_d(t))Q_{\ell-1}^\top.
$$
The following reduction proved in \Cref{app:aligned-mode-dynamics} applies on intervals where the relevant amplitudes remain positive. At a zero crossing, any continuation is specified in the corresponding optimizer result.

\begin{theorem}[Exact dynamics of aligned modes]
\label{prop:aligned_dynamics}
Under \Cref{assump:balancedness,assump:aligned_modes,assump:compatible_updates}, define
$
\sigma_i=a_i^L,
q=1-\dfrac1L,
p=2q=2-\dfrac2L,
r_i=\sigma_i-\mu_i.
$
Then $W$ and every layer gradient $H_\ell$ has the form
$$
W=Q_L\operatorname{diag}(\sigma_1,\ldots,\sigma_d)Q_0^\top, \qquad
H_\ell
=
Q_\ell\operatorname{diag}(g_1,\ldots,g_d)Q_{\ell-1}^\top,
\,\, \text{ where }
g_i=r_i\sigma_i^q.
$$
The effective singular amplitudes satisfy $\dot\sigma_i=-L\sigma_i^qz_i.$
Here, $i$ labels a fixed mode and does not require the amplitudes to remain sorted by magnitude.
\vspace{-0.2cm}
\end{theorem}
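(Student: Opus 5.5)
The plan is to work on an interval where all amplitudes $a_i(t)$ stay positive and the aligned representation $W_\ell(t)=Q_\ell\operatorname{diag}(a_1,\ldots,a_d)Q_{\ell-1}^\top$ holds. The discussion after \Cref{assump:compatible_updates} guarantees this representation, and it gives $W_\ell(t)W_\ell(t)^\top=W_{\ell+1}(t)^\top W_{\ell+1}(t)$ on the relevant subspace. From there the argument is three substitutions: compute $W$, $A_\ell$, $B_\ell$ and $G$ in the reference bases; read off $H_\ell$ from \Cref{eq:grad_l}; and insert the diagonal update from \Cref{assump:compatible_updates} into \Cref{prop:effective_dyn}.

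\textbf{Products and the gradient.} I use orthonormality of the columns, $Q_\ell^\top Q_\ell=I_d$. When $d_\ell>d$ this is the only identity available, not $Q_\ell Q_\ell^\top=I$, and it is exactly what the telescoping needs. The products collapse:
\[
A_\ell=Q_{\ell-1}\operatorname{diag}(a_i^{\ell-1})Q_0^\top,\qquad B_\ell=Q_L\operatorname{diag}(a_i^{L-\ell})Q_\ell^\top,\qquad W=Q_L\operatorname{diag}(\sigma_i)Q_0^\top,
\]
with $\sigma_i=a_i^L$. For $A_1=I_{d_0}$ and $B_L=I_{d_L}$ the displayed forms are not literal identities. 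I will handle them by noting that they only multiply matrices already living in the range of $Q_0$ or $Q_L$, or by treating the square case $d=d_0=d_L$ separately. With \Cref{assump:aligned_modes}, $G=W-M=Q_L\operatorname{diag}(r_i)Q_0^\top$. Then \Cref{eq:grad_l} gives
\[
H_\ell=Q_\ell\operatorname{diag}(a_i^{L-\ell})\operatorname{diag}(r_i)\operatorname{diag}(a_i^{\ell-1})Q_{\ell-1}^\top=Q_\ell\operatorname{diag}(r_ia_i^{L-1})Q_{\ell-1}^\top .
\]
Since $a_i>0$, we have $a_i^{L-1}=\sigma_i^{(L-1)/L}=\sigma_i^q$, so $g_i=r_i\sigma_i^q$, and this is the same for every layer.

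\textbf{Mode ODE.} By \Cref{assump:compatible_updates}, the equal amplitudes and states across layers give $Z_\ell=Q_\ell\operatorname{diag}(z_i)Q_{\ell-1}^\top$ with $z_i$ independent of $\ell$. Then
\[
B_\ell Z_\ell A_\ell=Q_L\operatorname{diag}(a_i^{L-\ell}z_ia_i^{\ell-1})Q_0^\top=Q_L\operatorname{diag}(\sigma_i^qz_i)Q_0^\top .
\]
Summing over the $L$ layers in \Cref{prop:effective_dyn} yields $\dot W=-Q_L\operatorname{diag}(L\sigma_i^qz_i)Q_0^\top$. Because $W=Q_L\operatorname{diag}(\sigma_i)Q_0^\top$ with fixed bases, I can project with $Q_L^\top(\cdot)Q_0$ and compare diagonal entries to get $\dot\sigma_i=-L\sigma_i^qz_i$. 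As a consistency check, the layerwise route gives $\dot a_i=-z_i$, so $\dot\sigma_i=La_i^{L-1}\dot a_i$, which agrees.

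\textbf{Main obstacle.} The delicate step is justifying that the aligned, balanced form actually persists along the joint trajectory of factors and optimizer states; the algebra only uses it. I would argue this by constructing a candidate solution: solve the reduced system for $(a_i,\text{states})$ and lift it to the aligned matrices. Next, check via \Cref{assump:compatible_updates} that this lift satisfies the full flow $\dot W_\ell=-Z_\ell$. Finally, invoke uniqueness of the flow on the interval where amplitudes are positive, since Muon's polar map and the fractional powers are locally Lipschitz away from zeros. The mode label $i$ is kept fixed throughout, so no sorting is needed.
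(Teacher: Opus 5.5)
Your proposal is correct and follows the paper's proof. It uses the same collapse of $A_\ell$, $B_\ell$, $W$ and $G$ in the reference bases, the same identification $g_i=r_i\sigma_i^q$, and the same two routes to $\dot\sigma_i=-L\sigma_i^q z_i$: the paper leads with $\dot a_i=-z_i$ and uses \Cref{prop:effective_dyn} as the cross-check, while you do the reverse.

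One caution concerns your lift-and-uniqueness step. Persistence of the aligned balanced form is already built into \Cref{assump:compatible_updates}; the paper adds only the tangency argument that equal amplitudes and states receive equal updates, so $\frac{d}{dt}(a_{\ell i}-a_{\ell+1,i})=0$. Your step is therefore unnecessary. As you justified it, it would also fail for SignGD, because the entrywise sign is discontinuous at the exactly zero off-diagonal entries of $H_\ell$ on the aligned manifold. It would likewise fail for Muon whenever $H_\ell$ is rank deficient (e.g.\ $d<\min(d_\ell,d_{\ell-1})$). In both cases the full-space vector field is not locally Lipschitz there.
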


The proof follows by substituting the aligned factors into the layer gradient and differentiating $\sigma_i=a_i^L$, as detailed in \Cref{app:aligned-mode-dynamics}.
The coefficient $L\sigma_i^q$ is common to every optimizer and comes from the factorization.
The optimizer determines $z_i$ through the current gradient and its evolving state.
For example, GD uses $z_i=g_i$ and yields $\dot\sigma_i=-L\sigma_i^p(\sigma_i-\mu_i)$ as also shown in \citet{arora2019implicit}, whereas normalization and preconditioning modify this dependence. 
\\[1ex]
\textbf{Comparisons across depth.}
Unless stated otherwise, we compare by fixing the target matrix $M$ and initial effective mode amplitudes $\sigma_i(0)\!=\!s_i$, implying balanced mode amplitudes $a_i(0)\!=\!s_i^{1/L}$.
Optimizer settings and initial states are specified separately for each comparison.
A common rescaling of time is used only to distinguish uniform acceleration from changes in relative mode evolution.



\vspace{-0.1cm}
\section{Optimizer Specific Results}
\label{sec:results}
\vspace{-0.1cm}
In this section, we ignore mode index $i$ for better readability. For a positive target $\mu$, \emph{birth} is the first passage from $\sigma(0)\!=\!s>0$ to a fixed threshold $b\in(s,\mu)$. We denote it by $T_{\mathrm{opt}}^{\mathrm{birth}}(s,b).$
\emph{Saturation} is the time required for the residual $r(t)\!=\!|\sigma(t)-\mu|$ to enter and remain below a tolerance $\tau>0$.
For an unsupported mode $\mu\! =\! 0$, \emph{decay} is the passage from $\sigma(0)\!=\!s>0$ to a tolerance $\delta \in (0,s)$, and is denoted by $T_{\mathrm{opt}}^{\mathrm{decay}}(s,\delta).$
Exact extinction means that the mode reaches $\sigma\!=\!0$ in finite time and remains there.
Birth always refers to escape from a positive seed, rather than spontaneous activation of an exactly zero factor mode.
From \Cref{prop:aligned_dynamics}, for $s<b$, time is $T_{\mathrm{opt}}(s,b)\!=\!-\frac1L\int_s^b\frac{x^{-q}}{z}\,dx$. 

\vspace{-0.2cm}
\paragraph{Birth and decay exponents.} 
Suppose the mode dynamics satisfies 
$\dot\sigma = c \sigma^{\alpha}$, where $c>0$. Then $T(a,b) = \frac1c  \int_a^b x^{-\alpha} \,dx$ with $0 < a < b$. Consequently,
\begin{equation}
    T(a,b)= \dfrac{1}{c}\log \dfrac{|b|}{|a|} \text{ for } \alpha=1, \qquad  \text{ and }  \qquad T(a,b)= \dfrac{ b^{1-\alpha}-a^{1-\alpha}}{c (1-\alpha) } \text{ for } \alpha \ne 1.
    \label{eq:bd_time}
\end{equation}

All the optimizers birth dynamics near $\sigma(0)\!=\!s\!=\!0$ satisfies $\dot\sigma \!\sim\! c_{\mathrm b}\sigma^{\alpha_{\mathrm b}}$, with $c_{\mathrm b}\!>\!0$ for a target mode $\mu \!>\!0$. 
Thus, $T^{\mathrm{birth}}(s,b)$ from \Cref{eq:bd_time} as $s \to 0$, $\alpha_{\mathrm b}\!<\!1$ gives finite escape from a vanishing positive seed, $\alpha_{\mathrm b}\!=\!1$ gives a logarithmic birth barrier, and $\alpha_{\mathrm b}\!>\!1$ gives a polynomial birth barrier. 
Similarly, the decay dynamics for an unsupported mode $\mu\!=\!0$ satisfies $\dot\sigma \!\sim\! - c_{\mathrm d}\sigma^{\alpha_{\mathrm d}}$, with $c_{\mathrm d}\!>\!0$. Thus, $T^{\mathrm{decay}}(s,\delta)$ can be similarly computed.
It gives that $\alpha_{\mathrm d}\!<\!1$ reaches zero in finite time, 
$\alpha_{\mathrm d}\!=\!1$ has exponential decay time,
$\alpha_{\mathrm d}\!>\!1$ has algebraic decay and zero is reached only asymptotically.

\vspace{-0.2cm}
\paragraph{Saturation exponent.} 
Suppose that near a nonzero target, the residual is $r_0=|\sigma(t_0)-\mu|$ at time $t_0$, satisfies $\dot r = -\beta r^\gamma$ with $\beta>0.$  
For $\gamma=0$, $r(t) = \max{\{ 0,(r_0-\beta t) \}}$, thus saturates (decreases) linearly at rate $\beta$. 
For $\gamma=1$, $r(t) = \exp{(-\beta t)} r_0$ saturates (decreases) exponentially at rate $\beta$. 

We derive the phase laws for the optimizers and present them using these exponents in \Cref{thm:theory_phase_laws}. All proofs are provided in \Cref{app:phase_laws}.

\begin{theorem}[Optimizer phase laws]
Based on the layer updates $Z_\ell$ in \Cref{tab:optimizer_conventions}, the singular mode dynamics $\dot\sigma$ for each optimizer are derived, and the following table summarizes them with the phase laws expressing the birth, decay, and saturation exponents $\sigma_\mathrm{b}$, $\sigma_\mathrm{d}$, and $\gamma$, respectively.

\resizebox{\linewidth}{!}{%
\begin{tabular}{@{}lllll@{}}
\toprule
\textbf{Optimizer} & 
\textbf{Dynamics}   & 
\begin{tabular}[c]{@{}l@{}}\textbf{Birth} \\  $\dot\sigma \sim c_{\mathrm b}\sigma^{\alpha_{\mathrm b}}$ \end{tabular} & 
\begin{tabular}[c]{@{}l@{}}\textbf{Decay} \\  $\dot\sigma \sim - c_{\mathrm d}\sigma^{\alpha_{\mathrm d}}$ \end{tabular}  & 
\begin{tabular}[c]{@{}l@{}}\textbf{Saturation} \\  $\dot r \sim -\beta r^\gamma$ \end{tabular} \\ \midrule

\begin{tabular}[c]{@{}l@{}}GD \end{tabular}   & 
$\dot\sigma=L\sigma^p(\mu-\sigma)$         & 
\begin{tabular}[c]{@{}l@{}} $\alpha_{\mathrm{b}} = p$ \\ $c_{\mathrm{b}} = L\mu$\\ \textbf{Time:} \color{orange}{Finite for $L=1$}, \\ \color{orange}{Asymptotic for $L>1$}\end{tabular}                                              & 
\begin{tabular}[c]{@{}l@{}}$\alpha_{\mathrm{d}} = p+1$ \\ $c_{\mathrm{d}} = L$ \\ \textbf{Time:} \color{orange}{Asymptotic} \end{tabular}                       & 
\begin{tabular}[c]{@{}l@{}}$ \gamma=1$ \\ $\beta = L\mu^p$ \\ \color{orange}{Exponential} \end{tabular}                                                                                         \\ \midrule

\begin{tabular}[c]{@{}l@{}}SignGD \& \\ Muon \end{tabular}  & 
$\dot\sigma=L\sigma^q\sign(\mu-\sigma)$ & 
\begin{tabular}[c]{@{}l@{}}$\alpha_{\mathrm{b}} = q$ \\ $c_{\mathrm{b}} = L$ \\  \textbf{Time:} \color{blue}{Finite}\end{tabular} & 
\begin{tabular}[c]{@{}l@{}}$\alpha_{\mathrm{d}} = q$ \\ $c_{\mathrm{d}} = L$ \\ \textbf{Time:} \color{blue}{Finite}\end{tabular}    & 
\begin{tabular}[c]{@{}l@{}}$ \gamma=0$ \\ $ \beta = L\mu^q$ \\ \color{blue}{Linear}  \end{tabular}                                                                                              \\ \midrule

\begin{tabular}[c]{@{}l@{}} Adam  \end{tabular}   & \begin{tabular}[c]{@{}l@{}} $\dot\sigma = \dfrac{ L (\mu-\sigma)\sigma^p }{|\mu-\sigma|\sigma^q+\epsilon}$ \\ \\ $\kappa = \mu^{q+1} \dfrac{q^q}{(q+1)^{q+1}}$ \\ 
$h(\sigma) = (\mu-\sigma)\sigma^q$ \\
$\sigma_\mathrm{A}^{\mathrm{d}} = \epsilon^{1/(q+1)}$ \end{tabular} & 
\begin{tabular}[c]{@{}l@{}} If $h(\sigma)/\epsilon \to 0$ 
\\ $\alpha_{\mathrm{b}} = p$\\ $c_{\mathrm{b}} = L\mu/\epsilon$ \\ \textbf{Time:} \color{orange}{GD-like} \\ If $\epsilon < \kappa$ \& $\epsilon/h(\sigma) \to 0$ \\$\alpha_{\mathrm{b}} = q$\\ $c_{\mathrm{b}} = L$ \\ \textbf{Time:} \color{blue}{SignGD-like} \end{tabular} & 
\begin{tabular}[c]{@{}l@{}} If $\sigma(t) \!\ll\! \sigma_\mathrm{A}^{\mathrm{d}}$, \\ $\alpha_{\mathrm{d}} = p+1$\\ $c_{\mathrm{d}} = L/\epsilon$ \\ \textbf{Time:} \color{orange}{GD-like} \\ If $\sigma(t) \gg \sigma_\mathrm{A}^{\mathrm{d}}$ \\$\alpha_{\mathrm{d}} = q$\\ $c_{\mathrm{d}} = L$ \\ \textbf{Time:} \color{blue}{SignGD-like} \end{tabular} & 
\begin{tabular}[c]{@{}l@{}} If $r\mu^q \ll \epsilon$, \\ $\gamma = 1$\\ $\beta = L\mu^p/\epsilon$ \\ \color{orange}{GD-like} \\ Else \\$\gamma = 0$\\ $\beta = L\mu^q$ \\ \color{blue}{SignGD-like} \end{tabular} \\
\midrule

\end{tabular}
}

\resizebox{\linewidth}{!}{%
\begin{tabular}{@{}lllll@{}}


\multirow{4}{*}[-2em]{Shampoo} & 
\multirow{4}{*}[-0.1em]{
\begin{tabular}[c]{@{}l@{}} $\dot\sigma=\dfrac{L\sigma^p(\mu-\sigma)}{d}$ \\[2ex]  $d(t)\!=\!\sqrt{\lambda_{\mathrm{Sh}} + h(t)}$ \\[1ex] $h(t) \!=\! \int_0^t g(u)^2 du$ \\[1ex] $ h(0) \!=\! 0, d(0) \!=\! \sqrt{\lambda_{\mathrm{Sh}}}$ \\[2ex] $\nu(\sigma)\!=\! \frac{(\sigma-s)(2\mu-\sigma-s)}{4L}$ \\ 
$d_0 \!=\! d(0)$ \\ $d(\sigma)=d_0+\nu(\sigma)$
\end{tabular}  
} & 
\multicolumn{1}{l}{\emph{\underline{Initial phase:}} $\nu(\sigma)\!\ll\! d_0$ } & \emph{\underline{Final decay:}} &   \\
& & 
\begin{tabular}[c]{@{}l@{}} $\alpha_{\mathrm{b}} = p$ \\ $c_{\mathrm{b}} = \dfrac{L\mu}{d_0}$\\ \textbf{Time:} \color{orange}{GD-like} \end{tabular} & 
\begin{tabular}[c]{@{}l@{}} $\alpha_{\mathrm{d}} = p+1$ \\ $c_{\mathrm{d}} = \dfrac{4L^2}{4L d_0 + s^2}$ \\ \textbf{Time:} \color{orange}{GD-like} \end{tabular}      & 
\begin{tabular}[c]{@{}l@{}}  $\gamma = 1$ \\ $\beta= \dfrac{L\mu^{p}}{d_0+\frac{(\mu-s)^2}{4L}}$\\ \color{orange}{GD-like} \end{tabular} \\ 
& & \multicolumn{3}{l}{\emph{\underline{Accumulator phase:}} $s\!\ll \!\sigma\! \ll\! \mu$ \& $2L d_0\! \ll\! \mu \sigma$}  \\
& & 
\begin{tabular}[c]{@{}l@{}} $\alpha_{\mathrm{b}} = p-1 \, (<q)$ \\ $c_{\mathrm{b}} = 2L^2$\\ \textbf{Time:} Finite across \\ this interval
\end{tabular} & 
& 
\begin{tabular}[c]{@{}l@{}}  If $s \!\ll\! \mu$ \& $d_0 \! \ll \! \frac{(\mu-s)^2}{4L}$ \\ $\beta \approx 4L^2\mu^{p-2}$\\ {\color{orange}{GD-like}}, $\beta \propto \mu^{p-2}$: \\ order can reverse \end{tabular} \\
\midrule


\multirow{4}{*}[-2.5em]{K-FAC} & 
\multirow{4}{*}[-2.5em]{$\dot\sigma = \dfrac{L\sigma^p (\mu-\sigma)}{\sigma^p + \lambda_{\mathrm{KF}}}$ } & 
\multicolumn{3}{l}{\emph{\underline{Undamped:}} $\quad$ When $\lambda_{\mathrm{KF}} = 0$ or when $\lambda_{\mathrm{KF}} > 0$ \& if $\sigma^p \gg \lambda_{\mathrm{KF}}$} \\
& & 
\begin{tabular}[c]{@{}l@{}} $\alpha_{\mathrm{b}} = 0$ \\ $c_{\mathrm{b}} = L\mu$\\ \textbf{Time:} Finite \end{tabular} & 
\begin{tabular}[c]{@{}l@{}} $\alpha_{\mathrm{d}} = 1$ \\ $c_{\mathrm{d}} = L$ \\ \textbf{Time:} Asymptotic \end{tabular}      & 
\begin{tabular}[c]{@{}l@{}}  $\gamma = 1$ \\ $\beta= L$\\ \color{orange}{GD-like} \end{tabular} \\ 
& & \multicolumn{3}{l}{\emph{\underline{Damped:}} $\quad$ When $\lambda_{\mathrm{KF}} > 0$ \& if $\sigma^p \ll \lambda_{\mathrm{KF}}$}  \\
& & 
\begin{tabular}[c]{@{}l@{}} $\alpha_{\mathrm{b}} = p$ \\ $c_{\mathrm{b}} = L\mu/\lambda_{\mathrm{KF}}$\\ \textbf{Time:} \color{orange}{GD-like} \end{tabular} & 
\begin{tabular}[c]{@{}l@{}} $\alpha_{\mathrm{d}} = p+1$ \\ $c_{\mathrm{d}} = L/\lambda_{\mathrm{KF}}$ \\ \textbf{Time:} \color{orange}{GD-like} \end{tabular}      & 
\begin{tabular}[c]{@{}l@{}}  $\gamma = 1$ \\[-1ex] $\beta= \dfrac{L \mu^p}{\mu^p + \lambda_{\mathrm{KF}}}$\\ \color{orange}{GD-like} \end{tabular} \\ 
\bottomrule


\end{tabular}
}
\vspace{-0.3cm}
\label{thm:theory_phase_laws}    
\end{theorem}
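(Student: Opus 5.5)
The plan is to reduce every optimizer to a scalar ODE through \Cref{prop:aligned_dynamics}, and then read off the exponents by local expansion near $\sigma=0$ and near $\sigma=\mu$. By \Cref{prop:aligned_dynamics}, $\dot\sigma=-L\sigma^q z$ and $g=r\sigma^q$ with $r=\sigma-\mu$. So the work is to compute the mode update $z$ induced by each $Z_\ell$ in \Cref{tab:optimizer_conventions} when $H_\ell=Q_\ell\operatorname{diag}(g_i)Q_{\ell-1}^\top$. Once $\dot\sigma=F(\sigma)$ is known, birth and decay follow by expanding $F$ near $0$ to leading order $\pm c\sigma^\alpha$ and applying \Cref{eq:bd_time}. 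Saturation follows by linearizing or taking the leading power of $F$ in $r$ near $\mu$.

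\textbf{Memoryless optimizers.} For GD, $z=g$, which gives $\dot\sigma=L\sigma^p(\mu-\sigma)$. Near $0$ with $\mu>0$ this gives $\alpha_{\mathrm b}=p$ and $c_{\mathrm b}=L\mu$. With $\mu=0$ it gives $-L\sigma^{p+1}$. Near $\mu$, $\dot r=-L\mu^p r+O(r^2)$.

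For SignGD with $Q_\ell=I$, $\sign(H_\ell)$ is diagonal with entries $\sign(g)=\sign(r)$ because $\sigma>0$. For Muon, the polar factor of $Q_\ell\operatorname{diag}(g)Q_{\ell-1}^\top$ is $Q_\ell\operatorname{diag}(\sign g)Q_{\ell-1}^\top$, with the convention $0$ on null modes. Both cases give $\dot\sigma=L\sigma^q\sign(\mu-\sigma)$, hence $\alpha=q<1$ and finite passage. Near $\mu$ we get $\dot r=-L\sigma^q\sign r\approx -L\mu^q$, so $\gamma=0$.

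For instantaneous Adam, take $m=g$ and $v=g^2$ (the $\tau\to0$ limit). Then $z=g/(|g|+\epsilon)$ and $\dot\sigma=L\sigma^q h/(|h|+\epsilon)$. The two regimes are the expansions $|h|\ll\epsilon$ (GD scaled by $1/\epsilon$) and $|h|\gg\epsilon$ (SignGD). The quantity $\kappa$ is the maximum of $h$ on $[0,\mu]$, attained at $\sigma=q\mu/(q+1)$, so the SignGD regime exists only if $\epsilon<\kappa$. For decay, $|h|=\sigma^{q+1}$, which gives the crossover $\sigma_{\mathrm A}^{\mathrm d}$.

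For K-FAC, in the aligned setting $A_\ell A_\ell^\top$ and $B_\ell^\top B_\ell$ are diagonal in the $Q$ bases with entries $a^{2(\ell-1)}$ and $a^{2(L-\ell)}$. The Kronecker factor therefore acts on mode $i$ by $a^{2(L-1)}=\sigma^p$, so $z=g/(\sigma^p+\lambda_{\mathrm{KF}})$ and $\dot\sigma=L\sigma^p(\mu-\sigma)/(\sigma^p+\lambda_{\mathrm{KF}})$. The two regimes are then immediate: $\lambda_{\mathrm{KF}}=0$ or $\sigma^p\gg\lambda_{\mathrm{KF}}$ gives $L(\mu-\sigma)$, while $\sigma^p\ll\lambda_{\mathrm{KF}}$ gives GD$/\lambda_{\mathrm{KF}}$. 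Near $\mu$ the linearization rate is $L\mu^p/(\mu^p+\lambda_{\mathrm{KF}})$.

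\textbf{Shampoo, the main obstacle.} In the aligned setting, $C_\ell$ and $D_\ell$ are diagonal with equal entries $\lambda_{\mathrm{Sh}}+h(t)$, where $h=\int_0^t g^2$. Hence $z=g/\sqrt{\lambda_{\mathrm{Sh}}+h}=g/d$. The key step is the accumulator identity. Because $\dot\sigma=-L\sigma^q g/d$, we have $g^2=-g\,d\,\dot\sigma/(L\sigma^q)=-d\,r\dot\sigma/L$. Differentiating $d^2=\lambda_{\mathrm{Sh}}+h$ gives $2d\dot d=g^2=-d\,r\dot\sigma/L$, so $\dot d=(\mu-\sigma)\dot\sigma/(2L)$. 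Integrating in $\sigma$ yields $d(\sigma)=d_0+\nu(\sigma)$ with $\nu=\frac{(\sigma-s)(2\mu-\sigma-s)}{4L}$. This holds along monotone trajectories, which I would justify by noting that $\sign\dot\sigma=\sign(\mu-\sigma)$ is preserved.

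The remaining cases follow from this identity. In the initial phase $\nu\ll d_0$, the dynamics are GD divided by $d_0$. In the accumulator phase, $\nu\approx\mu\sigma/(2L)$ dominates, which gives $\dot\sigma\approx 2L^2\sigma^{p-1}$. For decay with $\mu=0$ and $\sigma\to0$, we get $d\to d_0+s^2/(4L)$, which gives $c_{\mathrm d}$. At saturation, $d\to d_0+(\mu-s)^2/(4L)$ gives $\beta$; when $d_0$ is negligible and $s\ll\mu$, this reduces to $\beta\approx 4L^2\mu^{p-2}$.

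The delicate part is making these asymptotic regimes precise. The "$\sim$" statements should be phrased as leading-order equivalence within the stated parameter windows. Finite-time extinction for SignGD and Muon should be handled by stopping at $\sigma=0$, as the continuation convention allows.
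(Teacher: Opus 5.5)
Your proposal is correct and follows essentially the same route as the paper. Both reduce every optimizer via the aligned-mode theorem to $\dot\sigma=-L\sigma^q z$, compute the scalar mode update $z$ for each $Z_\ell$, and read off birth, decay and saturation exponents by local expansion. For Shampoo, both eliminate the accumulator through the first integral $d(\sigma)=d_0+\nu(\sigma)$. Your form $\dot d=(\mu-\sigma)\dot\sigma/(2L)$ is an exact time derivative of $(\mu\sigma-\sigma^2/2)/(2L)$, so that identity holds without the monotonicity you invoke.
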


\paragraph{Interpretation of \Cref{thm:theory_phase_laws} and insights.}
Let's start with \textbf{Euclidean GD}. 
The birth time $T_{\mathrm{GD}}^{\mathrm{birth}}(s,b)$ for $0<s<b<\mu$, and decay time $T_{\mathrm{GD}}^{\mathrm{decay}}(s,\delta)$ for $\mu=0$ are
\begin{equation*}
\vspace{-0.05cm}
  \begin{minipage}[t]{0.55\textwidth}
    \centering
    $T_{\mathrm{GD}}^{\mathrm{birth}}(s,b)=
    \begin{cases}
        \log \frac{\mu-s}{\mu-b},&L=1,\\[1ex]
        \frac{1}{2\mu}\log\frac{b(\mu-s)}{s(\mu-b)},&L=2,\\[1ex]
        \frac{s^{1-p} - b^{1-p}}{L\mu(p-1)}+O(1),&L>2.
    \end{cases}
    $
  \end{minipage}
  \hfill
  \begin{minipage}[t]{0.4\textwidth}
    \centering
    $T_{\mathrm{GD}}^{\mathrm{decay}}(s,\delta) =
\begin{cases}
\log \frac{s}{\delta}, & L=1,\\[1ex]
\frac{\delta^{-p}-s^{-p}}{Lp}, & L>1.
\end{cases}$
  \end{minipage}
\end{equation*}
Consequently, as $s \!\to\! 0$, birth time $T_{\mathrm{GD}}^{\mathrm{birth}}(s,b)$ diverges for $L\!\ge\! 2$, and as $\delta\! \to\! 0$, decay time $T_{\mathrm{GD}}^{\mathrm{decay}}(s,\delta)$ diverges for all $L$.
$T_{\mathrm{GD}}^{\mathrm{birth}}(s,b)$ clearly shows that $L\!=\!1$ has no initialization-dependent birth barrier, whereas $L\!=\!2$ introduces a logarithmic delay, while $L\!>\!2$ 
a polynomial delay.
From the decay time, decay is logarithmic for $L\!=\!1$, and algebraic for $L\!>\!1$, with the polynomial tail becoming slower as depth increases.
No unsupported mode reaches zero in finite time.
The exponential saturation implies that modes with smaller target singular values converge slowly.

\begin{tcolorbox}[colback=yellow!5!white,grow to left by=0\linewidth, width=1\linewidth, boxsep=0.5mm, arc=3mm, left=4mm, right=4mm, top=2mm, bottom=2mm]
\vspace{-0.05cm}
\emph{GD delays the growth of small positive modes while it removes zero target modes only asymptotically. 
Depth promotes low effective rank primarily by suppressing the birth of new modes, not by rapidly pruning active ones.
}
\vspace{-0.05cm}
\end{tcolorbox}

For \textbf{SignGD}, the birth time $T_{\mathrm{SignGD}}^{\mathrm{birth}}(s,b)$ for $\mu>0$,
and decay time $T_{\mathrm{SignGD}}^{\mathrm{decay}}(s,\delta)$ for $\mu=0$ are
\vspace{-0.05cm}
$$
T_{\mathrm{SignGD}}^{\mathrm{birth}}(s,b)=b^{1/L}-s^{1/L} 
\qquad  \qquad
T_{\mathrm{SignGD}}^{\mathrm{decay}}(s,\delta)=s^{1/L}-\delta^{1/L}.$$
Since the time to reach any target $\mu$ is $|s^{1/L}-\mu^{1/L}|$, birth, saturation and decay occur in \emph{finite} time. 
For $\mu=0$, the mode becomes exactly zero at time $s^{1/L}$.
The birth time remains bounded as $s \to 0$.

\textbf{Adam} updates are considered under a quasi-steady approximation to isolate the effect of coordinate normalization, with $m=g$ and $v=g^2$  giving the instantaneous state approximation. If $|\sigma-\mu|\sigma^q \gg \epsilon$ then $\dot\sigma = -L\sigma^q\operatorname{sign}(\sigma-\mu)$, thus behaving like SignGD. 
If $|\sigma-\mu|\sigma^q \ll \epsilon$ then $\dot\sigma = -L\sigma^p(\sigma-\mu)/\epsilon$, thus behaving like GD upto $1/\epsilon$ factor. 
This approximation explains a possible transition between normalized and GD-type motion, but it is not used as an exact theorem for Adam.

\begin{tcolorbox}[colback=yellow!5!white,grow to left by=0\linewidth, width=1\linewidth, boxsep=0.5mm, arc=3mm, left=4mm, right=4mm, top=2mm, bottom=2mm]
\vspace{-0.05cm}
\emph{SignGD exhibits birth, saturation and decay of all modes in finite time for every depth. 
Instantaneous Adam inherits this behavior only when the gradient magnitude dominates $\epsilon$. For sufficiently small birth modes, unsupported modes and final saturation, it behaves like GD.
}
\vspace{-0.05cm}
\end{tcolorbox}

\textbf{Muon} in the aligned model gives
$
\dot\sigma\!=\!-\!L\sigma^q\sign(\sigma\!-\!\mu),
$
reducing it to a scalar sign rule as in SignGD updates.
The disappearance of slow sequential learning under this idealization has been studied in \citet{dragutinovic2026use,rhee2026muon}. However, our phase laws highlight that the same normalization that accelerates weak mode emergence also removes zero target modes in finite time, so faster learning of small components can coexist with faster removal of unsupported ones.

For \textbf{cumulative Shampoo}, based on the common scalar gradient accumulator $h(t)$, and $d(t)$ defined in \Cref{thm:theory_phase_laws}, along a nonstationary aligned mode starting at $\sigma(0)=s>0$ with $d(0) > 0$, we have 
$$
d(\sigma)=d(0)+\frac{\mu(\sigma-s)-(\sigma^2-s^2)/2}{2L} = d_0 + \nu(\sigma).
$$ 
Shampoo's dynamics result in two distinct regimes: 
\underline{\textbf{Initial phase:}} When the accumulator has barely changed, $d_0\! \gg\! \nu(\sigma)$, its dynamics satisfy $\dot\sigma \!\approx\! L\sigma^p(\mu-\sigma)/{d(0)}$,  recovering GD dynamics with $1/d_0$ factor.
\underline{\textbf{Accumulator phase:}}
An intermediate regime arises on an interval $\sigma_a\leq\sigma\leq b$ satisfying $s\ll\sigma_a$, $b\ll\mu$, and $2Ld_0\ll\mu\sigma_a$.
Throughout this interval, accumulated history dominates and 
$
\dot{\sigma}\approx 2L^2\sigma^{p-1},
T_{\mathrm{Sh}}^{\mathrm{birth}}(\sigma_a,b)
\approx
(b^{2/L}-\sigma_a^{2/L})/(4L).
$
In this regime, the growth law is independent of $\mu$, although total birth times retain target dependent ordering.
Ideal Muon instead has target independent birth times to a common absolute threshold from the outset. 
Saturation near a positive target follows a separate local law, with
$
\gamma=1,
\beta=\frac{L\mu^p}{d_0+(\mu-s)^2/(4L)}.
$
If $s\ll\mu$ and $4Ld_0\ll(\mu-s)^2$, then $\beta\approx4L^2\mu^{-2/L}$, so larger targets can have smaller local saturation rates.
For zero target modes, the decay dynamics in this phase is $\dot\sigma\!=\!- \frac{L}{d(\sigma)}\sigma^{p\!+\!1}$, which is dominated by the $p+1$ exponent as shown in \Cref{app:phase_laws},
similar to GD.
%
Thus unsupported modes approach zero only asymptotically.
At fixed initial amplitude and decay tolerance, a larger inherited denominator $d_0$ increases the decay passage time.
Ideal Muon instead removes unsupported aligned modes in finite time.

\begin{tcolorbox}[colback=yellow!5!white,grow to left by=0\linewidth, width=1\linewidth, boxsep=0.5mm, arc=3mm, left=4mm, right=4mm, top=2mm, bottom=2mm]
\emph{Cumulative Shampoo has a two phase emergence behavior: GD like target sensitive birth first, followed by history adapted target independent catch-up phase. It has increasing historical inertia, making larger modes harder to remove, whereas 
Muon has finite time decay. 
}
\end{tcolorbox}

\textbf{K-FAC} updates each layer with the per-layer Fisher curvature block using a damping parameter $\lambda_{\mathrm{KF}}$. From the dynamics, we get two cases: \underline{\textbf{Undamped case ($\lambda_{\mathrm{KF}}=0$):}} the dynamics near zero initialization reduces to $\dot\sigma = L (\mu-\sigma)$, thus $T_{\mathrm{KFAC}}^{\mathrm{birth}}(s,b)=\frac1L\log\frac{\mu-s}{\mu-b}$, 
resulting in finite barrier escape time with target-dependent mode ordering. 
Whereas Muon removes both the factorization barrier and target-amplitude ordering during aligned birth, distinguishing it from K-FAC. 
\underline{\textbf{Damped case ($\lambda_{\mathrm{KF}}>0$):}} When the damped factor dominates, $\lambda_{\mathrm{KF}} \gg \sigma^p$, $\dot\sigma = \frac{L\sigma^p (\mu-\sigma)}{\lambda_{\mathrm{KF}}}$, thus recovering GD-like behavior with $1/\lambda_{\mathrm{KF}}$ factor. Therefore, $\lambda_{\mathrm{KF}}$ defines a spectral mobility threshold.
For a zero target, 
undamped K-FAC removes unwanted modes exponentially, but not in finite time, whereas damped K-FAC does it significantly slower, at only a polynomial rate. 

\begin{tcolorbox}[colback=yellow!5!white,grow to left by=0\linewidth, width=1\linewidth, boxsep=0.5mm, arc=3mm, left=4mm, right=4mm, top=2mm, bottom=2mm]
\emph{Undamped K-FAC updates cancel the depth-dependent mobility barrier, while positive additive damping restores the GD birth asymptotic and a polynomial zero target tail. 
}
\end{tcolorbox}

\vspace{-0.2cm}
\section{Experiments}
\label{sec:exp}
\vspace{-0.1cm}
We numerically verify the phase laws in \Cref{thm:theory_phase_laws} and examine how optimizer-state parameters determine the active spectral regime in deep matrix factorization. 
Unless varied explicitly, we use the same balanced, aligned depth $L=4$ factorization, effective initialization, and target modes across optimizers. Note that $p=1.5$ and $q=0.75$.
We estimate the local birth and decay exponents from the trajectory slopes
$
\alpha_{\mathrm b}(\sigma)
\!=\!
\frac{d\log \dot\sigma}{d\log\sigma},
\,
\alpha_{\mathrm d}(\sigma)
\!=\!
\frac{d\log(-\dot\sigma)}{d\log\sigma}.
$
For saturation, we fit the local residual law
$
\dot r\!=\!-\beta r^\gamma,
\,
r\!=\!|\sigma\!-\!\mu|,
$
where $\beta$ is an exponential rate when $\gamma\!=\!1$ and a constant residual speed when $\gamma\!=\!0$.
Implementation details, hyperparameters and additional experiments are in \Cref{app:exp_settings,app:exp}.

\vspace{-0.2cm}
\paragraph{Verification of the phase laws in \Cref{thm:theory_phase_laws}.}
\Cref{fig:phase_laws} compares quantities measured from the trajectories with the theoretical predictions.
For birth, GD retains exponent $p$, ideal Muon follows $q$, instantaneous Adam interpolates between the Euclidean and normalized regimes, cumulative Shampoo moves from $p$ toward its history-adapted exponent $p-1$, and undamped K-FAC removes the singular-magnitude dependence with $\alpha_\mathrm{b}=0$.
The decay curves distinguish finite extinction under ideal Muon, exponential decay under undamped K-FAC, and the polynomial tails of GD and cumulative Shampoo.
The saturation experiment further reveals optimizer-dependent mode ordering: GD and Adam converge faster for larger targets; undamped K-FAC is approximately target independent while Muon shows dependence; history-dominated Shampoo reverses the GD ordering.
Thus the optimizers change relative spectral progress rather than merely rescaling a common trajectory.

\begin{figure}[t]
    \centering
    \includegraphics[width=\linewidth]{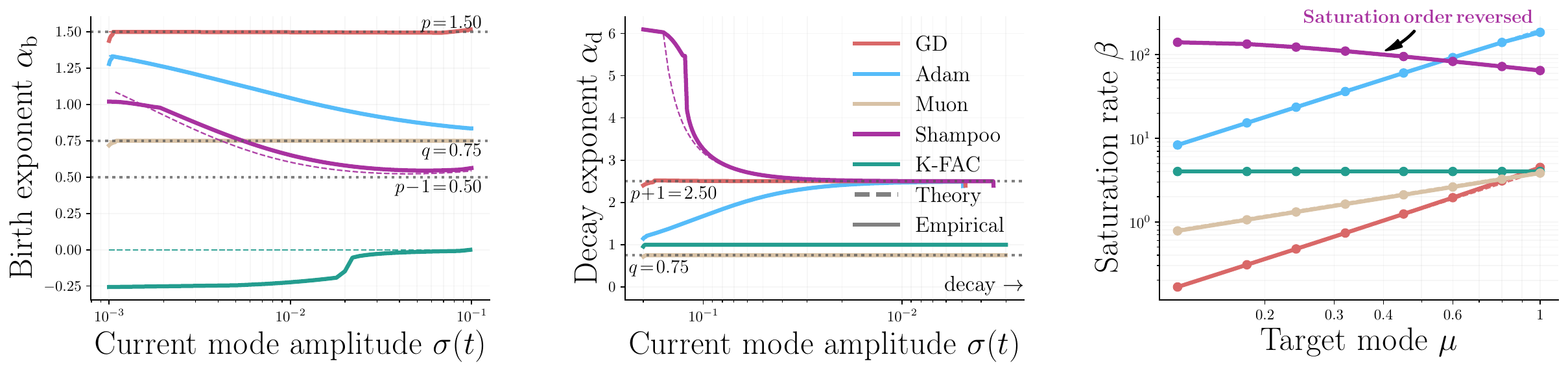}
    \vspace{-0.2cm}
    \caption{
\textbf{Verification of the phase laws in \Cref{thm:theory_phase_laws} at depth $L=4$.}
    \textbf{(a)}
The birth exponent $\alpha_\mathrm{b}$ separates Euclidean, normalized, history adapted, and curvature corrected dynamics. 
\textbf{(b)}
The decay exponent $\alpha_\mathrm{d}$ distinguishes finite extinction of Muon ($\alpha_\mathrm{d}\!=\!q\!<\!1$) from the other optimizers. 
\textbf{(c)}
The saturation coefficient $\beta$ reveals target dependent mode ordering.
}
    \label{fig:phase_laws}
    \vspace{-0.2cm}
    
\end{figure}

\begin{figure}[t]
    \centering
    \begin{subfigure}[t]{0.32\textwidth}
        \centering
        \includegraphics[width=\linewidth]{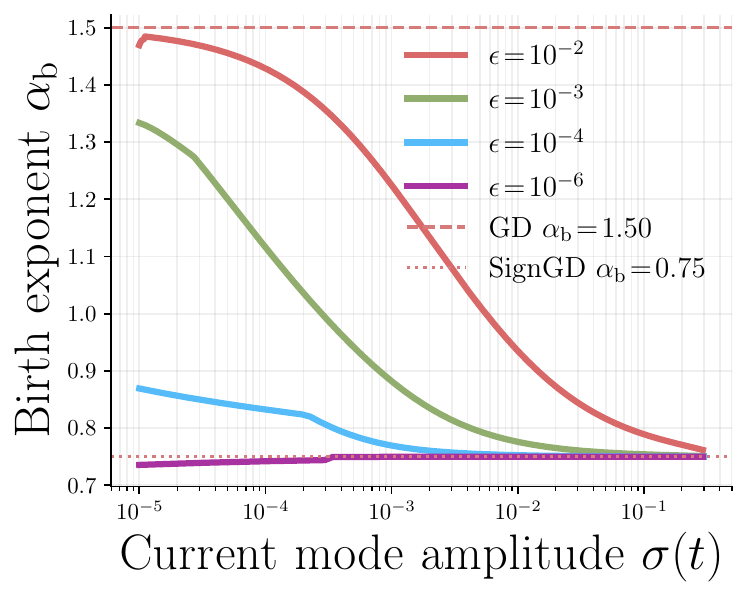}
\vspace{-0.2cm}
    \vspace{-0.2cm}
        
        \caption{Adam $\epsilon$ controls the regime}
        \label{fig:adam}
    \end{subfigure}%
    \hfill
    \begin{subfigure}[t]{0.32\textwidth}
        \centering
        \includegraphics[width=\linewidth]{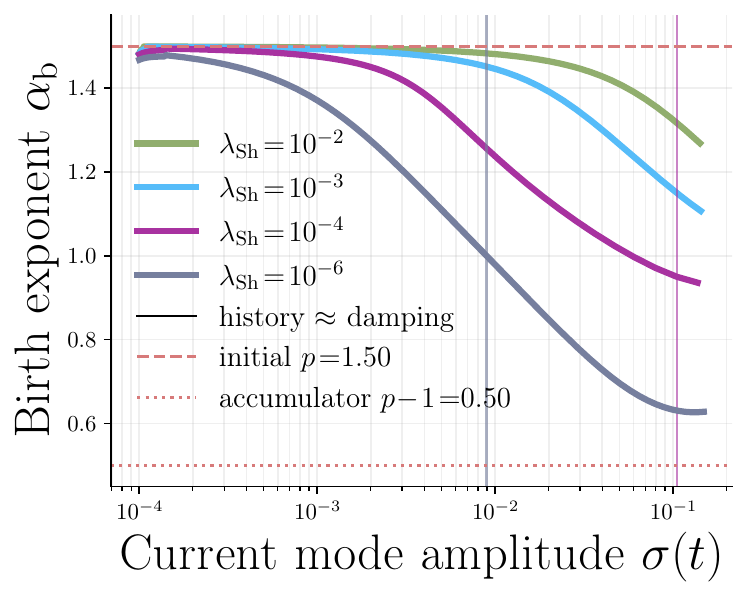}
\vspace{-0.2cm}
        \caption{Two phases of Shampoo}
        \label{fig:shampoo}
    \end{subfigure}
    \hfill
    \begin{subfigure}[t]{0.32\textwidth}
        \centering
        \includegraphics[width=\linewidth]{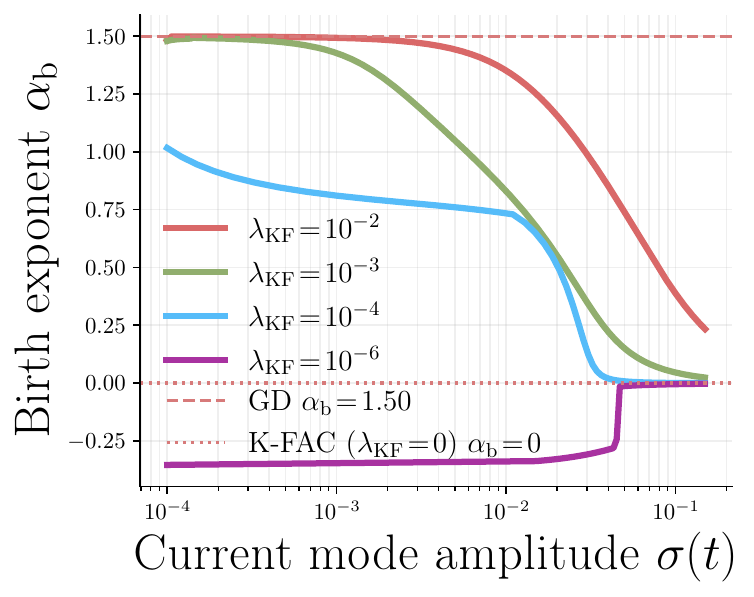}
\vspace{-0.2cm}
        \caption{K-FAC $\lambda_{\mathrm{KF}}\!>\!0$ restores barrier }
        \label{fig:kfac}
    \end{subfigure}
\vspace{-0.2cm}
    \caption{
\textbf{Optimizer state parameters control the local birth regime.}
    Adam's stability parameter $\epsilon$ shifts the GD-to-SignGD crossover; Shampoo damping delays the transition from $p$ to $p-1$; and K-FAC damping restores the GD exponent below its curvature threshold.
}
    \label{fig:phase_trans}
    \vspace{-0.1cm}
\end{figure}

\paragraph{Optimizer state controls spectral crossovers.}
\Cref{fig:phase_trans} varies the state parameters determined in \Cref{thm:theory_phase_laws} that separate the local regimes.
For instantaneous Adam, decreasing $\epsilon$ moves the transition from the GD exponent $p$ to the SignGD exponent $q$ toward smaller modes. 
Large $\epsilon$ retains the Euclidean exponent over a wider range of mode amplitudes, whereas small $\epsilon$ yields the normalized exponent $q$ almost immediately.
For cumulative Shampoo, decreasing $\lambda_{\mathrm{Sh}}$ causes accumulated history to dominate earlier, advancing the transition from $p$ to $p-1$.
For K-FAC, positive damping restores the GD exponent when $\sigma^p\ll\lambda_{\mathrm{KF}},$
whereas curvature correction removes the barrier when $\sigma^p\gg\lambda_{\mathrm{KF}}.$ 
As $\lambda_{\rm KF}\rightarrow0$, this transition moves toward smaller amplitudes, recovering the undamped K-FAC regime.
These experiments confirm that stabilization, accumulated state, and damping determine which phase law governs a mode at a given spectral scale.

\section{Conclusion}
\label{sec:conclusion}

Our analysis shows that the spectral bias of depth depends on optimizer geometry and the stage of training.
A surprising consequence is that rapid learning of weak components can coexist with rapid removal of unsupported ones, as illustrated by the Muon analysis.
Conversely, accumulated history can accelerate the emergence of a mode while prolonging its persistence.
These effects make a single ordering of optimizers by stronger or weaker low-rank bias inadequate.

A further implication is that parameters associated with numerical stability also shape learning bias.
In the analyzed models, Adam's stabilizer, Shampoo's accumulator initialization, and K-FAC's damping determine where normalization, history, or curvature correction governs spectral progress.
They can therefore change how strongly depth suppresses small modes, even within the same optimizer.
Although the phase laws rely on balance and alignment assumptions, they identify concrete mechanisms to understand optimizer behavior. 
This perspective motivates evaluating optimizers through which components they accelerate, suppress, or preserve alongside the training loss, since comparable fit can conceal differences in their dynamics. 
The resulting framework provides a basis for understanding and testing how faster optimizers change the complexity of intermediate models.



\subsection*{AI use statement}

Generative AI tools were used to assist with language polishing, related work discovery, and selected coding and debugging tasks.
They were not used to draft complete sections or to generate mathematical proofs.
All proofs were developed by the authors, although AI tools were occasionally used to check arguments and suggest clearer formulations. 
Example use cases include code for plotting, logging, and refactoring the functions which were all verified to do what was intended. 
All references, code, and final text were independently verified and thoroughly reviewed by the authors, who take full responsibility for the content.




\subsection*{Ethics statement}

This work is theoretical and empirical research on optimization dynamics in deep matrix factorization.
It does not involve human participants, personal or sensitive data, or deployment in high-stakes decision-making settings.
The experiments primarily use controlled synthetic matrix factorization and regression problems designed to derive deeper insights about the optimizer behavior. Thus, we believe the potential benefits of this research outweigh its risks. Additionally, we
do not see any immediate risk stemming from our work.



\subsection*{Reproducibility statement}

We state all assumptions, optimizer conventions, continuous-time update rules, and initialization conditions used in the theoretical analysis.
Complete derivations and proofs are provided in the appendix, including the effective-matrix dynamics, aligned-mode reduction, and birth, decay, and saturation laws.
For the experiments, we specify the data, depth, balanced initialization, optimizer hyperparameters, damping and stabilization parameters, stopping criteria, and procedures used to estimate the local phase exponents in \Cref{app:exp_settings,app:exp}.
We provide the source code and experiment configurations needed to reproduce the reported figures and tables in the supplementary material as a zip file.




\subsubsection*{Acknowledgments}
This work was carried out in partnership with the support of the Commonwealth Bank of Australia.

\bibliography{ref}
\bibliographystyle{iclr2027_conference}
\clearpage

\appendix
\tableofcontents
\newpage
\section{Extended Related Work}
\label{app:related_work}

\paragraph{Deep linear dynamics and balancedness.}
\citet{saxe2013exact} derive exact learning dynamics for deep linear networks under structured initial conditions, including a decoupled mode representation and equal strengths across layers.
Their analysis supplies the direct precedent for our aligned phase model.
\citet{arora2018optimization} show that linear overparameterization changes the effective optimization geometry and can act as a preconditioner.
\citet{du2018algorithmic} study balancing properties of gradient flow in homogeneous models, while \citet{arora2019implicit} use balanced initialization to derive effective matrix and singular value dynamics for deep matrix factorization.
These GD results motivate our setup, but conservation of the GD balance invariant generally does not hold for other optimizers.

\paragraph{Low-rank bias and the limits of norm descriptions.}
\citet{gunasekar2017implicit} investigate implicit regularization in two-layer matrix factorization and conjecture a nuclear-norm characterization in a restricted commuting setting.
\citet{arora2019implicit} extend the analysis to depth and derive singular-value-dependent mobility under balanced gradient flow, showing that depth strengthens the tendency toward low-rank solutions.
\citet{razin2020implicit} construct matrix-completion problems in which norms diverge as the loss vanishes, demonstrating the limits of a universal norm-minimization account.
\citet{li2021towards} instead develop a greedy low-rank learning interpretation under explicit assumptions.
A complementary trajectory perspective is provided by Neural Tangent Kernel and lazy-training analyses, which characterize sufficiently wide networks through a linearization around initialization and an approximately fixed kernel governing function-space evolution \citep{jacot2018neural,lee2019wide,sabanayagam2022analysis}.
In contrast, our analysis retains the nonlinear multiplicative parameterization and its evolving, optimizer-dependent spectral mobility, allowing us to compare the birth, saturation, and decay of singular modes before convergence.

\paragraph{Optimization geometry and implicit bias.}
\citet{wilson2017marginal} demonstrate that adaptive optimization can select solutions with substantially different generalization from GD even when training fit is good. 
\citet{gunasekar2018characterizing} characterize implicit bias for several optimization geometries in underdetermined linear regression and separable classification.
These results establish the broader principle that optimization geometry affects solution selection.
Deep factorization adds a parameter dependent product differential, so the geometry acting on the effective matrix changes with both the factors and optimizer state captured by our general trajectory identity in \Cref{prop:effective_dyn}.


\paragraph{Implicit bias of Adam and related adaptive methods.}
\citet{cattaneo2023implicit} use backward error analysis to characterize corrections to Adam and RMSProp dynamics, relating implicit regularization to hyperparameters and training stage.
The classification literature builds on the maximum margin characterization of GD for separable data \citep{soudry2018implicit}.
\citet{wang2021implicit} study limiting parameter directions in homogeneous classification under their adaptive flow assumptions.
\citet{zhang2024implicit} establish a maximum $\ell_\infty$ margin result for full batch Adam in separable linear classification.
\citet{xie2024implicit} connect AdamW, which includes decoupled weight decay, to $\ell_\infty$-norm constrained optimization under full-batch assumptions.
\citet{vasudeva2026rich} analyze a difference in feature learning between Adam and GD in a Gaussian classification model with two layer ReLU networks.
These results concern distinct losses, optimizer conventions, or notions of bias.
Our squared loss analysis studies finite singular amplitudes and phase times without weight decay, so classification margin results do not directly determine its birth or saturation laws.




\paragraph{Analysis of spectral descent, Shampoo, Muon and K-FAC.}
\citet{fan2026implicit} characterize spectral descent and Muon through margin maximization in multiclass linear classification.
\citet{chen2025muon} connect Muon with decoupled weight decay to spectral-norm-constrained optimization.
%
\citet{eschenhagen2026clarifying} interpret Shampoo as spectral descent adapted to stochasticity and the parameter trajectory.
History dependence is therefore an established feature of this geometry.
Our contribution is the exact first integral of the specified cumulative accumulator in the aligned factorization model and its consequences for phase times.
That identity does not extend unchanged to exponential averages or different inverse powers.
\citet{kerekes2021depth} explicitly study natural gradient implicit bias in deep linear classification and matrix factorization.
Under their Gaussian model and Jacobian rank condition, natural gradient flow for matrix completion matches direct optimization of the effective matrix, leaving unobserved entries unchanged.
This is a direct precedent for connecting geometry to motion outside the observed coordinates.
Our block curvature analysis quantifies the effect of additive damping on mode birth and decay, and distinguishes it from both full curvature and factored damping.
These works provide important geometry and momentum precedents, while our phase laws concern squared loss, explicit stabilization or damping factors, and finite targets in factorized models.

\paragraph{Analysis of Muon in matrix factorization.}
\citet{kang2026uniform} analyze spectral gradient flow in a simplified LoRA factorization setting, proving approximately equal spectral growth rates and convergence results under stated conditions.
\citet{dragutinovic2026use} derive dynamics for simplified Muon in deep linear networks and linear attention and investigate how avoiding slow sequential learning changes simplicity bias.
\citet{rhee2026muon} analyze bilinear factorization with full observation, including uniform factor growth after alignment, a conserved quantity distinct from GD's, and finite step oscillations near solutions.
They also study alignment in simplified settings.
These results already establish much of the ideal polar growth baseline.
Our treatment places it alongside memory dependent adaptive flows and damped curvature flows, with separate statements for birth, saturation, and zero target decay.
\citet{parviz2026reassessing} report controlled optimizer comparisons showing that Muon's relative performance in factorization depends on the problem and hyperparameter tuning.
This motivates separating exact polar flow predictions from practical implementation effects and comparing recovery at a common training fit.

\section{Optimizers}
\label{app:optimizer}

As we detailed in \Cref{sec:prelims}, we study deterministic flows driven by the full gradient, without weight decay or explicit regularization. For each layer, the continuous time dynamics of the matrix $W_\ell$ is 
$$
\dot W_\ell(t)=-Z_\ell(t),
\qquad
W_{\ell,k+1}=W_{\ell,k}-\eta Z_{\ell,k},
$$
where $t$ is continuous time, a dot denotes its derivative, $k$ is a discrete iteration index, and $\eta>0$ is the step size. The second expression is the corresponding factor update once the optimizer states have been specified.
The flow uses a unit learning rate, with any constant common learning rate absorbed into time and the stated memory times. All moment matrices have the same dimensions as $W_\ell$.
The equations for $Z_\ell$ for all the optimizers 
analyzed in this paper are discussed in detail below. 

\paragraph{Euclidean GD.} From our setting, recollect that $H_\ell$ is the layer gradient $H_\ell = \nabla_{W_\ell} \mathcal{L}(W)$.
Gradient descent uses the layer gradient directly, giving
$$
Z_\ell=H_\ell,
\qquad
\dot W_\ell=-B_\ell^\top G A_\ell^\top.
$$
This gradient flow is the baseline for the established analysis of deep matrix factorization \citep{arora2019implicit,li2021towards}.

\paragraph{SignGD.}
SignGD applies $\sign$ function entrywise to the layer gradient \citep{bernstein2018signsgd}. Therefore, $$\dot W_\ell = - Z_\ell = - \sign(H_\ell).$$

\paragraph{Adam.}
Adam maintains entrywise exponential averages of the first moment $m_\ell$ and second moment $v_\ell$ \citep{kingma2014adam}.
In discrete time, the decay coefficients $\beta_1,\beta_2\in[0,1)$ specify the fraction of the previous moment retained:
$$
m_{\ell,k+1}=\beta_1m_{\ell,k}+(1-\beta_1)H_{\ell,k},
\qquad
v_{\ell,k+1}=\beta_2v_{\ell,k}+(1-\beta_2)H_{\ell,k}^{\odot2}.
$$
Both states start at zero. The continuous dynamics of Adam with its normalization and moment states are established in \citet{barakat2021convergence, gould2024continuous}, which are described below.
$$
\tau_m\dot m_\ell=H_\ell-m_\ell,
\qquad
\tau_v\dot v_\ell=H_\ell^{\odot2}-v_\ell,
\qquad
\dot W_\ell= -Z_\ell = - m_\ell\oslash(\sqrt{ v_\ell}+\epsilon).
$$
The constant $\epsilon>0$ in the denominator is the stabilizer. 
At $t=0$, for the uncorrected moment flow, take $m_\ell(0)=0$ and $v_\ell(0)=0$.
The displayed differential equations describe these uncorrected moments.
The phase laws below instead concern the separately specified instantaneous approximation
$m_\ell=H_\ell$ and $v_\ell=H_\ell^{\odot2}$.
The finite memory limit scales the discrete decay coefficients with the step size, and is distinct from a limit holding those coefficients fixed.

\paragraph{Muon.}
Muon orthogonalizes a momentum matrix before applying the update \citep{jordan2024muon}.
We use exact polar normalization: if $X=U_X\Sigma_XV_X^\top$ is a compact singular value decomposition retaining only positive singular values, define $\polar(X)=U_XV_X^\top$ and $\polar(0)=0$.
We consider an ideal setting without momentum, which reduces to the following update. 
$$
\dot W_\ell= -Z_\ell = -\polar(H_\ell),
$$
The variant with momentum replaces $\polar(H_\ell)$ with $\polar(m_\ell)$.
Approximate polar computation, Nesterov momentum, and shape dependent update scaling are additional choices in practical Muon implementations.

\paragraph{Shampoo.}
For matrix parameters, the original Shampoo geometry applies inverse fourth roots of left and right gradient statistics \citep{gupta2018shampoo}.
Our cumulative flow is
$$
\dot C_\ell=H_\ell H_\ell^\top,
\qquad
\dot D_\ell=H_\ell^\top H_\ell,
\qquad
\dot W_\ell=-Z_\ell=-C_\ell^{-1/4}H_\ell D_\ell^{-1/4},
$$
with $C_\ell(0)=\lambda_\mathrm{Sh} I_{d_\ell}$, $D_\ell(0)=\lambda_\mathrm{Sh} I_{d_{\ell-1}}$, and $\lambda_\mathrm{Sh}>0$.
Here $C_\ell\in\R^{d_\ell\times d_\ell}$ and $D_\ell\in\R^{d_{\ell-1}\times d_{\ell-1}}$ accumulate gradient history, and $\lambda_\mathrm{Sh}$ specifies their initial regularization. From the state equations, we get 
$$C_\ell(t)=\lambda_{\mathrm{Sh}}I_{d_\ell}+
\int_0^t H_\ell(u)H_\ell(u)^\top\,du, \quad \text{and} \quad
D_\ell(t)=\lambda_{\mathrm{Sh}}I_{d_{\ell-1}}+
\int_0^t H_\ell(u)^\top H_\ell(u)\,du.
$$

\paragraph{K-FAC.}
K-FAC approximates layer curvature by a Kronecker product of activation and backward signal statistics \citep{martens2015optimizing}.
First, to get the Kronecker product form of $H_\ell$, let's remind the vectorization identity
$
\operatorname{vec}(AXB) =
(B^\top\otimes A)\operatorname{vec}(X),
$
using which we obtain 
$$\operatorname{vec}(H_\ell) = 
(A_\ell \otimes B_\ell^\top) \operatorname{vec}(G) = \mathcal{J}_\ell^\top \operatorname{vec}(G).$$
Note that $\mathcal{J}_\ell$ is the Jacobian of the end to end matrix with respect to layer $\ell$.
For squared loss, the Hessian with respect to the end to end prediction is the identity.
Therefore, the generalized Gauss Newton block associated with layer $\ell$ is
\begin{align}
    \mathcal F_\ell^{\mathrm{GN}} &=
\mathcal J_\ell^\top\mathcal J_\ell \nonumber \\
&= (A_\ell \otimes B_\ell^\top)(A_\ell^\top \otimes B_\ell)\nonumber \\
&= (A_\ell A_\ell^\top) \otimes (B_\ell^\top B_\ell). \label{eq:gn_kfac}
\end{align}
The block curvature is therefore already in Kronecker factored form in the full observation deep linear model.

Note that the same block is obtained from the expected Fisher under an isotropic Gaussian observation model. Briefly, let's consider $y$ given $x$ and parameters $\Theta$ is isotropic Gaussian, that is, 
$
y\mid x,\Theta
\sim
\mathcal N
\left(
Wx,
\tau^2 I
\right).
$
Let
$
h_{\ell-1}(x) = A_\ell x
$
denote the input activation to the layer $\ell$.
The Jacobian of the output with respect to $W_\ell$ is
$
J_\ell(x) =
h_{\ell-1}(x)^\top\otimes B_\ell.
$
The expected Fisher block is
\begin{align*}
\mathcal F_\ell^{\mathrm F}
&=
\frac1{\tau^2}
\mathbb E_x
\left[
J_\ell(x)^\top J_\ell(x)
\right] \nonumber
\\
&=
\frac1{\tau^2}
\mathbb E_x
\left[
h_{\ell-1}(x)h_{\ell-1}(x)^\top
\right]
\otimes
(B_\ell^\top B_\ell) \nonumber
\\
&= \frac1{\tau^2} (A_\ell \mathbb E[xx^\top] A_\ell^\top) 
\otimes
(B_\ell^\top B_\ell) \quad 
\text{; whitened full observations } \mathbb E[xx^\top]=I \nonumber\\
&=
\frac1{\tau^2} (A_\ell A_\ell^\top) 
\otimes
(B_\ell^\top B_\ell). 
\end{align*}
After absorbing the constant $\tau^{-2}$ into the learning rate or damping,
$
\mathcal F_\ell = 
\mathcal F_\ell^{\mathrm F}
=
\mathcal F_\ell^{\mathrm{GN}}.
$
Thus the generalized Gauss Newton block and expected Fisher block coincide under the observation model used here.
We use additive damping on the complete per layer curvature block:
\[
\operatorname{vec}(Z_\ell)
=
\left(
\mathcal F_\ell
+
\lambda_{\mathrm{KF}}I
\right)^{-1}
\operatorname{vec}(H_\ell).
\]
The continuous update is
$
\dot W_\ell
=
- Z_\ell.
$
When $\lambda_{\mathrm{KF}}=0$ and the factors are invertible, the Kronecker identity gives
\[
Z_\ell
=
(B_\ell^\top B_\ell)^{-1}
H_\ell
(A_\ell A_\ell^\top)^{-1}.
\]
For positive damping, it is preferable to retain the block form
$
\left(
\mathcal F_\ell
+
\lambda_{\mathrm{KF}}I
\right)^{-1},
$
because additive damping of the full block is not generally equal to damping the two Kronecker factors separately. 
The undamped case is considered only where the inverse is defined, or with an explicitly specified pseudoinverse.
For general matrix sensing, the measurement loss does not automatically yield these regression curvature factors, so a Kronecker approximation must be specified separately.

\section{Effective Matrix $W$ Dynamics}
\label{app:eff_W_dyn}
Consider the setting described in \Cref{sec:prelims}, where the deep linear network is
$$W= W_L W_{L-1} \cdots W_1.$$
Here $W$ is the effective matrix and
each $W_\ell \in \mathbb{R}^{d_\ell \times d_{\ell-1}}$.
Loss for matrix factorization: 
$$\mathcal L(W) = \frac{1}{2} \| W - M \|_F^2$$
where $M$ is the matrix whose entries are known and to be factorized.
%
%
Gradient of the loss $\mathcal L(W)$ with respect to $W$ is 
$$G = \nabla_W \mathcal L(W) = W - M.$$

\paragraph{Layer gradients $H_\ell$.}
We compute the layer gradients $H_\ell$, which are the gradients of the loss with respect to layer $\ell$. That is $H_\ell=\nabla_{W_\ell}\mathcal{L}(W)$.
From the definition of prefix and suffix products, we have
$$A_\ell = W_{\ell -1} \cdots W_1, \qquad A_1 = I_{d_0}$$
$$B_\ell = W_L \cdots W_{\ell+1}, \qquad B_L = I_{d_L}$$
This implies,  
$$W = B_\ell W_\ell A_\ell$$
Therefore, applying the chain rule, we get 
$$H_\ell = \nabla_{W_\ell} \mathcal L(W) = B_\ell^\top G A_\ell^\top$$

\paragraph{Proof of \Cref{prop:effective_dyn}: Effective matrix dynamics.}
\begin{proof}
Since the effective matrix is the product of all $W_\ell$, we derive its continuous time dynamics using the above.
\begin{align}
    W &= W_L W_{L-1} \cdots W_1 \nonumber \\ 
    \dot{W} &= \sum_{\ell=1}^L W_L \cdots \dot{W}_\ell W_1 \nonumber \\
    &= \sum_{\ell=1}^L B_\ell \dot{W}_\ell A_\ell \label{eq:dt_w}
\end{align}
$\dot{W}_\ell$ in \Cref{eq:dt_w} changes depending on the optimizer. For each optimizer, we give the updates $Z_\ell$ in \Cref{app:optimizer} and summarize it in \Cref{tab:optimizer_conventions}. Thus, eventually $\dot W_\ell = -Z_\ell$.
\end{proof}

\section{Aligned Singular Mode Dynamics}
\label{app:aligned-mode-dynamics}

In this section, we prove \Cref{prop:aligned_dynamics} by restricting these dynamics to the balanced and aligned mode manifold introduced in the main text.
We first explain when balance is preserved, then show how balance and alignment produce a common layer amplitude, and finally derive the effective singular-mode dynamics.
Throughout this section, let
$
d\leq \min_{\ell=0,\ldots,L} d_\ell
$
denote the active mode dimension.
For each layer, let there be an orthogonal matrix
$
Q_\ell\in\mathbb R^{d_\ell\times d},
\,
Q_\ell^\top Q_\ell=I_d.
$ 
All matrix identities below are understood on these active subspaces.
Components in the orthogonal complements are initialized at zero and remain absent under the compatible aligned updates considered here.

\subsection{Balance under Euclidean Gradient Flow}

We first recall why balancedness is an exact invariant of Euclidean gradient flow.
For adjacent layers, define the imbalance matrix
\[
\mathcal D_\ell
=
W_\ell W_\ell^\top
-
W_{\ell+1}^\top W_{\ell+1},
\qquad
\ell=1,\ldots,L-1.
\]

\begin{lemma}[Balancedness invariant for Euclidean gradient flow]
\label{lem:gd-balance}
Let the loss be any differentiable function of the end-to-end product
$
W=W_L\cdots W_1.
$
Under Euclidean gradient flow
$
\dot W_\ell=-H_\ell,
$
the imbalance matrices satisfy
$$
\frac{d}{dt}\mathcal D_\ell=0.
$$
Consequently,
$
\mathcal D_\ell(t)=\mathcal D_\ell(0)
$
for every time at which the flow exists.
\end{lemma}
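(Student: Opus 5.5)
The plan is to differentiate $\mathcal D_\ell$ directly along the flow and to show that the two resulting contributions coincide. The only input needed is the layer gradient formula from \Cref{eq:grad_l}. For a general differentiable loss of the product, the same chain-rule computation gives $H_\ell=B_\ell^\top G A_\ell^\top$ with $G=\nabla_W\mathcal L(W)$. Nothing specific to the squared loss is used, so this covers the general statement.

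First we compute the derivative using the product rule and $\dot W_\ell=-H_\ell$:
\[
\frac{d}{dt}\mathcal D_\ell
=
-H_\ell W_\ell^\top - W_\ell H_\ell^\top + H_{\ell+1}^\top W_{\ell+1} + W_{\ell+1}^\top H_{\ell+1}.
\]
It then suffices to prove the single identity
\[
W_\ell H_\ell^\top = H_{\ell+1}^\top W_{\ell+1}.
\]
Its transpose, $H_\ell W_\ell^\top = W_{\ell+1}^\top H_{\ell+1}$, handles the remaining pair of terms, and the four terms then cancel in pairs.

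To verify the identity, we use the recursions $A_{\ell+1}=W_\ell A_\ell$ and $B_\ell=B_{\ell+1}W_{\ell+1}$, which follow immediately from \Cref{eq:AB_l}. On the left,
\[
W_\ell H_\ell^\top = W_\ell A_\ell G^\top B_\ell = A_{\ell+1}G^\top B_{\ell+1}W_{\ell+1}.
\]
On the right,
\[
H_{\ell+1}^\top W_{\ell+1} = A_{\ell+1}G^\top B_{\ell+1}W_{\ell+1}.
\]
The two expressions agree, so $\dot{\mathcal D}_\ell=0$. Integrating in time then gives $\mathcal D_\ell(t)=\mathcal D_\ell(0)$ for every $t$ on the existence interval, since the trajectory is $C^1$ there.

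There is no genuine obstacle. The only care needed is bookkeeping: we must keep track of transposes and confirm the boundary conventions $A_1=I$ and $B_L=I$. These matter at $\ell=1$ and $\ell=L-1$, and the recursions above still hold there.
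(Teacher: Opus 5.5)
Your proof is correct and follows the same route as the paper: both differentiate $\mathcal D_\ell$ and use the recursions $A_{\ell+1}=W_\ell A_\ell$ and $B_\ell=B_{\ell+1}W_{\ell+1}$ to match the derivative terms. The paper packages this through $R_\ell=B_{\ell+1}^\top G A_\ell^\top$, writing $H_\ell=W_{\ell+1}^\top R_\ell$ and $H_{\ell+1}=R_\ell W_\ell^\top$, which amounts to your identity $W_\ell H_\ell^\top=H_{\ell+1}^\top W_{\ell+1}$.
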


\begin{proof}

Fix $\ell\in\{1,\ldots,L-1\}$ and define
$
R_\ell
=
B_{\ell+1}^\top G A_\ell^\top
$
with $G=\nabla_W\mathcal L(W)$.
Since
\[
B_\ell
=
B_{\ell+1}W_{\ell+1}, \qquad
\text{and} \qquad
A_{\ell+1}
=
W_\ell A_\ell,
\]
the adjacent layer gradients can be written as
\[
H_\ell
=
W_{\ell+1}^\top R_\ell, \qquad
\text{and} \qquad
H_{\ell+1}
=
R_\ell W_\ell^\top.
\]
Under Euclidean gradient flow,
$
\dot W_\ell=-H_\ell,
\,
\dot W_{\ell+1}=-H_{\ell+1}.
$
Therefore,
\[
\begin{aligned}
\frac{d}{dt}
\left(
W_\ell W_\ell^\top
\right)
&=
-H_\ell W_\ell^\top
-
W_\ell H_\ell^\top
\\
&=
-
W_{\ell+1}^\top R_\ell W_\ell^\top
-
W_\ell R_\ell^\top W_{\ell+1}.
\end{aligned}
\]
Similarly,
\[
\begin{aligned}
\frac{d}{dt}
\left(
W_{\ell+1}^\top W_{\ell+1}
\right)
&=
-H_{\ell+1}^\top W_{\ell+1}
-
W_{\ell+1}^\top H_{\ell+1}
\\
&=
-
W_\ell R_\ell^\top W_{\ell+1}
-
W_{\ell+1}^\top R_\ell W_\ell^\top.
\end{aligned}
\]
The two derivatives are identical.
Hence
\[
\frac{d}{dt}
\left(
W_\ell W_\ell^\top
-
W_{\ell+1}^\top W_{\ell+1}
\right)
=0.
\]
\end{proof}

Lemma~\ref{lem:gd-balance} holds without alignment and without specializing the loss to full observation.
It is an exact property of continuous time Euclidean gradient flow. This conservation is also showed in \citep{saxe2013exact,gunasekar2017implicit,arora2019implicit}.
For finite-step gradient descent, the first-order terms still cancel, but an imbalance of order equal to the square of the step size can be introduced at each update.

For non-Euclidean optimizers, the matrix identity
$
\frac{d}{dt}\mathcal D_\ell=0
$
does not hold in general.
Balance is instead preserved in our analysis because \Cref{assump:balancedness,assump:aligned_modes,assump:compatible_updates} define an invariant aligned manifold on which all layers have equal mode amplitudes, equal optimizer states, and equal mode updates.
We establish this property below.

\subsection{Consequences of Balance and Alignment}
\label{app:balance_consequence}
Under \Cref{assump:aligned_modes}, the target and the layer factors at initialization have the forms
\[
M
=
Q_L
\operatorname{diag}(\mu_1,\ldots,\mu_d)
Q_0^\top, \qquad \mu_i \geq 0, 
\text{and}
\]
\[
W_\ell(0)
=
Q_\ell
\operatorname{diag}
\left(
a_{\ell 1}(0),\ldots,a_{\ell d}(0)
\right)
Q_{\ell-1}^\top, \qquad a_{\ell i}(0)>0.
\]

The corresponding adjacent Gram matrices are
\[
W_\ell(0)W_\ell(0)^\top
=
Q_\ell
\operatorname{diag}
\left(
a_{\ell 1}(0)^2,\ldots,a_{\ell d}(0)^2
\right)
Q_\ell^\top, \qquad \text{and}
\]
\[
W_{\ell+1}(0)^\top W_{\ell+1}(0)
=
Q_\ell
\operatorname{diag}
\left(
a_{\ell+1,1}(0)^2,\ldots,a_{\ell+1,d}(0)^2
\right)
Q_\ell^\top.
\]

\Cref{assump:balancedness} states that these matrices are equal.
Multiplying this equality by $Q_\ell^\top$ on the left and by $Q_\ell$ on the right gives
\[
a_{\ell i}(0)^2
=
a_{\ell+1,i}(0)^2
\]
for every mode $i$ and every adjacent pair of layers.
The positivity assumption removes the sign ambiguity, yielding
$
a_{\ell i}(0)
=
a_{\ell+1,i}(0).
$
Applying this relation across the network gives a common initial mode amplitude
\[
a_{\ell i}(0)=a_i(0)
\qquad
\text{for every }\ell.
\]

Thus \Cref{assump:balancedness,assump:aligned_modes} together imply more than equal layer norms.
They imply equality of the corresponding aligned mode amplitudes across all layers.

\subsection{The Aligned Manifold Before Imposing Balance}

It is useful first to compute the layer gradients for aligned factors without assuming that their amplitudes are equal.
Suppose that at some time $t$,
\[
W_\ell
=
Q_\ell
\operatorname{diag}
\left(
a_{\ell 1},\ldots,a_{\ell d}
\right)
Q_{\ell-1}^\top.
\]
Then the end-to-end matrix $W$ is
\[
W
=
Q_L
\operatorname{diag}
\left(
\sigma_1,\ldots,\sigma_d
\right)
Q_0^\top \qquad \text{where} \qquad 
\sigma_i
=
\prod_{k=1}^L a_{ki}.
\]
As a result, the effective gradient is
\[
G
=
W-M
=
Q_L
\operatorname{diag}
\left(
r_1,\ldots,r_d
\right)
Q_0^\top, \qquad
\text{where} \qquad
r_i=\sigma_i-\mu_i.
\]
For layer $\ell$, define the prefix and suffix mode products
\[
\pi_{\ell i}^{-}
=
\prod_{k=1}^{\ell-1}a_{ki}, \qquad
\text{and} \qquad
\pi_{\ell i}^{+}
=
\prod_{k=\ell+1}^{L}a_{ki}.
\]
The empty products are equal to one.
On the active subspaces,
\[
A_\ell
=
Q_{\ell-1}
\operatorname{diag}
\left(
\pi_{\ell 1}^{-},\ldots,\pi_{\ell d}^{-}
\right)
Q_0^\top, \qquad
\text{and} \qquad
B_\ell
=
Q_L
\operatorname{diag}
\left(
\pi_{\ell 1}^{+},\ldots,\pi_{\ell d}^{+}
\right)
Q_\ell^\top.
\]
Using the layer gradient formula $H_\ell=B_\ell^\top G A_\ell^\top,$
we obtain
\[
\begin{aligned}
H_\ell
&=
Q_\ell
\operatorname{diag}
\left(
\pi_{\ell 1}^{+},\ldots,\pi_{\ell d}^{+}
\right)
Q_L^\top
Q_L
\operatorname{diag}
\left(
r_1,\ldots,r_d
\right)
Q_0^\top
Q_0
\operatorname{diag}
\left(
\pi_{\ell 1}^{-},\ldots,\pi_{\ell d}^{-}
\right)
Q_{\ell-1}^\top
\\
&=
Q_\ell
\operatorname{diag}
\left(
g_{\ell 1},\ldots,g_{\ell d}
\right)
Q_{\ell-1}^\top,
\end{aligned}
\]
where
\[
g_{\ell i}
=
r_i
\pi_{\ell i}^{-}
\pi_{\ell i}^{+}
=
r_i
\prod_{k\neq\ell}a_{ki}.
\]

Thus alignment of the factors with the target implies alignment of every layer gradient with the corresponding pair of layer bases.
This statement does not yet require balance.

\subsection{Preservation of Balance for Compatible Optimizer Flows}

We now explain the role of \Cref{assump:compatible_updates}.
Suppose that, at some time, the amplitudes of mode $i$ agree across layers: $
a_{\ell i}=a_i
\,
\text{for every }\ell.
$
Then
\[
\sigma_i=a_i^L, \qquad
\text{and} \qquad
g_{\ell i}
=
r_i a_i^{L-1},
\]
which is independent of $\ell$.
Hence all layers receive the same scalar gradient for mode $i$.
\Cref{assump:compatible_updates} requires that the optimizer use identical hyperparameters and identical initial mode states in every layer.
It also requires that equal mode gradients and equal mode states produce equal mode updates and equal state derivatives.
Consequently, whenever $a_{\ell i}=a_i $
and the corresponding optimizer states agree across layers, the update has the form
\[
Z_\ell
=
Q_\ell
\operatorname{diag}
\left(
z_1,\ldots,z_d
\right)
Q_{\ell-1}^\top,
\]
with the same coefficient $z_i$ in every layer.
Since
$ \dot W_\ell=-Z_\ell, $ we have
$
\dot a_{\ell i}=-z_i
$
for every layer.
Therefore
\[
\frac{d}{dt}
\left(
a_{\ell i}-a_{\ell+1,i}
\right)
=0.
\]
Equal amplitudes remain equal, and the aligned factorization remains balanced:
$
W_\ell W_\ell^\top
=
W_{\ell+1}^\top W_{\ell+1}.
$
The same argument applies to the optimizer states.
Equal states receive equal state derivatives because the gradients and hyperparameters are equal.
Thus equality of both amplitudes and states is preserved by the joint optimizer flow.
Equivalently, the set
\[
\mathcal M_{\mathrm{bal,align}}
=
\left\{
W_\ell
=
Q_\ell
\operatorname{diag}(a_1,\ldots,a_d)
Q_{\ell-1}^\top
\text{ for every }\ell
\right\},
\]
together with equal mode states across layers, is an invariant manifold of every optimizer satisfying \Cref{assump:compatible_updates}.
This conclusion holds on intervals where the optimizer vector field is well defined and the relevant amplitudes remain positive.
For nonsmooth updates at zero, the continuation is specified separately in the corresponding optimizer result.

\paragraph{Compatibility of the optimizers considered in this work.}

The compatibility condition is exact for the idealized optimizer flows studied in the main text.

For Euclidean GD, $z_i=g_i,$ so equal layer gradients immediately give equal updates.

For coordinatewise SignGD, $z_i=\operatorname{sign}(g_i)$.
Because elementwise sign is not invariant under general orthogonal changes of basis, the aligned bases must coincide with the actual parameter coordinates.
We therefore take $Q_\ell=I$ for SignGD, up to signed coordinate permutations.
The same restriction applies to coordinatewise Adam and to the instantaneous Adam approximation.
With $Q_\ell=I,$ diagonal gradients preserve diagonal first- and second-moment states.
Identical moment initialization and common hyperparameters then give identical states and updates across layers.

For ideal Muon without momentum, the layer gradient in the aligned model $H_\ell= Q_\ell \operatorname{diag}(g_1,\ldots,g_d)
Q_{\ell-1}^\top$ gives $\operatorname{polar}(H_\ell) = Q_\ell \bigl(\operatorname{sign}(g_1),\ldots,\operatorname{sign}(g_d)\bigr) Q_{\ell-1}^\top$ where $\operatorname{sign}(0)=0$.
Therefore, it preserves equality of the scalar states $g_i$ across layers.

For cumulative Shampoo, let the left and right accumulators be initialized isotropically.
An aligned gradient $H_\ell= Q_\ell \operatorname{diag}(g_1,\ldots,g_d)
Q_{\ell-1}^\top$
gives
\[
H_\ell H_\ell^\top
=
Q_\ell
\operatorname{diag}(g_1^2,\ldots,g_d^2)
Q_\ell^\top, \qquad
\text{and} \qquad
H_\ell^\top H_\ell
=
Q_{\ell-1}
\operatorname{diag}(g_1^2,\ldots,g_d^2)
Q_{\ell-1}^\top.
\]
The accumulator eigenvalues therefore remain aligned and equal across layers.
The preconditioned update consequently has the compatible form
$
Z_\ell
=
Q_\ell
\operatorname{diag}(z_1,\ldots,z_d)
Q_{\ell-1}^\top.
$

For the block-curvature model of K-FAC, the curvature eigenvalue of mode $i$ in every layer is $\sigma_i^{2-2/L}$ under balance.
Common additive damping therefore produces the same preconditioned scalar update in every layer.

These statements concern the idealized flows specified in the main text.
They do not imply that arbitrary practical implementations preserve balance away from the aligned manifold.
In particular, coordinatewise methods are generally basis dependent, and practical K-FAC damping applied separately to the two Kronecker factors need not preserve the same layer symmetry.

\subsection{Why the Alignment Assumption is Appropriate}

\Cref{assump:aligned_modes} does not claim that arbitrary training trajectories become aligned with the target.
It restricts the analysis to a canonical invariant subspace in which exact mode dynamics can be derived.
This restriction is standard in analyses of deep linear networks and deep matrix factorization
\citep{saxe2013exact,gunasekar2017implicit,arora2019implicit,li2021towards}.
It has three useful properties.
First, full-observation regression fixes a target basis through the singular directions of $M$.
If the factors and optimizer states are initialized in compatible chain bases, the layer gradients remain in those bases.
Second, fixing the same initial spectral configuration for every optimizer ensures that differences in mode evolution arise from the optimizer update and its state, rather than from different initial singular directions.
Third, the aligned model retains the nonlinear multiplicative dependence of the effective modes on the factors.
It therefore preserves the depth-dependent mobility responsible for the phase laws while removing singular-vector rotation from the comparison.

For spectral and curvature-based optimizers, the aligned subspace is preserved by their matrix-equivariant updates.
For coordinatewise methods, the same construction is exact when the aligned basis is the parameter coordinate basis.
The general matrix dynamics outside this invariant subspace are given in the previous appendix section.

If the target contains repeated singular values, the singular basis is not unique within the corresponding degenerate subspace.
Any fixed orthonormal basis in that subspace may be chosen as the reference basis.
The theorem tracks modes in these fixed reference bases and does not require their amplitudes to remain sorted by magnitude.

\subsection{Proof of \Cref{prop:aligned_dynamics}}

\begin{proof}

By \Cref{assump:balancedness,assump:aligned_modes}, the layer amplitudes agree at initialization: $a_{\ell i}(0)=a_i(0).$
By \Cref{assump:compatible_updates} and the invariant-manifold argument above, this equality persists throughout training: $a_{\ell i}(t)=a_i(t)$ for every layer $\ell$ and every time before a mode reaches zero.
Hence
\[
W_\ell(t)
=
Q_\ell
\operatorname{diag}
\left(
a_1(t),\ldots,a_d(t)
\right)
Q_{\ell-1}^\top.
\]
Multiplying the factors gives
\[
\begin{aligned}
W
&=
W_L\cdots W_1
\\
&=
Q_L
\operatorname{diag}(a_1,\ldots,a_d)
Q_{L-1}^\top
Q_{L-1}
\operatorname{diag}(a_1,\ldots,a_d)
Q_{L-2}^\top
\cdots
Q_1
\operatorname{diag}(a_1,\ldots,a_d)
Q_0^\top
\\
&=
Q_L
\operatorname{diag}
\left(
a_1^L,\ldots,a_d^L
\right)
Q_0^\top.
\end{aligned}
\]
We can rewrite it as 
$$
W
=
Q_L
\operatorname{diag}
\left(
\sigma_1,\ldots,\sigma_d
\right)
Q_0^\top, \qquad \text{where} \qquad
\sigma_i=a_i^L.
$$
For the full-observation loss,
\[
G
=
W-M
=
Q_L
\operatorname{diag}
\left(
r_1,\ldots,r_d
\right)
Q_0^\top, \qquad
\text{where} \qquad
r_i=\sigma_i-\mu_i.
\]
Under the common layer amplitudes,
\[
A_\ell
=
Q_{\ell-1}
\operatorname{diag}
\left(
a_1^{\ell-1},\ldots,a_d^{\ell-1}
\right)
Q_0^\top, \qquad
\text{and} \qquad
B_\ell
=
Q_L
\operatorname{diag}
\left(
a_1^{L-\ell},\ldots,a_d^{L-\ell}
\right)
Q_\ell^\top.
\]
Using $H_\ell = B_\ell^\top G A_\ell^\top,$ we obtain
\[
\begin{aligned}
H_\ell
&=
Q_\ell
\operatorname{diag}
\left(
a_1^{L-\ell},\ldots,a_d^{L-\ell}
\right)
\operatorname{diag}
\left(
r_1,\ldots,r_d
\right)
\operatorname{diag}
\left(
a_1^{\ell-1},\ldots,a_d^{\ell-1}
\right)
Q_{\ell-1}^\top
\\
&=
Q_\ell
\operatorname{diag}
\left(
r_1a_1^{L-1},\ldots,r_da_d^{L-1}
\right)
Q_{\ell-1}^\top.
\end{aligned}
\]
Let $q=1-\frac1L$, then $a_i^{L-1} = \left(a_i^L\right)^{(L-1)/L} = \sigma_i^q,$ and the layer gradient has the form
\begin{align}
H_\ell
=
Q_\ell
\operatorname{diag}
\left(
g_1,\ldots,g_d
\right)
Q_{\ell-1}^\top, \qquad
\text{where} \qquad
g_i
=
r_i\sigma_i^q.
\label{eq:H_l_aligned}
\end{align}
The coefficient $g_i$ is independent of the layer index $\ell$.

By \Cref{assump:compatible_updates}, the optimizer update is
$
Z_\ell
=
Q_\ell
\operatorname{diag}
\left(
z_1,\ldots,z_d
\right)
Q_{\ell-1}^\top,
$
with the same $z_i$ in every layer.
Since $\dot W_\ell=-Z_\ell,$ comparison of the aligned coefficients gives
\[
\dot a_i=-z_i.
\]
Differentiate $\sigma_i=a_i^L.$
This yields
\[
\begin{aligned}
\dot\sigma_i
&=
La_i^{L-1}\dot a_i
\\
&=
-La_i^{L-1}z_i
\\
&=
-L\sigma_i^qz_i.
\end{aligned}
\]
Therefore $\dot\sigma_i=-L\sigma_i^qz_i.$
The same result follows directly from the effective-matrix dynamics derived in \Cref{app:eff_W_dyn}:
\[
\begin{aligned}
\dot W
&=
-\sum_{\ell=1}^L
B_\ell Z_\ell A_\ell
\\
&=
-\sum_{\ell=1}^L
Q_L
\operatorname{diag}
\left(
a_1^{L-\ell},\ldots,a_d^{L-\ell}
\right)
\operatorname{diag}
\left(
z_1,\ldots,z_d
\right)
\operatorname{diag}
\left(
a_1^{\ell-1},\ldots,a_d^{\ell-1}
\right)
Q_0^\top
\\
&=
-L
Q_L
\operatorname{diag}
\left(
a_1^{L-1}z_1,\ldots,a_d^{L-1}z_d
\right)
Q_0^\top
\\
&=
Q_L
\operatorname{diag}
\left(
\dot\sigma_1,\ldots,\dot\sigma_d
\right)
Q_0^\top.
\end{aligned}
\]

Finally, define $ p=2q=2-\frac2L.$
This exponent appears whenever the optimizer update contains one additional factor of the layer gradient $g_i=r_i\sigma_i^q.$
For example, Euclidean GD has $z_i=g_i,$
and therefore
$
\dot\sigma_i
=
-L\sigma_i^qg_i
=
-L\sigma_i^p(\sigma_i-\mu_i).
$
\end{proof}

\section{Optimizer Specific Phase Laws}
\label{app:phase_laws}

In this section, we prove the phase laws for all the optimizers discussed in \Cref{sec:results}.
\Cref{prop:aligned_dynamics} reduces every compatible optimizer to a scalar evolution for each aligned mode.
Suppressing the mode index, the effective singular amplitude satisfies
\[
\dot{\sigma}
=
-L\sigma^q z(\sigma,S(\sigma)),
\qquad
q=1-\frac1L,
\]
where $z$ is the optimizer update coefficient and $S$ denotes any associated optimizer state.
The state $S(\sigma)$ may include momentum, adaptive moments, accumulated gradient covariances, or curvature estimates.
Before deriving the individual optimizer results, we establish a common passage time calculus for birth, decay, and saturation.
Each optimizer specific phase law then follows by identifying the local form of $z$, or equivalently the local power law of $\dot{\sigma}$.
This separates the universal passage time argument from the optimizer specific evolution of its internal state.

We use asymptotic notations $o(.)$ and $O(.)$ when taking limits in the asymptotic analysis. 

\subsection{Birth, Decay, and Saturation times}

As discussed in \Cref{sec:results}, for a positive target $\mu>0$, let a mode begin at $\sigma(0)=s>0.$
For a fixed threshold $s<b<\mu,$ the birth time is the first passage time
\[
T_{\mathrm{opt}}^{\mathrm{birth}}(s,b)
=
\inf
\left\{
t\geq0:
\sigma(t)=b
\right\}.
\]
Birth always refers to escape from a positive seed $s\to0^+$.
It does not refer to spontaneous activation of an exactly zero factor mode.
For an unsupported mode with $\mu=0,$ and a tolerance $0<\delta<s,$ the decay time is
\[
T_{\mathrm{opt}}^{\mathrm{decay}}(s,\delta)
=
\inf
\left\{
t\geq0:
\sigma(t)=\delta
\right\}.
\]
For a nonzero target, define the residual $r(t)=|\sigma(t)-\mu|.$
Given a tolerance $\tau>0$ and $r(t_0)=r_0$, the saturation time is
\[
T_{\mathrm{opt}}^{\mathrm{sat}}(r_0, \tau)=
\inf\left\{
u\geq0:
r(t_0+v)\leq\tau
\text{ for every }v\geq u
\right\}.
\]

The requirement that the mode remain inside the tolerance is relevant for optimizers with momentum or relay dynamics, which may cross the target and subsequently leave the tolerance region.
For a monotone first order flow, this definition reduces to the first time at which $r(t)=\tau.$


We first state the passage time formula directly in terms of the optimizer update coefficient.

\begin{proposition}[Passage times from the common mode dynamics]
\label{prop:general_passage_time}
Suppose that a positive target mode increases monotonically from $s$ to $b$ and satisfies $z(\sigma,S(\sigma))<0$ for $\sigma\in[s,b].$
Then
\[
T_{\mathrm{opt}}^{\mathrm{birth}}(s,b)
=
-\frac1L
\int_s^b
\frac{
x^{-q}
}{
z(x,S(x))
}
\,dx.
\]
Suppose that an unsupported mode decreases monotonically from $s$ to $\delta$ and satisfies $z(\sigma,S(\sigma))>0$ for $\sigma\in[\delta,s].$
Then
\[
T_{\mathrm{opt}}^{\mathrm{decay}}(s,\delta)
=
\frac1L
\int_\delta^s
\frac{
x^{-q}
}{
z(x,S(x))
}
\,dx.
\]
If the residual decreases monotonically and satisfies $\dot r=-\Psi(r),$ and $\Psi(r)>0,$ then the time required to reduce the residual from $r_0$ to $\tau$ is
\[
T_{\mathrm{opt}}^{\mathrm{sat}}(r_0,\tau)
=
\int_\tau^{r_0}
\frac{dr}{\Psi(r)}.
\]
\end{proposition}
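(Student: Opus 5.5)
The plan is a change of variables in time, applied to the scalar mode equation from \Cref{prop:aligned_dynamics}. That result gives, on any interval where the amplitude stays positive, $\dot\sigma=-L\sigma^q z(\sigma,S(\sigma))$. Here the optimizer state is written as a function of $\sigma$ along the trajectory. This is legitimate precisely because $\sigma$ is assumed strictly monotone on the relevant interval, so $t\mapsto\sigma(t)$ is invertible and any state $S(t)$ can be reparametrized as $S(\sigma)$. I would make this explicit first, because it is the only point where the state dependence enters.

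For birth, I would start from the sign hypothesis. Since $z<0$ on $[s,b]$ and $x^q>0$ for $x\geq s>0$, we get $\dot\sigma=-L\sigma^q z>0$, so $\sigma$ is a $C^1$ strictly increasing bijection from $[0,T]$ onto $[s,b]$, where $T$ is the first hitting time of $b$. The inverse function theorem then gives $dt/d\sigma=1/\dot\sigma=-x^{-q}/(Lz(x,S(x)))$. Integrating from $s$ to $b$ yields the stated formula. Because $\sigma$ is increasing, the first time with $\sigma=b$ is the unique such time on this branch, so it coincides with the infimum in the definition of $T^{\mathrm{birth}}_{\mathrm{opt}}$.

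The decay case is symmetric. Now $z>0$ on $[\delta,s]$, so $\dot\sigma<0$, and integrating $dt=d\sigma/\dot\sigma$ from $s$ down to $\delta$ and then reversing the limits produces $\frac1L\int_\delta^s x^{-q}/z\,dx$. Positivity of $\delta$ keeps us inside the interval where \Cref{prop:aligned_dynamics} applies, away from any zero crossing.

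For saturation, the same argument applies to $r$ with $\dot r=-\Psi(r)<0$, giving $\int_\tau^{r_0}dr/\Psi(r)$. One extra step is needed: because $r$ is strictly decreasing, once $r\leq\tau$ it stays below $\tau$. Hence the "remains below" definition reduces to the first passage time, as noted after the definition.

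The main obstacle is justifying the reparametrization and the integrability near the endpoints. I would handle this by noting that the integrands are continuous on the compact intervals: $z$ is nonvanishing and continuous along the trajectory, and $x$ is bounded away from zero. So the integrals are finite and equal the elapsed time, with no improper limits needed for fixed $s,b,\delta,\tau$.
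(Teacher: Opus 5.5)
Your proposal is correct and takes the same route as the paper. You reparametrize the optimizer state by $\sigma$ using monotonicity, separate variables in $\dot\sigma=-L\sigma^q z$ (respectively $\dot r=-\Psi(r)$), and integrate, reversing the limits for decay. Your additional remarks make explicit details the paper leaves implicit: the identification of the first hitting time, the reduction of the enter-and-remain saturation definition to first passage, and integrability on compact intervals bounded away from zero.
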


\begin{proof}
During birth, $\dot\sigma = -L\sigma^qz(\sigma,S) >0.$
Since the trajectory is monotone, the optimizer state can be viewed locally as a function of $\sigma$.
Separating variables gives
\[
dt
=
-\frac1L
\frac{
\sigma^{-q}
}{
z(\sigma,S(\sigma))
}
\,d\sigma.
\]
Integrating from $\sigma=s$ to $\sigma=b$ proves the birth formula.
During decay, $\dot\sigma = -L\sigma^qz(\sigma,S) <0.$
Thus
\[
dt
=
-\frac1L
\frac{
\sigma^{-q}
}{
z(\sigma,S(\sigma))
}
\,d\sigma.
\]
Integrating from $\sigma=s$ down to $\sigma=\delta$ and reversing the integral limits proves the decay formula.
The saturation formula follows similarly from
\[
dt
=
-\frac{dr}{\Psi(r)}
\]
and integration from $r_0$ to $\tau$.
\end{proof}
For a stateful optimizer, Proposition~\ref{prop:general_passage_time} is exact but need not be closed.
The notation $S(x)$ denotes the optimizer state along the particular monotone trajectory.
A closed phase law requires either solving the coupled state equations or eliminating the state through an invariant.
Cumulative Shampoo admits such an elimination through its accumulator identity.
Full Adam and momentum Muon generally retain coupled state dynamics.

\subsection{Birth and Decay Exponents}

We now classify passage times using the local power of the complete singular mode dynamics.
The exponent is defined from $\dot\sigma$, rather than from the factor update $z$ alone.

\begin{definition}[Birth and decay exponents]
\label{def:phase_exponents}

A positive target mode has birth exponent $\alpha_{\mathrm b}$ and birth coefficient $c_{\mathrm b}>0$ if
\[
\dot\sigma
=
c_{\mathrm b}\sigma^{\alpha_{\mathrm b}}
\left(
1+o(1)
\right)
\qquad
\text{as }\sigma\to0^+,
\]
where $o(1)$ denotes a quantity tending to zero in the stated limit. 
An unsupported mode has decay exponent $\alpha_{\mathrm d}$ and decay coefficient $c_{\mathrm d}>0$ if
\[
\dot\sigma
=
-c_{\mathrm d}\sigma^{\alpha_{\mathrm d}}
\left(
1+o(1)
\right)
\qquad
\text{as }\sigma\to0^+.
\]
\end{definition}
The coefficients $c_{\mathrm b}$ and $c_{\mathrm d}$ may depend on the target, depth, optimizer parameters, and limiting optimizer state.
The exponent records only the leading dependence on the current singular amplitude.
A stateful optimizer may have different exponents in different phases.
The following elementary calculation determines whether the corresponding passage time remains finite or diverges.

\begin{proposition}[Birth and decay times]
\label{prop:power_passage}

Suppose that an increasing mode satisfies $\dot\sigma = c\sigma^\alpha,$ with $c>0.$
Then the time required to move from $s$ to $b>s$ is
\[
T^{\mathrm{birth}}(s,b)
=
\begin{cases}
\dfrac1c\log\dfrac{b}{s},
& \alpha=1,
\\[3mm]
\dfrac{
b^{1-\alpha}-s^{1-\alpha}
}{
c(1-\alpha)
},
& \alpha\neq1.
\end{cases}
\]
Suppose that a decreasing mode satisfies $\dot\sigma = -c\sigma^\alpha,$ with $c>0.$
Then the time required to move from $s$ to $\delta<s$ is
\[
T^{\mathrm{decay}}(s,\delta)
=
\begin{cases}
\dfrac1c\log\dfrac{s}{\delta},
& \alpha=1,
\\[3mm]
\dfrac{
s^{1-\alpha}-\delta^{1-\alpha}
}{
c(1-\alpha)
},
& \alpha\neq1.
\end{cases}
\]
\end{proposition}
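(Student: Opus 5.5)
The proof is a direct separation of variables. This is the special case of \Cref{prop:general_passage_time} in which the full mode velocity is an exact power law, rather than one mediated by an optimizer coefficient $z$.

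\textbf{Birth.} Take an increasing mode with $\dot\sigma=c\sigma^\alpha$, $c>0$, and $\sigma(0)=s>0$. We first check that the first passage to $b$ is well defined. The right-hand side is strictly positive on $[s,b]$, so $\sigma$ is strictly increasing while it lies in this interval. Moreover $x\mapsto cx^\alpha$ is smooth on $(0,\infty)$, so the solution is unique there and cannot hit zero from above. Hence $t\mapsto\sigma(t)$ is a $C^1$ bijection onto $[s,b]$ with positive derivative, and its inverse satisfies $dt/d\sigma=1/(c\sigma^\alpha)$. Integrating gives
\[
T^{\mathrm{birth}}(s,b)=\frac1c\int_s^b x^{-\alpha}\,dx.
\]
If $\alpha=1$ the antiderivative is $\log x$, which gives $\frac1c\log(b/s)$. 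Otherwise it is $x^{1-\alpha}/(1-\alpha)$, which gives the stated formula. The integrand is continuous on the compact interval $[s,b]\subset(0,\infty)$, so the time is finite for fixed $s>0$ and every $\alpha$.

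\textbf{Decay.} Here $\dot\sigma=-c\sigma^\alpha$, so $\sigma$ decreases strictly while it lies in $[\delta,s]$, and $dt=-d\sigma/(c\sigma^\alpha)$. Integrating from $s$ down to $\delta$ and reversing the limits gives
\[
T^{\mathrm{decay}}(s,\delta)=\frac1c\int_\delta^s x^{-\alpha}\,dx,
\]
which evaluates to $\frac1c\log(s/\delta)$ when $\alpha=1$ and to $\frac{s^{1-\alpha}-\delta^{1-\alpha}}{c(1-\alpha)}$ otherwise. For $\alpha>1$ the expression is still positive, because both the numerator and $1-\alpha$ are negative.

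\textbf{Expected obstacle.} There is no real obstacle; the only step that needs care is justifying the change of variables. The point to check is that on the relevant interval the trajectory cannot stall or reverse before the threshold, since strict monotonicity and a nonvanishing velocity away from $0$ make the time reparametrization legitimate. For $\alpha<1$ the vector field is non-Lipschitz at $0$. This does not matter here, because both intervals lie strictly inside $(0,\infty)$. The limiting consequences used in the main text (finite escape as $s\to0$, or extinction as $\delta\to0$, when $\alpha<1$) then follow by letting the endpoint tend to $0$ in these closed forms.
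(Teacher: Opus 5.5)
Your proposal is correct and follows the same route as the paper: separate variables to get $T^{\mathrm{birth}}(s,b)=\frac1c\int_s^b x^{-\alpha}\,dx$ and $T^{\mathrm{decay}}(s,\delta)=\frac1c\int_\delta^s x^{-\alpha}\,dx$, then evaluate the two cases $\alpha=1$ and $\alpha\neq1$. Your extra checks (strict monotonicity, a velocity bounded away from zero on the closed interval so the threshold is actually reached, and an interval inside $(0,\infty)$) only make explicit what the paper takes for granted.
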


\begin{proof}

For birth, $dt = \frac1c\sigma^{-\alpha}d\sigma. $
Therefore
\[
T^{\mathrm{birth}}(s,b)
=
\frac1c
\int_s^b
x^{-\alpha}\,dx.
\]
The integral is logarithmic when $\alpha=1$ and gives the stated power law when $\alpha\neq1$.
For decay, $dt = -\frac1c\sigma^{-\alpha}d\sigma.$
Integrating from $s$ down to $\delta$ gives
\[
T^{\mathrm{decay}}(s,\delta)
=
\frac1c
\int_\delta^s
x^{-\alpha}\,dx,
\]
which yields the stated formula.
\end{proof}
The same classifications hold when the dynamics are only asymptotically proportional to a power.


\begin{corollary}[Birth time classification]
\label{cor:birth_classification}

Suppose $\dot\sigma = c_{\mathrm b}\sigma^{\alpha_{\mathrm b}}
\left( 1+o(1) \right)$ as $\sigma\to0^+.$
For fixed $b>0$, the birth time satisfies
\[
T^{\mathrm{birth}}(s,b)
=
\begin{cases}
T^{\mathrm{birth}}(0,b)
-
\dfrac{
s^{1-\alpha_{\mathrm b}}
}{
c_{\mathrm b}(1-\alpha_{\mathrm b})
}
\left(
1+o(1)
\right),
& \alpha_{\mathrm b}<1,
\\[3mm]
\dfrac1{c_{\mathrm b}}
\log\dfrac1s (1 + o(1)),
& \alpha_{\mathrm b}=1,
\\[3mm]
\dfrac{
s^{1-\alpha_{\mathrm b}}
}{
c_{\mathrm b}(\alpha_{\mathrm b}-1)
}
\left(
1+o(1)
\right),
& \alpha_{\mathrm b}>1.
\end{cases}
\]
If $\alpha_{\mathrm b}<1$ gives finite escape from a vanishing positive seed.

If $\alpha_{\mathrm b}=1$ gives a logarithmic birth barrier.

If $\alpha_{\mathrm b}>1$ gives a polynomial birth barrier.
\end{corollary}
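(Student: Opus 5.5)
The plan is to write the birth time as an integral and split it at a small cutoff. This reduces \Cref{cor:birth_classification} to the exact power-law calculation of \Cref{prop:power_passage}, with the $(1+o(1))$ factor controlled by a squeeze argument.

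Since $c_{\mathrm b}>0$ and $\dot\sigma>0$ near $0^+$, the flow is monotone on $(0,\eta]$ for small $\eta$. I assume it also continues monotonically up to $b$, as in the setting of \Cref{prop:general_passage_time}. Separating variables then gives
\[
T^{\mathrm{birth}}(s,b)=\int_s^b \frac{dx}{\dot\sigma(x)},
\]
where $\dot\sigma$ is viewed as a function of $\sigma$ along the trajectory. Fix $\varepsilon\in(0,1)$ and choose $\eta=\eta(\varepsilon)\in(0,b)$ such that
\[
(1-\varepsilon)c_{\mathrm b}x^{\alpha_{\mathrm b}}\le \dot\sigma(x)\le (1+\varepsilon)c_{\mathrm b}x^{\alpha_{\mathrm b}}
\]
for all $x\in(0,\eta]$. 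Split $T^{\mathrm{birth}}(s,b)=\int_s^\eta+\int_\eta^b$. The second piece is a constant $K_\eta$, independent of $s$, because $\dot\sigma$ is continuous and positive on $[\eta,b]$. On the first piece, the two-sided bound gives
\[
\frac{1}{(1+\varepsilon)c_{\mathrm b}}\int_s^\eta x^{-\alpha_{\mathrm b}}\,dx\;\le\;\int_s^\eta \frac{dx}{\dot\sigma}\;\le\;\frac{1}{(1-\varepsilon)c_{\mathrm b}}\int_s^\eta x^{-\alpha_{\mathrm b}}\,dx .
\]
The explicit integrals are given by \Cref{prop:power_passage}.

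The three cases then follow. If $\alpha_{\mathrm b}>1$, the integral is $(s^{1-\alpha_{\mathrm b}}-\eta^{1-\alpha_{\mathrm b}})/(\alpha_{\mathrm b}-1)$, which diverges as $s\to0$. The constants $K_\eta$ and $\eta^{1-\alpha_{\mathrm b}}$ are therefore $o(s^{1-\alpha_{\mathrm b}})$, and the ratio $T^{\mathrm{birth}}(s,b)\big/\bigl(s^{1-\alpha_{\mathrm b}}/(c_{\mathrm b}(\alpha_{\mathrm b}-1))\bigr)$ lies in $[(1+\varepsilon)^{-1},(1-\varepsilon)^{-1}]$ in the limit. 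Letting $\varepsilon\to0$ gives the polynomial barrier. The case $\alpha_{\mathrm b}=1$ is identical with $\log(\eta/s)$ in place of the power: its dominant part is $\log(1/s)$, and everything else is $O_\eta(1)=o(\log(1/s))$.

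If $\alpha_{\mathrm b}<1$, the integrand $1/\dot\sigma$ is integrable at $0$ by comparison with $x^{-\alpha_{\mathrm b}}$. Hence $T^{\mathrm{birth}}(0,b)=\int_0^b dx/\dot\sigma$ is finite, and
\[
T^{\mathrm{birth}}(s,b)=T^{\mathrm{birth}}(0,b)-\int_0^s\frac{dx}{\dot\sigma}.
\]
Applying the same squeeze on $(0,s]$ with $s<\eta$ shows that the subtracted term equals $s^{1-\alpha_{\mathrm b}}\big/\bigl(c_{\mathrm b}(1-\alpha_{\mathrm b})\bigr)\cdot(1+o(1))$.

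The main point needing care is justifying the separation of variables when $\dot\sigma$ depends on optimizer state rather than on $\sigma$ alone, as with Shampoo. Here I would use the convention of \Cref{prop:general_passage_time}: along a monotone trajectory the state is a function $S(\sigma)$. I would also read the hypothesis $\dot\sigma=c_{\mathrm b}\sigma^{\alpha_{\mathrm b}}(1+o(1))$ as holding uniformly along the family of trajectories indexed by the seed $s$. Without that uniformity, the $o(1)$ could drift with $s$ and break the squeeze, so I would state it explicitly as the standing interpretation of \Cref{def:phase_exponents}.
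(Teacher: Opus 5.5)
Your proposal is correct and takes the paper's route: the paper gives no separate proof, only the remark after \Cref{prop:power_passage} that the classification persists when $\dot\sigma$ is merely asymptotically a power, and your cutoff-and-squeeze argument supplies the justification that remark leaves implicit. One refinement concerns the state-dependent reading you raise at the end: there the tail piece $\int_\eta^b dx/\dot\sigma$ also depends on $s$, so you should show it is bounded uniformly in $s$ rather than call it a constant (for Shampoo this follows from $0\le\nu_s(x)\le \mu x/(2L)$), and that bound is all your $\alpha_{\mathrm b}\ge 1$ cases need.
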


Finite escape as $s\to0^+$ does not imply spontaneous activation from $\sigma(0)=0$.
If the vector field vanishes at zero and the optimizer takes zero to be absorbing, an exactly zero mode remains zero even when $\alpha_{\mathrm b}<1$.
The birth exponent concerns the limit from positive initialization.

\begin{corollary}[Decay time classification]
\label{cor:decay_classification}

Suppose $\dot\sigma=-c_{\mathrm d}\sigma^{\alpha_{\mathrm d}} 
\left(1+o(1)\right)$ as $\sigma\to0^+.$
For fixed $s>0$, the decay time satisfies
\[
T^{\mathrm{decay}}(s,\delta)
=
\begin{cases}
T^{\mathrm{decay}}(s,0)
-
\dfrac{
\delta^{1-\alpha_{\mathrm d}}
}{
c_{\mathrm d}(1-\alpha_{\mathrm d})
}
\left(
1+o(1)
\right),
& \alpha_{\mathrm d}<1,
\\[3mm]
\dfrac1{c_{\mathrm d}}
\log\dfrac1\delta (1+o(1)),
& \alpha_{\mathrm d}=1,
\\[3mm]
\dfrac{
\delta^{1-\alpha_{\mathrm d}}
}{
c_{\mathrm d}(\alpha_{\mathrm d}-1)
}
\left(
1+o(1)
\right),
& \alpha_{\mathrm d}>1.
\end{cases}
\]
If $\alpha_{\mathrm d}<1$, the mode reaches zero in finite time under the absorbing continuation.

If $\alpha_{\mathrm d}=1$, the mode decays exponentially:
$
\sigma(t)
= \exp{\lp -c_{\mathrm d} t+o(t)\rp}
$

If $\alpha_{\mathrm d}>1$, the mode decays algebraically:
$
\sigma(t)
\sim
\left(
c_{\mathrm d}
\left(
\alpha_{\mathrm d}-1
\right)t
\right)^{-1/(\alpha_{\mathrm d}-1)}.
$

In the final two cases, zero is reached only asymptotically.
\end{corollary}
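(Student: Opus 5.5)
The plan is to reduce everything to the separated-variables identity behind \Cref{prop:general_passage_time} and \Cref{prop:power_passage}, and to control the $(1+o(1))$ factor by a sandwich argument near zero. As in \Cref{prop:general_passage_time}, I assume the unsupported mode decreases monotonically on $(0,s]$, i.e.\ $\dot\sigma<0$ along the trajectory. Then I can write
\[
T^{\mathrm{decay}}(s,\delta)=\int_\delta^s\frac{dx}{|\dot\sigma|(x)},\qquad |\dot\sigma|(x)=c_{\mathrm d}x^{\alpha_{\mathrm d}}\bigl(1+\eta(x)\bigr),\quad \eta(x)\to0 \text{ as } x\to0^+.
\]

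\textbf{Step 1: reduce to a sandwich estimate.} Fix $\varepsilon\in(0,1)$ and choose $\sigma_\varepsilon\in(0,s)$ with $|\eta(x)|\le\varepsilon$ on $(0,\sigma_\varepsilon]$. Split the integral at $\sigma_\varepsilon$.
\begin{itemize}
\item The piece over $[\sigma_\varepsilon,s]$ is a finite constant $K_\varepsilon$, independent of $\delta$, because the integrand is continuous and positive there.
\item On $[\delta,\sigma_\varepsilon]$ the integrand lies between $\bigl((1\pm\varepsilon)c_{\mathrm d}x^{\alpha_{\mathrm d}}\bigr)^{-1}$. Hence this piece is pinched between $(1\pm\varepsilon)^{-1}$ times the exact power-law time of \Cref{prop:power_passage}.
\end{itemize}

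\textbf{Step 2: case analysis.}

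For $\alpha_{\mathrm d}>1$, the power-law piece equals $\bigl(\delta^{1-\alpha_{\mathrm d}}-\sigma_\varepsilon^{1-\alpha_{\mathrm d}}\bigr)/\bigl(c_{\mathrm d}(\alpha_{\mathrm d}-1)\bigr)$. This diverges as $\delta\to0$, so both $K_\varepsilon$ and $\sigma_\varepsilon^{1-\alpha_{\mathrm d}}$ are $o(\delta^{1-\alpha_{\mathrm d}})$. Letting $\varepsilon\to0$ gives the stated asymptotic.

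For $\alpha_{\mathrm d}=1$, the power-law piece is $c_{\mathrm d}^{-1}\log(\sigma_\varepsilon/\delta)$. Again the constants are $o(\log(1/\delta))$, which gives the logarithmic law.

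For $\alpha_{\mathrm d}<1$, the integrand is integrable at $0$, so $T^{\mathrm{decay}}(s,0)=\int_0^s dx/|\dot\sigma|(x)<\infty$. I then write $T^{\mathrm{decay}}(s,\delta)=T^{\mathrm{decay}}(s,0)-\int_0^\delta dx/|\dot\sigma|(x)$. For $\delta<\sigma_\varepsilon$, the subtracted term is pinched between $(1\pm\varepsilon)^{-1}\delta^{1-\alpha_{\mathrm d}}/\bigl(c_{\mathrm d}(1-\alpha_{\mathrm d})\bigr)$, which gives the expansion. Finiteness of $T^{\mathrm{decay}}(s,0)$ means zero is reached at that time, and the absorbing continuation specified for the optimizer keeps the mode there.

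\textbf{Step 3: invert the passage-time asymptotics to get $\sigma(t)$.} Since $\sigma$ is strictly decreasing, $t=T^{\mathrm{decay}}(s,\sigma(t))$, and $\sigma(t)\to0$ as $t\to\infty$ when $\alpha_{\mathrm d}\ge1$.
\begin{itemize}
\item For $\alpha_{\mathrm d}=1$: $t=c_{\mathrm d}^{-1}\log(1/\sigma(t))(1+o(1))$. Rearranging gives $\log\sigma(t)=-c_{\mathrm d}t+o(t)$, which is the stated exponential form.
\item For $\alpha_{\mathrm d}>1$: $t=\sigma(t)^{1-\alpha_{\mathrm d}}(1+o(1))/\bigl(c_{\mathrm d}(\alpha_{\mathrm d}-1)\bigr)$. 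Raising to the power $-1/(\alpha_{\mathrm d}-1)$, and noting $(1+o(1))^{-1/(\alpha_{\mathrm d}-1)}=1+o(1)$, gives the algebraic law.
\end{itemize}
In both cases $T^{\mathrm{decay}}(s,\delta)<\infty$ for every $\delta>0$ but diverges as $\delta\to0$, so zero is reached only asymptotically.

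\textbf{Main obstacle.} The step needing the most care is the $\alpha_{\mathrm d}<1$ expansion. There the $o(1)$ must multiply only the $\delta$-dependent correction, not the leading constant $T^{\mathrm{decay}}(s,0)$. This is exactly why I isolate the tail $\int_0^\delta$ rather than the head integral.

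A secondary point to state explicitly is the standing hypothesis that the trajectory is monotone on all of $(0,s]$. The asymptotic form of $\dot\sigma$ only controls the behaviour near zero, so this hypothesis is what makes the separated-variables identity valid on the whole path.
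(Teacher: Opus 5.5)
Your argument is correct and follows the paper's route: the paper obtains the exact power-law passage times by separation of variables (\Cref{prop:power_passage}) and then simply asserts that the same classification holds when $\dot\sigma$ is only asymptotically a power. Your $\varepsilon$-sandwich near zero, including isolating the tail $\int_0^\delta$ when $\alpha_{\mathrm d}<1$, together with the inversion $t=T^{\mathrm{decay}}(s,\sigma(t))$ for the trajectory laws, supplies the details the paper leaves implicit.
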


The terms finite, logarithmic, and polynomial refer to the dependence of the passage time on $s$ or $\delta$.
The terms exponential and algebraic refer to the trajectory as a function of training time.
For example, $\alpha_{\mathrm d}=1$ gives logarithmic tolerance time and exponential trajectory decay.

\subsection{Saturation Exponent}

Birth and decay describe behavior near $\sigma=0$.
Saturation instead describes the local approach to a nonzero target $\mu$.

\begin{definition}[Saturation exponent]
\label{def:saturation_exponent}

Let $r(t) =  |\sigma(t)-\mu|.$
The mode has saturation exponent $\gamma$ and coefficient $\beta>0$ if
\[
\dot r
=
-\beta r^\gamma
\left(
1+o(1)
\right)
\qquad
\text{as }r\to0^+.
\]
\end{definition}

\begin{proposition}[Saturation times]
\label{prop:saturation_phase_law}

Suppose that the residual satisfies
$
\dot r
=
-\beta r^\gamma,
\qquad
\beta>0.
$
The time required to reduce the residual from $r_0$ to $\tau\in(0,r_0)$ is
\[
T^{\mathrm{sat}}(r_0,\tau)
=
\begin{cases}
\dfrac1\beta
\log\dfrac{r_0}{\tau},
& \gamma=1,
\\[3mm]
\dfrac{
r_0^{1-\gamma}-\tau^{1-\gamma}
}{
\beta(1-\gamma)
},
& \gamma\neq1.
\end{cases}
\]

If $\gamma<1$, the target is reached in finite time.

If $\gamma=0$, the residual decreases linearly:
$
r(t)
=
\left(
r_0-\beta t
\right)_+,
$
and $\beta$ is the constant residual speed.

If $\gamma=1$, the residual decreases exponentially:
$
r(t)
=
r_0e^{-\beta t},
$
and $\beta$ is the exponential saturation rate.

If $\gamma>1$, the residual has an algebraic tail:
$
r(t)
\sim
\left(
\beta(\gamma-1)t
\right)^{-1/(\gamma-1)}.
$
\end{proposition}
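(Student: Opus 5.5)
The plan is to separate variables exactly as in Proposition~\ref{prop:general_passage_time}, with $\Psi(r)=\beta r^\gamma$, and then to integrate the resulting power law explicitly, mirroring the proof of Proposition~\ref{prop:power_passage}.

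\textbf{Passage time.} For $r>0$ the right-hand side $-\beta r^\gamma$ is strictly negative, so $r$ is strictly decreasing while positive. This means it passes through $\tau\in(0,r_0)$ exactly once, at the time given by Proposition~\ref{prop:general_passage_time}. Writing $dt=-\beta^{-1}r^{-\gamma}\,dr$ and integrating from $r_0$ down to $\tau$ gives
\[
T^{\mathrm{sat}}(r_0,\tau)=\frac1\beta\int_\tau^{r_0}x^{-\gamma}\,dx .
\]
This integral equals $\beta^{-1}\log(r_0/\tau)$ when $\gamma=1$, and $(r_0^{1-\gamma}-\tau^{1-\gamma})/(\beta(1-\gamma))$ otherwise. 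For a monotone first-order flow the residual cannot re-enter the region $r>\tau$, so this first passage time coincides with the saturation time as defined earlier.

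\textbf{Explicit trajectories.} For the trajectory statements I will solve the ODE in closed form rather than invert $T$.
\begin{itemize}
\item For $\gamma=1$, the equation is linear, which gives $r(t)=r_0e^{-\beta t}$.
\item For $\gamma\neq1$, I will differentiate $r^{1-\gamma}$. This yields $\tfrac{d}{dt}r^{1-\gamma}=(1-\gamma)r^{-\gamma}\dot r=-(1-\gamma)\beta$, hence
\[
r(t)^{1-\gamma}=r_0^{1-\gamma}-(1-\gamma)\beta t .
\]
\item For $\gamma>1$, this reads $r(t)=\bigl(r_0^{1-\gamma}+(\gamma-1)\beta t\bigr)^{-1/(\gamma-1)}$. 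It is positive for all $t$ and behaves like $\bigl((\gamma-1)\beta t\bigr)^{-1/(\gamma-1)}$ as $t\to\infty$, which is the algebraic tail.
\item For $\gamma<1$, the right-hand side vanishes at $t^\ast=r_0^{1-\gamma}/(\beta(1-\gamma))$, which is finite. Setting $\gamma=0$ gives $r(t)=r_0-\beta t$ on $[0,r_0/\beta]$.
\end{itemize}

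\textbf{Main obstacle.} The only delicate point is the behaviour at $r=0$ when $\gamma<1$. There the vector field is not Lipschitz, so solutions need not be unique after $t^\ast$. I will adopt the absorbing continuation used elsewhere in the paper: the residual stays at $0$ once it reaches it. This continuation is consistent because the field $-\beta r^\gamma$ vanishes at $r=0$ for $\gamma>0$, and for $\gamma=0$ the sign-type dynamics is continued by zero at the target, as in the SignGD/Muon convention. With this continuation, $r(t)=(r_0-\beta t)_+$ for $\gamma=0$, and more generally $r(t)=\bigl(r_0^{1-\gamma}-(1-\gamma)\beta t\bigr)_+^{1/(1-\gamma)}$. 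This establishes finite-time attainment of the target for $\gamma<1$, and completes the case list in Proposition~\ref{prop:saturation_phase_law}.
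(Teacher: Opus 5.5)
Your proposal is correct and follows the paper's proof. Like the paper, you separate variables to get $T^{\mathrm{sat}}(r_0,\tau)=\beta^{-1}\int_\tau^{r_0}x^{-\gamma}\,dx$, evaluate the integral, and then solve the same separated equation for $r(t)$. Your extra remarks only make explicit what the paper leaves implicit: monotonicity makes the first passage time equal the enter-and-remain saturation time, and for $\gamma<1$ the absorbing continuation at $r=0$ is a convention rather than a consequence of the ODE (for $\gamma<0$ the field even blows up there).
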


\begin{proof}

Separating variables gives
$
dt
=
-\frac1\beta r^{-\gamma}dr.
$
Integrating from $r_0$ down to $\tau$ yields
\[
T^{\mathrm{sat}}(r_0,\tau)
=
\frac1\beta
\int_\tau^{r_0}
r^{-\gamma}\,dr.
\]
Evaluating the integral gives the passage time formula.
Solving the same separated equation for $r(t)$ yields the stated trajectory laws.
\end{proof}

For $\gamma=0$, the coefficient $\beta$ is a speed rather than an exponential rate.
For $\gamma=1$, it is the usual local exponential rate.
For general $\gamma$, we refer to $\beta$ as the saturation coefficient.
The scalar saturation exponent is most useful for monotone first order flows.
If the optimizer state introduces overshoot or oscillation, the residual may not satisfy a closed first order equation.
In that case, saturation must be analyzed through the full state dynamics, and the enter and remain definition distinguishes true saturation from a temporary crossing of the tolerance band.

The optimizer specific proofs that follow therefore have two steps.
First, we derive or eliminate the relevant optimizer state to identify $z$ along the phase.
Second, we apply \Cref{prop:power_passage,prop:saturation_phase_law} to obtain the corresponding birth, decay, and saturation times.

\subsection{Euclidean Gradient Descent Phase Laws}

The update step of {Euclidean GD} is $Z_\ell = H_\ell$, and so in the aligned model $z_i = g_i = r_i \sigma^q$ from \Cref{eq:H_l_aligned}. Therefore, each mode evolves as $$\dot\sigma=L\sigma^p(\mu-\sigma).$$ Note that this is the established GD baseline \citep{arora2019implicit}.

\begin{proposition}[Euclidean GD phase laws]
\label{prop:gd}

The birth time $T_{\mathrm{GD}}^{\mathrm{birth}}(s,b)$ for $0<s<b<\mu$ with $b$ and $\mu$ fixed is
$$
T_{\mathrm{GD}}^{\mathrm{birth}}(s,b)
=
\begin{cases}
\log \lp \dfrac{\mu-s}{\mu-b} \rp, 
& L=1,
\\[3mm]
\dfrac{1}{2\mu}
\log\dfrac{b(\mu-s)}{s(\mu-b)},
& L=2,
\\[3mm]
\dfrac{s^{-(L-2)/L} - b^{-(L-2)/L}}{\mu(L-2)} + O(1),
& L>2.
\end{cases}
$$
For $\mu=0$, decay time $T_{\mathrm{GD}}^{\mathrm{decay}}(s,\delta)$ with $s$ fixed is
$$
T_{\mathrm{GD}}^{\mathrm{decay}}(s,\delta) =
\begin{cases}
\log \dfrac{s}{\delta}, & L=1,\\[1mm]
\dfrac{\delta^{-p}-s^{-p}}{Lp}, & L>1.
\end{cases}
$$
Near a nonzero target $\mu$,
\[
T_{\mathrm{GD}}^{\mathrm{sat}}(r_0,\delta)
=
\frac1{L\mu^p}
\log\frac{r_0}{\delta} + O(1).
\]

For nonzero target modes $\mu>0$, the birth exponent $\alpha_{\mathrm{b}}=p$ and coefficient $c_\mathrm{b} = L\mu$ with exponential local saturation $\gamma=1$ at rate $\beta=L\mu^p$ near target $\mu$. 
As $s \to 0$, birth time $T_{\mathrm{GD}}^{\mathrm{birth}}(s,b)$ diverges for $L\ge 2$.
For zero target modes $\mu=0$, the decay exponent $\alpha_{\mathrm{d}}=p+1$ and coefficient $c_\mathrm{d} = L$. 
As $\delta \to 0$, decay time $T_{\mathrm{GD}}^{\mathrm{decay}}(s,\delta)$ diverges for all $L$.
\end{proposition}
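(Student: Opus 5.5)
Everything starts from the GD mode equation $\dot\sigma=L\sigma^p(\mu-\sigma)$, obtained from \Cref{prop:aligned_dynamics} with $z=g=r\sigma^q$. I will compute the passage times directly through \Cref{prop:general_passage_time}, or equivalently by separating variables:
\[
T^{\mathrm{birth}}_{\mathrm{GD}}(s,b)=\frac1L\int_s^b\frac{dx}{x^p(\mu-x)}.
\]
This is valid because for $0<s<b<\mu$ the right-hand side of the ODE is positive on $[s,b]$, so the trajectory is monotone and the first passage to $b$ is well defined. The plan is then to split into cases by depth, since $p=2-2/L$ takes the values $0$, $1$, and values in $(1,2)$.

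For $L=1$ we have $p=0$, and the integral is $\int_s^b dx/(\mu-x)=\log\frac{\mu-s}{\mu-b}$. For $L=2$ we have $p=1$, and the partial fraction $\frac{1}{x(\mu-x)}=\frac1\mu\bigl(\frac1x+\frac1{\mu-x}\bigr)$ gives $\frac{1}{2\mu}\log\frac{b(\mu-s)}{s(\mu-b)}$.

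For $L>2$ I will write
\[
\frac{1}{x^p(\mu-x)}=\frac{1}{\mu x^p}+\frac{x^{1-p}}{\mu(\mu-x)}.
\]
The first term integrates exactly to $\frac{s^{1-p}-b^{1-p}}{\mu(p-1)}$. Since $L(p-1)=L-2$ and $1-p=-(L-2)/L$, multiplying by $1/L$ gives the stated leading term. The second term has integrand bounded by $b^{1-p}/(\mu(\mu-b))$ on $(0,b]$ when $p\le1$. When $p>1$ the factor $x^{1-p}$ has exponent in $(-1,0)$ and is integrable at $0$, so the second piece stays bounded uniformly as $s\to0$ and is $O(1)$. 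This bound is the only non-mechanical step. From the formula, $T^{\mathrm{birth}}_{\mathrm{GD}}(s,b)$ diverges as $s\to0$ for every $L\ge2$: logarithmically for $L=2$ and polynomially for $L>2$. The birth exponent $\alpha_{\mathrm b}=p$ with $c_{\mathrm b}=L\mu$ follows from $\dot\sigma=L\mu\sigma^p(1-\sigma/\mu)=L\mu\sigma^p(1+o(1))$, which is consistent with \Cref{cor:birth_classification}.

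For decay with $\mu=0$, the ODE is exactly $\dot\sigma=-L\sigma^{p+1}$, so \Cref{prop:power_passage} applies with $c=L$ and $\alpha=p+1$. When $L=1$ we get $\alpha=1$ and time $\log(s/\delta)$. When $L>1$ we get $\frac{\delta^{-p}-s^{-p}}{Lp}$. In both cases the time diverges as $\delta\to0$, so zero is never reached in finite time. This also identifies $\alpha_{\mathrm d}=p+1$ and $c_{\mathrm d}=L$.

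For saturation, set $r=|\sigma-\mu|$. The trajectory is monotone and never crosses $\mu$, since $\mu$ is an equilibrium and the vector field is locally Lipschitz there for $\mu>0$. This gives $\dot r=-L\sigma^p r$ with $\sigma=\mu\pm r$, so $\dot r=-L\mu^p r(1+O(r))$, yielding $\gamma=1$ and $\beta=L\mu^p$.

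For the passage time, I will write $\frac{1}{L\sigma^p r}=\frac{1}{L\mu^p r}+\frac{\mu^p-\sigma^p}{L\mu^p\sigma^p r}$. The second piece is bounded because $|\mu^p-\sigma^p|\le C r$ on a neighborhood where $\sigma$ stays bounded away from $0$. Integrating from $\tau$ to $r_0$ therefore gives $\frac{1}{L\mu^p}\log\frac{r_0}{\tau}+O(1)$, which is the stated formula with tolerance $\delta$ in place of $\tau$.
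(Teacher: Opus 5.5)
Your proposal is correct and takes the same route as the paper's proof: the same separated-variable integrals, the same splitting $\frac{1}{x^p(\mu-x)}=\frac{1}{\mu x^p}+\frac{x^{1-p}}{\mu(\mu-x)}$ for $L>2$, the exact power-law passage for decay, and the residual integrals $\frac1L\int_\tau^{r_0}\frac{dr}{(\mu\mp r)^p r}$ for saturation. You are slightly more careful than the paper, which only asserts the two $O(1)$ remainders and the non-crossing of $\mu$, whereas you justify them; your aside about the case $p\le 1$ in the $L>2$ step is vacuous, since $p>1$ there, but it is harmless.
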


\begin{proof} \,
\paragraph{Birth and decay times.}

From the mode dynamics $\dot\sigma=L\sigma^p(\mu-\sigma)$, the exact birth and decay times are
\[
T_{\mathrm{GD}}^{\mathrm{birth}}(s,b)
=
\frac1L
\int_s^b
\frac{
x^{-p}
}{
\mu-x
}
\,dx, \qquad
T_{\mathrm{GD}}^{\mathrm{decay}}(s,\delta) = - \frac{1}{L}
\int_s^\delta x^{-(p+1)} \, dx.
\]
For birth time, especially for $L>2$, we can decompose it to, 
\begin{align*}
T_{\mathrm{GD}}^{\mathrm{birth}}(s,b)
&=
\frac1L
\int_s^b
\frac{
x^{-p}
}{
\mu-x
}
\,dx \\
&=
\frac1L
\int_s^b
\frac{
\mu x^{-p} - x^{1-p} + x^{1-p}
}{
\mu(\mu-x)
}
\,dx \\
&=
\frac1{L\mu}
\int_s^b
x^{-p}
\,dx +
\frac1{L\mu}
\int_s^b
\frac{ x^{1-p}
}{
\mu-x
}
\,dx \\
&= \frac1{L\mu}
\int_s^b
x^{-p}
\,dx + O(1)
\end{align*}
Integrating the birth and decay times using \Cref{prop:power_passage} give the times in \Cref{prop:gd}. From the dynamics and \Cref{cor:birth_classification,cor:decay_classification}, we get the birth and decay exponents: $\alpha_{\mathrm{b}}=p$ with $c_{\mathrm{b}}=L\mu$ and $\alpha_{\mathrm{d}}=p+1$ with $c_{\mathrm{d}}=L$.

From birth time $T_{\mathrm{GD}}^{\mathrm{birth}}$  and \Cref{cor:birth_classification}, as $s \to 0^+$, $L=1$ has no initialization-dependent birth barrier and escapes in finite time as $p=2-2/L=0$.  
Thus the mode can escape from an arbitrarily small initialization in a uniformly bounded amount of time. In the corresponding birth approximation, $\dot{\sigma} \approx \mu_i$, so the mode grows linearly: $\sigma(t) \approx \epsilon + \mu_i t$.
On the other hand, $L=2$ introduces a logarithmic delay as $p=1$. The corresponding mode trajectory is exponential: $\dot{\sigma}\approx 2\mu\sigma $ and hence
$ \sigma(t)\approx \epsilon e^{2\mu_i t}$, therefore the singular mode itself grows exponentially in time.
Similarly, $L>2$ produces a polynomial delay as $p>1$.

As $\delta \to 0^+$, $T_{\mathrm{GD}}^{\mathrm{decay}}$ diverges from \Cref{cor:decay_classification}, since $\alpha_{\mathrm{d}}=p+1 \geq 1$. Equality holds for $L=1$, while $\alpha_d>1$ for $L>1$. Thus GD does not kill zero-target singular values in finite time.\\  
For \(L=1\), $\alpha_{\mathrm{d}}=1$, therefore the corresponding $\sigma(t)$ trajectory is exponential and the decay time is logarithmic in \(1/\delta\).
The state decays exponentially in time. It never reaches zero at a finite time. \\
For \(L>1\), it is a polynomial divergence in \(1/\tau\).
The exact trajectory is
$ \sigma(t) = \left( \sigma_0^{-p}+Lp\,t \right)^{-1/p}. $
Therefore, at late times,
$\sigma(t) \sim (Lp\,t)^{-1/p}.$
So decay is algebraic rather than exponential.

\paragraph{Saturation near target.}
Suppose $\sigma(t) \approx \mu >0$. Let 
$r(t) = \mu - \sigma(t)$, therefore $ \dot{r} = - \dot{\sigma}$.
Then, from the dynamics $$ \dot{r} = - L \sigma^p r.$$

For approach to a positive target from below and above, respectively,
$$
T_{\mathrm{GD},-}^{\mathrm{sat}}(r_0,\delta)
=
\frac1L\int_\delta^{r_0}\frac{dr}{(\mu-r)^p r},
\qquad \text{ and } \qquad
T_{\mathrm{GD},+}^{\mathrm{sat}}(r_0,\delta)
=
\frac1L\int_\delta^{r_0}\frac{dr}{(\mu+r)^p r}.
$$
The first formula requires $0<\delta<r_0<\mu$.
For fixed $r_0$,
$$
T_{\mathrm{GD},\pm}^{\mathrm{sat}}(r_0,\delta)
=
\frac1{L\mu^p}\log\frac{r_0}{\delta}+O(1)
\qquad\text{as }\delta\to0.
$$

Thus, the convergence is exponential with rate $L \mu^p$ from \Cref{prop:saturation_phase_law}. The larger $\mu$ gives a larger local saturation rate only for $L>1$. At $L=1$, the rate is one for every target.
The exponential saturation implies that modes with smaller target singular values also converge slowly. 

\end{proof}

\paragraph{Role of depth.}
Depth strongly suppresses the birth of new modes, strengthening the initialization barrier. For \(L>2\), the divergence of birth time is polynomial. Thus an infinitesimally initialized singular mode takes polynomially longer to become macroscopically active. 
This is the primary mechanism behind sequential low-rank learning.

Depth does not rapidly eliminate existing modes. 
Decay time diverges for all depth. Indeed:
\(L=1\) gives an exponential tail;
\(L>1\) gives the slower polynomial tail.
Thus Euclidean GD does not produce exact finite-time rank reduction. Its low-rank bias comes mainly from preventing weak modes from being born, rather than from rapidly pruning modes that are already active.

\subsection{SignGD Phase Laws}
{SignGD} updates each factor using $Z_\ell=\operatorname{sign}(H_\ell)$. 
Therefore, in the balanced aligned model, this gives $z_i = \sign(g_i)=\sign(r_i \sigma^q) = \sign(r_i)$ since $\sigma$ is positive. Therefore, for each mode
$$
\dot\sigma = -L\sigma^q\operatorname{sign}(\sigma-\mu).
$$

Thus, each factor amplitude moves toward its target at unit speed, independently of the gradient magnitude.
\begin{proposition}[SignGD phase laws] The birth time $T_{\mathrm{SignGD}}^{\mathrm{birth}}(s,b)$ for $0<s<b<\mu$, and decay time $T_{\mathrm{SignGD}}^{\mathrm{decay}}(s,\delta)$ for $\mu=0$ are
$$
T_{\mathrm{SignGD}}^{\mathrm{birth}}(s,b)=b^{1/L}-s^{1/L}, 
\qquad  \qquad
T_{\mathrm{SignGD}}^{\mathrm{decay}}(s,\delta)=s^{1/L}-\delta^{1/L}.
$$
If a mode approaches a nonzero target from above and below and its residual decreases from $r_0$ to $\delta$, then
$$
T_{\mathrm{SignGD},-}^{\mathrm{sat}}(r_0,\delta)
=
(\mu-\delta)^{1/L}-(\mu-r_0)^{1/L}
\quad \text{and} \quad
T_{\mathrm{SignGD},+}^{\mathrm{sat}}(r_0,\delta)
=
(\mu+r_0)^{1/L}-(\mu+\delta)^{1/L}.
$$
In particular,
$
T_{\mathrm{SignGD},\pm}^{\mathrm{sat}}(r_0,0)
<
\infty.
$
The same finite-time conclusion holds when the target is approached from above.
The birth and decay exponents are $\alpha_\mathrm{b}=\alpha_\mathrm{d} = q$ and coefficients $c_\mathrm{b}=c_\mathrm{d} = L$. The saturation is linear $\gamma=0$, with rate $\beta=L\mu^q$.
\label{prop:signgd}
\end{proposition}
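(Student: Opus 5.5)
The plan is to reduce everything to the factor amplitude $a=\sigma^{1/L}$, where the SignGD flow is trivial. By \Cref{prop:aligned_dynamics} with $z=\sign(g)=\sign(r\sigma^q)=\sign(\sigma-\mu)$ for $\sigma>0$, we have $\dot\sigma=-L\sigma^q\sign(\sigma-\mu)$. Since $\dot a=-z$, this is equivalent to $\dot a=\sign(\mu-a^L)$. So $a$ moves toward $\mu^{1/L}$ at unit speed for as long as $\sigma\neq\mu$ and $\sigma>0$.

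\textbf{Passage times.} For birth with $0<s<b<\mu$, the sign stays $+1$ on $[s,b]$. Hence $a(t)=s^{1/L}+t$, and the first passage to $b$ happens at $t=b^{1/L}-s^{1/L}$. The same value follows from \Cref{prop:general_passage_time}: with $z=-1$,
\[
T=\frac1L\int_s^b x^{-q}\,dx=\frac{b^{1/L}-s^{1/L}}{L(1-q)},
\]
and $L(1-q)=1$. For decay with $\mu=0$, $a(t)=s^{1/L}-t$, which gives $s^{1/L}-\delta^{1/L}$. Extinction occurs at $t=s^{1/L}$, where we use the absorbing continuation (sign$(0)=0$) specified for nonsmooth updates.

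\textbf{Saturation.} Approaching the target from below, the residual satisfies $r=\mu-\sigma$, so $\sigma$ runs from $\mu-r_0$ to $\mu-\delta$. With $a$ moving at unit speed, the elapsed time is $(\mu-\delta)^{1/L}-(\mu-r_0)^{1/L}$. From above, $\sigma$ decreases from $\mu+r_0$ to $\mu+\delta$, which gives $(\mu+r_0)^{1/L}-(\mu+\delta)^{1/L}$. Setting $\delta=0$ gives finite times. To confirm that the trajectory stays at $\mu$ after arrival, note that at $\sigma=\mu$ the vector field is zero ($\sign(0)=0$). Any perturbation is pushed back, so $\mu$ is reached in finite time and the enter-and-remain definition is satisfied. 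The monotonicity needed for \Cref{prop:general_passage_time} holds because the sign is constant on each side of $\mu$.

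\textbf{Exponents.} As $\sigma\to0^+$ with $\mu>0$, $\dot\sigma=L\sigma^q$ exactly, so $\alpha_{\mathrm b}=q$ and $c_{\mathrm b}=L$. With $\mu=0$, $\dot\sigma=-L\sigma^q$, so $\alpha_{\mathrm d}=q$ and $c_{\mathrm d}=L$. Near the target, $\dot r=-L\sigma^q=-L\mu^q(1+o(1))$ as $r\to0$, which gives $\gamma=0$ and $\beta=L\mu^q$ in the sense of \Cref{def:saturation_exponent}.

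The only delicate point is well-posedness at the discontinuity $\sigma=\mu$ (and at $0$ for $\mu=0$). I would handle it with the Filippov/absorbing convention: $\sign(0)=0$ makes both points equilibria, and the flow reaches them in finite time. Everything else is direct integration.
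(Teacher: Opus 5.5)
Your proposal is correct and takes essentially the same route as the paper. Both reduce to $\dot\sigma=-L\sigma^q\sign(\sigma-\mu)$ and integrate $\frac1L\int x^{-q}\,dx$, which is equivalent to your observation that $a=\sigma^{1/L}$ moves at unit speed (the paper makes the same remark), and then read off $\alpha_{\mathrm b}=\alpha_{\mathrm d}=q$, $c_{\mathrm b}=c_{\mathrm d}=L$, and $\dot r=-L\sigma^q=-L\mu^q(1+o(1))$. Your explicit use of the $\sign(0)=0$ absorbing convention at $\sigma=\mu$ and at $\sigma=0$ is a small but useful addition the paper leaves implicit, since it justifies the enter-and-remain saturation definition and the finite extinction claim.
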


\begin{proof} \,
\paragraph{Birth, saturation and decay times.}
From the dynamics, $\dot\sigma = -L\sigma^q\operatorname{sign}(\sigma-\mu)$, when $0<s<b<\mu$, $\sign(\sigma-\mu)=-1$.  Therefore, $\dot\sigma = L\sigma^q$. When $\mu=0$, $\sign(\sigma-\mu)=1$ since we consider infinitesimal positive initialization. Therefore, $\dot\sigma = -L\sigma^q$. Thus
$$T_{\mathrm{SignGD}}^{\mathrm{birth}}(s,b)= \frac{1}{L}\int_s^b x^{-q} \, dx \qquad
T_{\mathrm{SignGD}}^{\mathrm{decay}}(s,\delta)= -\frac{1}{L}\int_s^\delta x^{-q} \, dx.
$$
Integrating these give the times in \Cref{prop:signgd}.
From the dynamics and \Cref{cor:birth_classification,cor:decay_classification}, we get the birth and decay exponents: $\alpha_{\mathrm{b}}=q$ with $c_{\mathrm{b}}=L$ and $\alpha_{\mathrm{d}}=q$ with $c_{\mathrm{d}}=L$. Note that the coefficients are target independent.

Since the exponent $q<1$ and the time to reach any target $\mu$ is $|s^{1/L}-\mu^{1/L}|$, birth, saturation and decay occur in \emph{finite} time. 
For $\mu=0$, the mode becomes exactly zero at time $s^{1/L}$.
The birth time remains bounded as $s \to 0$. 
Therefore, SignGD weakens the birth barrier for nonzero modes, but it can be more aggressively sparsifying for modes whose target is exactly zero. It still slows small singular values through $\sigma^{1-1/L}$ but much less than GD.

The exact saturation times from below and above are $\dfrac1L \displaystyle\int_\delta^{r_0} \dfrac{dr}{(\mu \pm r)^q}$.
$$
T_{\mathrm{SignGD},-}^{\mathrm{sat}}(r_0,\delta)
=
(\mu-\delta)^{1/L}-(\mu-r_0)^{1/L}
\quad \text{and} \quad
T_{\mathrm{SignGD},+}^{\mathrm{sat}}(r_0,\delta)
=
(\mu+r_0)^{1/L}-(\mu+\delta)^{1/L}.
$$
Locally,
$
\dot r=-L\mu^q+O(r).
$
Thus $\gamma=0$ and $\beta=L\mu^q$.
For $L>1$, the residual is only asymptotically linear near the hitting time from \Cref{prop:saturation_phase_law}.

\end{proof}

\subsection{Instantaneous Adam Phase Laws}
{Adam} updates are considered under a quasi-steady approximation and a diagonal aligned model to isolate the effect of coordinate normalization, with 
$m=g$ and $v=g^2$  giving the instantaneous state approximation where
$$
\dot\sigma=-\frac{L\sigma^p(\sigma-\mu)}{|\sigma-\mu|\sigma^q+\epsilon}
$$
for each mode.
Note that, if $|\sigma-\mu|\sigma^q \gg \epsilon$ then $\dot\sigma = -L\sigma^q\operatorname{sign}(\sigma-\mu)$, thus behaving like SignGD. 
If $|\sigma-\mu|\sigma^q \ll \epsilon$ then $\dot\sigma = -L\sigma^p(\sigma-\mu)/\epsilon$, thus behaving like GD upto $1/\epsilon$ factor. 

\begin{proposition}[Instantaneous Adam phase laws]
\label{prop:iadam-phases}
Let $0<s<b<\mu$.
The birth time is
\[
T_{\mathrm{IAdam}}^{\mathrm{birth}}(s,b)
=
b^{1/L}-s^{1/L}
+
\epsilon \, T_{\mathrm{GD}}^{\mathrm{birth}}(s,b).
\]

Let $h_\mu(\sigma)=(\mu-\sigma)\sigma^q$, when $\epsilon < \mu^{q+1} \dfrac{q^q}{(q+1)^{q+1}}$ and $h_\mu(\sigma)/\epsilon \to 0$, Adam behaves GD-like, and $\epsilon/h_\mu(\sigma) \to 0$ it behaves like SignGD-like.

For an unsupported mode with $\mu=0$, the decay time from $s$ to $\delta\in(0,s)$ is
\[
T_{\mathrm{IAdam}}^{\mathrm{decay}}(s,\delta)
=
s^{1/L}-\delta^{1/L} +
\epsilon \, T_{\mathrm{GD}}^{\mathrm{decay}}(s,\delta).
\]
Let $\sigma_\mathrm{A}^{\mathrm{d}} \asymp \epsilon^{1/(q+1)}$, 
the final decay satisfies $\sigma(t) < \sigma_\mathrm{A}^{\mathrm{d}}$ and behaves like GD. 

Suppose a nonzero mode approaches its target from below and its residual decreases from $r_0$ to $\delta$, where
$
0<\delta<r_0<\mu.
$
Then
\[
T_{\mathrm{IAdam}}^{\mathrm{sat}}(r_0,\delta)
=
(\mu-\delta)^{1/L}
-
(\mu-r_0)^{1/L}
+
\frac{\epsilon}{L}
\int_{\mu-r_0}^{\mu-\delta}
\frac{x^{-p}}{\mu-x}
\,dx.
\]
An analogous expression holds when the mode approaches the target from above.
Near saturation, it behaves like GD with rate $L\mu^p/\epsilon$.
\end{proposition}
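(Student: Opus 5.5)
The plan is to apply \Cref{prop:general_passage_time} to the instantaneous Adam mode equation. The key observation is that the reciprocal of the velocity splits additively into a SignGD part and an $\epsilon$-weighted GD part. With $m=g$, $v=g^2$ and $g=r\sigma^q$ (with $Q_\ell=I$, so the coordinatewise operations act on the diagonal), the update coefficient is $z=g/(|g|+\epsilon)$. \Cref{prop:aligned_dynamics} then gives the displayed law for $\dot\sigma$. On any monotone phase with $\sigma>0$ and $\sigma\neq\mu$, I would write
\[
\frac{1}{|\dot\sigma|}=\frac{|\sigma-\mu|\sigma^q+\epsilon}{L\sigma^p|\sigma-\mu|}=\frac{1}{L\sigma^q}+\frac{\epsilon}{L\sigma^p|\sigma-\mu|}.
\]
The first term is exactly the reciprocal SignGD speed from \Cref{prop:signgd}. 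The second term is $\epsilon$ times the reciprocal GD speed from \Cref{prop:gd}.

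Integrating this identity gives the three exact time formulas.
\begin{itemize}
\item For birth, $0<s<b<\mu$. The flow is increasing and $\dot\sigma>0$ never vanishes on $[s,b]$, so monotonicity holds. Integrating over $[s,b]$ gives $\frac1L\int_s^b x^{-q}dx=b^{1/L}-s^{1/L}$ plus $\epsilon\,T_{\mathrm{GD}}^{\mathrm{birth}}(s,b)$.
\item For decay, $\mu=0$, so $|\sigma-\mu|=\sigma$. Integrating over $[\delta,s]$ yields $s^{1/L}-\delta^{1/L}$ plus $\epsilon\,T_{\mathrm{GD}}^{\mathrm{decay}}(s,\delta)$.
\item For saturation from below, change variables $x=\mu-r$ on $[\mu-r_0,\mu-\delta]$. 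This produces $(\mu-\delta)^{1/L}-(\mu-r_0)^{1/L}$ plus $\frac{\epsilon}{L}\int x^{-p}/(\mu-x)\,dx$. The case from above is identical with $x=\mu+r$.
\end{itemize}

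Next come the regime statements. For the ratio, I would write $\dot\sigma=L\sigma^q\,h_\mu(\sigma)/(h_\mu(\sigma)+\epsilon)$ for $\sigma<\mu$.
\begin{itemize}
\item If $h_\mu/\epsilon\to0$, the factor tends to $h_\mu/\epsilon$. This gives $\dot\sigma\sim L\mu\sigma^p/\epsilon$ near $0$, which is the GD law with $c_{\mathrm b}=L\mu/\epsilon$.
\item If $\epsilon/h_\mu\to0$, the factor tends to $1$, giving the SignGD law $\alpha_{\mathrm b}=q$, $c_{\mathrm b}=L$.
\end{itemize}
For the threshold $\kappa$, maximize $h_\mu(\sigma)=(\mu-\sigma)\sigma^q$ on $(0,\mu)$. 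The critical point is $\sigma^*=q\mu/(q+1)$, with maximum $\mu^{q+1}q^q/(q+1)^{q+1}$. So $\epsilon<\kappa$ is exactly the condition that an interval where $h_\mu>\epsilon$, and hence a SignGD-dominated regime, exists.

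For decay, $h=\sigma^{q+1}$, so the crossover $\sigma^{q+1}=\epsilon$ defines $\sigma_{\mathrm A}^{\mathrm d}=\epsilon^{1/(q+1)}$. Below it, $\dot\sigma\sim-L\sigma^{p+1}/\epsilon$, and \Cref{cor:decay_classification} gives the GD-like algebraic tail. Near the target, $|g|=r\mu^q(1+o(1))\to0$, so eventually $|g|\ll\epsilon$. Then $\dot r\sim-(L\mu^p/\epsilon)r$, and \Cref{prop:saturation_phase_law} gives exponential saturation at rate $L\mu^p/\epsilon$.

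The main obstacle is not the calculus but making the regime claims precise. Within a single asymptotic limit, the ratio $h_\mu/\epsilon$ is not uniformly small or large. I would therefore state these claims as local equivalences of the vector field on the sets where the ratio tends to $0$ or $\infty$, rather than as global phase laws. I would also justify monotonicity, and hence the validity of \Cref{prop:general_passage_time}, by noting that the scalar ODE has $\mu$ as an equilibrium. Uniqueness then prevents crossing, since the right-hand side is locally Lipschitz for $\sigma>0$.
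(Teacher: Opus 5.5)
Your proposal is correct and follows essentially the same route as the paper. You split $1/|\dot\sigma|$ into the SignGD reciprocal speed $1/(L\sigma^q)$ plus $\epsilon$ times the GD reciprocal speed and integrate over each phase. You then read the regime claims off the factor $h_\mu/(h_\mu+\epsilon)$ and the maximizer $\sigma^\ast=q\mu/(q+1)$ of $h_\mu$. Your explicit uniqueness argument for monotonicity ($\mu$ is an equilibrium of a vector field that is locally Lipschitz on $\sigma>0$) adds a point of rigor the paper leaves implicit.
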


\begin{proof} \,

\paragraph{Birth of a positive mode.}
From the dynamics, $\dot\sigma=-\dfrac{L\sigma^p(\sigma-\mu)}{|\sigma-\mu|\sigma^q+\epsilon}$, the birth time for $0<s<b<\mu$ is
\begin{align*}
T_{\mathrm{IAdam}}^{\mathrm{birth}}(s,b) &= \frac{1}{L} \int_s^b x^{-q} \, dx + \frac{\epsilon}{L} \int_s^b \frac{x^{-p}}{\mu-x} \, dx \\
&= T_{\mathrm{SignGD}}^{\mathrm{birth}}(s,b) + \epsilon T_{\mathrm{GD}}^{\mathrm{birth}}(s,b)
\end{align*}
Therefore, birth time is the sum of a finite SignGD contribution and an $\epsilon$-dependent Euclidean contribution.
For fixed $\epsilon>0$, $b$, and $\mu$, as $s\to0^+$,
\[
T_{\mathrm{IAdam}}^{\mathrm{birth}}(s,b)
=
\begin{cases}
\Theta(1),
& L=1,
\\[1mm]
\dfrac{\epsilon}{2\mu}
\log\dfrac1s
+
O(1),
& L=2,
\\[3mm]
\dfrac{\epsilon}{\mu(L-2)}
s^{-(L-2)/L}
+
O(1),
& L>2.
\end{cases}
\]
Thus a fixed positive $\epsilon$ restores the Euclidean GD birth classification in the extreme small-initialization regime:
birth remains finite for $L=1$, is logarithmically delayed for $L=2$, and is polynomially delayed for $L>2$.

For $L>1$, for $0<\sigma<\mu$ define
$$
h_\mu(\sigma)=(\mu-\sigma)\sigma^q,
\qquad
h_{\max}
=
\mu^{q+1}\frac{q^q}{(q+1)^{q+1}}.
$$
If $0<\epsilon<h_{\max}$, the equation $h_\mu(\sigma)=\epsilon$ has two roots
$$
0<\sigma_-<\frac{q\mu}{q+1}<\sigma_+<\mu.
$$
Then $h_\mu(\sigma)>\epsilon$ exactly on $(\sigma_-,\sigma_+)$.
An accurate approximation with SignGD requires the stronger condition $h_\mu(\sigma)\gg\epsilon$.

If $\epsilon\ge h_{\max}$, no interval satisfies $h_\mu(\sigma)>\epsilon$.
If $\epsilon\gg h_{\max}$, the entire birth trajectory is uniformly approximated by GD divided by $\epsilon$.

As $\epsilon/\mu^{q+1}\to0$, \,\,
$
\sigma_-=(\epsilon/\mu)^{1/q}(1+o(1)),
\,\,\,
\mu-\sigma_+=({\epsilon}/{\mu^q})(1+o(1)).
$

For $L=1$, $h_\mu(\sigma)=\mu-\sigma$, so the two root description does not apply.

The early-growth crossover is approximately
\[
\sigma_{\mathrm{A}}^\mathrm{b}
\asymp
\left(
\frac{\epsilon}{\mu}
\right)^{1/q},
\]
while the crossover near saturation satisfies
$
\mu-\sigma^{\mathrm{s}}_\mathrm{A}
\asymp
\dfrac{\epsilon}{\mu^q}.
$ \\
Modes far below $\sigma^{\mathrm{b}}_\mathrm{A}$ initially follow the Euclidean birth law.
Modes between the two crossovers move approximately according to SignGD.
Modes sufficiently close to the target return to an $\epsilon$-dominated regime.

\paragraph{Decay of an unsupported mode.}

For $\mu=0$, the mode dynamics equation becomes
$
\dot\sigma
=
-L
\dfrac{
\sigma^{p+1}
}{
\sigma^{q+1}+\epsilon
}.
$
When
$
\sigma^{q+1}\gg\epsilon,
$
the mode follows the SignGD-like law
$
\dot\sigma
\approx
-L\sigma^q.
$
When
$
\sigma^{q+1}\ll\epsilon,
$
the final decay satisfies
$
\dot\sigma
\approx
-\dfrac{L}{\epsilon}\sigma^{p+1}.
$
The crossover scale is therefore
\[
\sigma^{\mathrm{d}}_\mathrm{A}
\asymp
\epsilon^{1/(q+1)}.
\]

For $\mu=0$, similar to birth time, 
$$
T_{\mathrm{IAdam}}^{\mathrm{decay}}(s,\delta) = T_{\mathrm{SignGD}}^{\mathrm{decay}}(s,\delta) + \epsilon T_{\mathrm{GD}}^{\mathrm{decay}}(s,\delta).
$$
Therefore, 
\[
T_{\mathrm{IAdam}}^{\mathrm{decay}}(s,\delta)
=
\begin{cases}
s-\delta
+
\epsilon\log\dfrac{s}{\delta},
& L=1,
\\[3mm]
s^{1/L}-\delta^{1/L}
+
\dfrac{\epsilon}{Lp}
\left(
\delta^{-p}-s^{-p}
\right),
& L>1.
\end{cases}
\]
For fixed $\epsilon>0$ and fixed initial value $s$, as $\delta\to0^+$,
\[
T_{\mathrm{IAdam}}^{\mathrm{decay}}(s,\delta)
=
\begin{cases}
\epsilon\log\dfrac1\delta+O(1),
& L=1,
\\[3mm]
\dfrac{\epsilon}{Lp}\delta^{-p}+O(1),
& L>1.
\end{cases}
\]
Thus instantaneous Adam does not remove an unsupported mode in finite time when $\epsilon>0$.
For $L=1$, the final trajectory decays exponentially:
$
\sigma(t)
\asymp
e^{-t/\epsilon}.
$
For $L>1$, it has the algebraic tail
$
\sigma(t)
\asymp
\left(
\frac{Lp}{\epsilon}t
\right)^{-1/p}.
$
The SignGD finite-time decay is therefore replaced by a logarithmic tolerance time at depth one and a polynomial tolerance time at greater depth.

\paragraph{Saturation at a nonzero target.}
Following the above analysis, 
for approach to a positive target from either side,
$$
T_{\mathrm{IAdam},\pm}^{\mathrm{sat}}(r_0,\delta)
=
T_{\mathrm{Sign},\pm}^{\mathrm{sat}}(r_0,\delta)
+
\epsilon T_{\mathrm{GD},\pm}^{\mathrm{sat}}(r_0,\delta).
$$
These identities are exact.
For fixed $r_0$,
$$
T_{\mathrm{IAdam},\pm}^{\mathrm{sat}}(r_0,\delta)
=
\frac{\epsilon}{L\mu^p}\log\frac{r_0}{\delta}+O(1).
$$

Thus the outer approach is approximately linear, but the final approach is exponential with rate
$
\beta
=
\frac{L\mu^p}{\epsilon}.
$
Unlike ideal SignGD, instantaneous Adam reaches a nonzero target only asymptotically when $\epsilon>0$.

\end{proof}

\paragraph{Depth and the order of limits.}

The two exponents $q$ and $p$ play different roles.
The exponent $q$ controls the SignGD-like motion in the factor-amplitude coordinate and the locations of the crossover scales.
The exponent $p$ controls the $\epsilon$-dominated birth barrier, the final decay tail, and the local exponential saturation rate.

For fixed $\epsilon>0$, increasing depth changes the birth asymptotics from logarithmic at $L=2$ to polynomial at $L>2$, with exponent
$
p-1
=
\frac{L-2}{L}.
$
The unsupported-mode decay tail has exponent
$
p
=
2-\frac2L,
$
which also increases with depth.
Thus deeper factorizations make the extreme small-gradient regimes increasingly resistant both to the emergence of infinitesimal positive modes and to the complete removal of already active modes.

For $L \ge 2$, the limits $\epsilon\to0$ and $s\to0$ do not commute.
For every fixed $s>0$,
\[
\lim_{\epsilon\to0}
T_{\mathrm{IAdam}}^{\mathrm{birth}}(s,b)
=
T_{\mathrm{SignGD}}^{\mathrm{birth}}(s,b).
\]
In contrast, for every fixed $\epsilon>0$, the limit $s\to0$ reveals the Euclidean GD birth barrier.
Therefore an Adam-like update may appear SignGD-like at practical mode scales while remaining mathematically GD-like in the infinitesimal limit.
For $L=1$, both iterated limits equal $b$.

\textbf{Takeaway.}
\emph{Ideal SignGD makes $\sigma^{1/L}$ a constant-speed coordinate: positive modes are born, unsupported modes disappear, and nonzero targets are reached in finite time for every depth.
The instantaneous Adam approximation inherits this behavior only over the range in which the gradient magnitude dominates $\epsilon$.
For fixed $\epsilon>0$, sufficiently small birth modes recover GD's finite--logarithmic--polynomial depth transition, unsupported modes develop logarithmic or polynomial decay tails, and final saturation becomes exponential.
Thus coordinate normalization controls the phase crossovers. 
}

\subsection{Muon Phase Laws}
{Muon} updates each factor by orthogonalizing matrix updates, that is $Z_\ell = U_\ell V_\ell^\top$.  
In the diagonal aligned singular mode model, $G = \diag(g_i)$. Therefore, $z_i = \sign(g_i)$. Thus, for each mode
\begin{align*}
    \dot\sigma&=-L\sigma^q\sign(g) \\
    &= -L\sigma^q\sign((\sigma-\mu)\sigma^q) \\
    &= -L\sigma^q\sign(\sigma-\mu)
\end{align*}
reducing it to a scalar sign rule as in SignGD updates. Therefore, \Cref{prop:signgd} holds for Muon as well.

Note that this is a spectral update in matrix space even though the aligned model reduces it to a scalar sign rule as in SignGD updates, and
this does not mean Muon has the same global bias as coordinate SignGD. Coordinate SignGD is coordinate-basis dependent. Muon is matrix-basis / spectral-geometry dependent. But in an aligned diagonal model, both reduce to sign dynamics on singular modes. Therefore, Muon weakens low rank bias in the diagonal mode. It controls spectral geometry, not rank directly.
A spectral-norm ball constrains the largest singular value, but it does not penalize the number of nonzero singular values. So Muon should not be expected to behave like nuclear-norm minimization.

\subsection{Cumulative Shampoo Phase Laws}
{Shampoo} updates each factor using $Z_\ell=C_\ell^{-1/4}H_\ell D_\ell^{-1/4}$ where $C_\ell$ and $D_\ell$ are the accumulated left and right gradient-covariance matrices defined in \Cref{tab:optimizer_conventions}.  In the balanced aligned model, the two covariance factors reduce to the same scalar accumulator.
In the aligned diagonal model, $H=\diag(g_i)$, therefore $HH^\top = H^\top H = \diag(g_i^2)$.
For the common scalar accumulator, let 
$$h(t) = \int_0^t g(u)^2 \, du, \qquad \text{and}  \qquad d(t)=\sqrt{\lambda_{\mathrm{Sh}} + h(t)}, \,\, d(0)=\sqrt{\lambda_{\mathrm{Sh}}}. $$
For a fresh isotropic Shampoo state,
$
d_0=d(0)=\sqrt{\lambda_{\mathrm{Sh}}},
$
although the analysis below also permits $d_0$ to include gradient history accumulated before the phase under consideration.
The scalar factor update $C_\ell^{-1/4}H_\ell D_\ell^{-1/4} = \diag(g_i/d)$. Thus 
\[
\dot a
=
-\frac{g}{d}.
\]
Since
$
\dot\sigma
=
L\sigma^q\dot a,
$
each singular mode satisfies
\[
\dot\sigma
=
-L
\frac{
(\sigma-\mu)\sigma^p
}{
d
}
=
L
\frac{
(\mu-\sigma)\sigma^p
}{
d
}.
\]
The accumulator obeys
\[
\dot h
=
g^2
=
(\sigma-\mu)^2\sigma^p.
\]
Therefore
\[
\dot d
=
\frac{\dot h}{2d}
=
\frac{
(\sigma-\mu)^2\sigma^p
}{
2d
}.
\]

\paragraph{An exact first integral.}

Along any nonstationary monotone segment, divide $\dot d$ by $\dot\sigma$:
\[
\frac{dd}{d\sigma}
=
\frac{
(\sigma-\mu)^2\sigma^p/(2d)
}{
-L(\sigma-\mu)\sigma^p/d
}
=
-\frac{\sigma-\mu}{2L}.
\]
Integrating from an initial effective singular value $\sigma(0)=s$ gives
\begin{align}
d(\sigma)
=
d_0
+
\frac{
\mu(\sigma-s)
-
\frac12(\sigma^2-s^2)
}{
2L}.
\label{eq:sh_d_sigma}
\end{align}
This relation exactly eliminates the accumulator from the aligned scalar dynamics.
In particular, cumulative Shampoo is not simply an instantaneous approximation to SignGD or Muon: its current mobility is determined by the distance already travelled by the mode.

\begin{proposition}[Cumulative Shampoo phase laws]
\label{prop:shampoo-phases}

Let a nonstationary aligned mode begin at $\sigma(0)=s>0$ with denominator $d_0>0$. Consider $d(\sigma)$ in \Cref{eq:sh_d_sigma}.
For $0<s<b<\mu$, the birth time is exactly
\[
T_{\mathrm{Sh}}^{\mathrm{birth}}(s,b)
=
\frac1L
\int_s^b
\frac{
d(x)
}{
x^p(\mu-x)
}
\,dx.
\]
For an unsupported mode with $\mu=0$, define $c_s = d_0+\dfrac{s^2}{4L}.$
The decay time from $s$ to $\delta\in(0,s)$ is
\[
T_{\mathrm{Sh}}^{\mathrm{decay}}(s,\delta)
=
\begin{cases}
c_s\log\dfrac{s}{\delta}
-
\dfrac{s^2-\delta^2}{8} & L=1,
\\[3mm]
\dfrac{c_s}{Lp}
\left(
\delta^{-p}-s^{-p}
\right)
-
\dfrac{
s^{2-p}-\delta^{2-p}
}{
4L^2(2-p)
} & L>1.
\end{cases}
\]

For fixed $d_0>0$, the birth asymptotics from a vanishing positive seed have exponent $\alpha_\mathrm{b}=p$ and coefficient $c_\mathrm{b}=L\mu/d_0$.
On an intermediate interval $\sigma_a\leq\sigma\leq b$ satisfying $s\ll\sigma_a$, $b\ll\mu$, and $2Ld_0\ll\mu\sigma_a$, the leading birth law instead has exponent $\alpha_\mathrm{b}=p-1$ and coefficient $c_\mathrm{b}=2L^2$.
The uniform approximation and its error bound are stated below.
For an unsupported mode with fixed $s>0$ and $d_0>0$, the final decay as $\sigma\to0$ has exponent $\alpha_\mathrm{d}=p+1$ and coefficient $c_\mathrm{d}=L/(d_0+s^2/(4L))$.


For a nonzero target approached monotonically from above and below, let the residual move from $r_0$ to $\delta$. 
Then local saturation is exponential at rate $\beta = \dfrac{L\mu^p}{d_\star}$ with
\[
T_{\mathrm{Sh},\pm}^{\mathrm{sat}}(r_0,\delta)
=
\frac{
d_\star
}{
L\mu^p
}
\log\frac{r_0}{\delta} + O(1), \qquad
\text{where} \qquad
d_\star
=
d(\mu)
=
d_0+\frac{(\mu-s)^2}{4L}.
\]

\end{proposition}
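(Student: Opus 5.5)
The approach is to use the exact first integral \Cref{eq:sh_d_sigma} to reduce the Shampoo mode dynamics to the closed scalar ODE $\dot\sigma = L\sigma^p(\mu-\sigma)/d(\sigma)$. Every time then follows from \Cref{prop:general_passage_time} with $z=-g/d$, and every exponent follows from expanding $d(\sigma)$ near the relevant point and invoking \Cref{cor:birth_classification,cor:decay_classification,prop:saturation_phase_law}.

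\textbf{Step 1 (exact times).} On a monotone segment, $d>0$ and $\mu-\sigma$ has a fixed sign, so $z=g/d$ with $g=(\sigma-\mu)\sigma^q$. Substituting into \Cref{prop:general_passage_time} immediately gives $T^{\mathrm{birth}}_{\mathrm{Sh}}(s,b)=\frac1L\int_s^b d(x)\,x^{-p}(\mu-x)^{-1}dx$.

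For $\mu=0$, \Cref{eq:sh_d_sigma} becomes $d(\sigma)=d_0+(s^2-\sigma^2)/(4L)=c_s-\sigma^2/(4L)$, which stays positive on $[\delta,s]$. The decay integral is then
\[
\frac1L\int_\delta^s \bigl(c_s-x^2/(4L)\bigr)x^{-p-1}\,dx .
\]
Splitting it into the $c_s x^{-p-1}$ term and the $x^{1-p}/(4L)$ term gives the stated closed forms for $L=1$ ($p=0$) and $L>1$. Note that $2-p=2/L>0$, so the second term is bounded.

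\textbf{Step 2 (exponents).}
\begin{itemize}
\item \emph{Initial birth.} As $\sigma\to0^+$ with $s\to0$, $d(\sigma)\to d_0$ by continuity. Hence $\dot\sigma=(L\mu/d_0)\sigma^p(1+o(1))$, so $\alpha_{\mathrm b}=p$ and $c_{\mathrm b}=L\mu/d_0$.
\item \emph{Decay.} As $\sigma\to0$, $d\to c_s$, so $\dot\sigma=-(L/c_s)\sigma^{p+1}(1+o(1))$. This gives $\alpha_{\mathrm d}=p+1$ and $c_{\mathrm d}=L/(d_0+s^2/(4L))$, matching the table.
\item \emph{Accumulator phase.} On $[\sigma_a,b]$ with $s\ll\sigma_a$ and $b\ll\mu$, write $d(\sigma)=\mu\sigma/(2L)\,(1+\eta)$. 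The error $\eta$ collects $2Ld_0/(\mu\sigma)$, $s/\sigma$, and $\sigma/(2\mu)$, and each of these is uniformly small under the hypotheses. Likewise $\mu-\sigma=\mu(1+O(b/\mu))$. Therefore $\dot\sigma=2L^2\sigma^{p-1}(1+O(\eta_{\max}))$. Integrating via \Cref{prop:power_passage} with $\alpha=p-1$ and $1-\alpha=2/L$ yields $(b^{2/L}-\sigma_a^{2/L})/(4L)$ up to a relative error of order $\eta_{\max}$.
\end{itemize}

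\textbf{Step 3 (saturation).} Near $\mu$, $d(\sigma)=d_\star+O(r^2)$, where $d_\star=d(\mu)=d_0+(\mu-s)^2/(4L)$; the linear term vanishes because $\partial_\sigma d=(\mu-\sigma)/(2L)$ is zero at $\mu$. Also $\sigma^p=\mu^p+O(r)$. Thus $\dot r=-(L\mu^p/d_\star)\,r\,(1+O(r))$.

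Writing the exact saturation integral as $\int_\delta^{r_0}\frac{d(\mu\mp r)}{L(\mu\mp r)^p r}\,dr$, the integrand minus $d_\star/(L\mu^p r)$ is bounded near $r=0$. This gives $\frac{d_\star}{L\mu^p}\log(r_0/\delta)+O(1)$. For approach from above, the same integral is used, assuming the initial accumulator is compatible with the first integral along that segment.

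\textbf{Main obstacle.} I expect the hardest part to be the accumulator-phase statement. It is only an intermediate asymptotic, so the plan is to give an explicit uniform relative error bound for $\eta$ on $[\sigma_a,b]$ rather than a limit statement.

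A second concern is justifying monotonicity and the positivity of $d$ along the whole segment. This is needed so that the first integral is legitimate. For $0<s<\sigma<\mu$, $d$ is increasing, so positivity is automatic; for decay, $d\ge d_0>0$ on $[\delta,s]$. The scalar ODE has no interior zero of its right-hand side before $\mu$, which gives monotonicity.
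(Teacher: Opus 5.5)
Your proposal is correct and follows essentially the same route as the paper: eliminate the accumulator with the first integral, obtain every passage time from the closed scalar ODE, and expand $d(\sigma)$ near $0$, on $[\sigma_a,b]$, and near $\mu$. For the initial birth phase the paper is slightly more explicit, writing $T_{\mathrm{Sh}}^{\mathrm{birth}}(s,b)=d_0T_{\mathrm{GD}}^{\mathrm{birth}}(s,b)+R_s(b)$ with $R_s(b)$ bounded uniformly in $s$ via $0\le\nu_s(x)\le\mu x/(2L)$, and your opening paragraph's $z=-g/d$ should read $z=g/d$, as you correctly use in Step 1.
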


\begin{proof} \,
\paragraph{Birth of a positive mode.}

The exact first integral $d(\sigma)$ reveals two distinct birth regimes.
For fixed positive damping, every mode initially experiences a GD-like phase.
After sufficient gradient history has accumulated, the dynamics can enter a history-adapted phase with a different singular-value exponent.

\paragraph{Initial damping dominated phase.}
Let's define 
$$
\nu_s(x)=\frac{(x-s)(2\mu-x-s)}{4L}.
$$
The exact birth time decomposition is
$$
T_{\mathrm{Sh}}^{\mathrm{birth}}(s,b)
=
d_0T_{\mathrm{GD}}^{\mathrm{birth}}(s,b)+R_s(b),
\qquad
\text{where} \qquad
R_s(b)
=
\frac1L\int_s^b
\frac{\nu_s(x)}{x^p(\mu-x)}\,dx.
$$
Since $0\le\nu_s(x)\le\mu x/(2L)$ and $p<2$,
$
0\le R_s(b)
\le
\dfrac{\mu}{2L^2}
\displaystyle\int_0^b\dfrac{x^{1-p}}{\mu-x}\,dx
<\infty.
$
Thus, for fixed $d_0>0$ and $b<\mu$,
$$
T_{\mathrm{Sh}}^{\mathrm{birth}}(s,b)
=
\begin{cases}
T_{\mathrm{Sh}}^{\mathrm{birth}}(0,b)+o(1), & L=1,\\[3mm]
\dfrac{d_0}{2\mu}\log(1/s)+O(1), & L=2,\\[3mm]
\dfrac{d_0s^{1-p}}{L\mu(p-1)}+O(1), & L>2.
\end{cases}
$$
The depth one limit is
$$
T_{\mathrm{Sh}}^{\mathrm{birth}}(0,b)
=
\left(d_0+\frac{\mu^2}{4}\right)
\log\frac{\mu}{\mu-b}
-\frac{\mu b}{4}+\frac{b^2}{8}.
$$

Thus the initial Shampoo birth exponent $\alpha_\mathrm{b} = p$, which is exactly the Euclidean GD exponent and its coefficient $c_\mathrm{b}= L\mu/d_0$.
Therefore, cumulative Shampoo with fixed positive damping retains the Euclidean GD birth classification:
the birth time is finite at depth one, logarithmic at depth two, and polynomial for $L>2$.
This conclusion is important because the later adapted phase does not remove the infinitesimal barrier created by the initial denominator.
The adapted phase changes how a mode grows after sufficient history has accumulated, but a mode starting arbitrarily close to zero must first traverse the damping-dominated GD-like regime.

\paragraph{Crossover to the history adapted phase.}

The accumulator becomes comparable to its initial value when
\[
d_0
\asymp
\frac{
\mu(\sigma-s)-\frac12(\sigma^2-s^2)
}{
2L}.
\]

An interior point satisfying $\nu_s(\sigma_c)=d_0$ exists if and only if
$
4Ld_0<(\mu-s)^2.
$
It is
$
\sigma_{\mathrm{Sh}}^\mathrm{b}
=
\mu-\sqrt{(\mu-s)^2-4Ld_0}.
$
If $4Ld_0/(\mu-s)^2\to0$, then
$$
\sigma_{\mathrm{Sh}}^\mathrm{b}-s
=
\frac{2Ld_0}{\mu-s}
\left(1+O\left(\frac{Ld_0}{(\mu-s)^2}\right)\right).
$$
The approximation $\sigma_{\mathrm{Sh}}^\mathrm{b}\approx s+2Ld_0/\mu$ additionally requires $s\ll\mu$.

A substantial history adapted birth interval exists only when this crossover occurs well before the target.
Equivalently, the total history accumulated by the target must be large relative to the initial denominator, that is, 
$
d_0
\ll
\dfrac{(\mu-s)^2}{4L}.
$
If this condition fails, the mode remains predominantly in the GD-like regime throughout its approach to $\mu$.

At fixed $d_0$ and $\mu$, the crossover scale increases with depth.
Thus deeper factorizations require a larger effective singular value before accumulated Shampoo history dominates the initial damping.

\paragraph{History-adapted accumulator phase.}
Consider an interval $\sigma_a\le\sigma\le b<\mu$ with $s<\sigma_a$.
Define
$$
\mathcal E
=
\frac{s}{\sigma_a}
+
\frac{b}{\mu}
+
\frac{2Ld_0}{\mu\sigma_a}.
$$
For fixed $L$ and $\mathcal E\to0$, uniformly on this interval,
$
d(\sigma)
=
\dfrac{\mu\sigma}{2L}(1+O(\mathcal E)),
$ that is accumulated history is dominating.
Then,
$$
\dot\sigma
=
2L^2\sigma^{p-1}(1+O(\mathcal E)).
$$

Hence the adapted Shampoo birth exponent in this phase $\alpha_\mathrm{b} = p-1 = 1 -\dfrac{2}{L}$. 

If the adapted phase begins at $\sigma_a$ and ends at $b\ll\mu$, its duration is
\[
T_{\mathrm{Sh}}^{\mathrm{birth}}(\sigma_a,b)
=
\frac{1+O(\mathcal E)}{2L^2}
\int_{\sigma_a}^{b}
x^{-(p-1)}
\,dx
=
\frac{
b^{2/L}-\sigma_a^{2/L}
}{
4L} \lp 1+O(\mathcal E) \rp
\]
from \Cref{prop:power_passage,cor:birth_classification}. This intermediate approximation cannot be extended to $\sigma_a=0$ at fixed $d_0>0$.
The target amplitude $\mu$ has disappeared from this law.
Thus, after enough gradient history has accumulated, Shampoo no longer orders growing modes by their target amplitudes.

The initial and adapted Shampoo exponents lie on opposite sides of Muon's exponent $q = 1-1/L$.
For $L>1$, $p>q$, so initial Shampoo suppresses small modes more strongly than Muon.
In contrast, $p-1 = q - 1/L < q$,
so the adapted phase has weaker singular magnitude dependence than Muon.
Cumulative Shampoo therefore changes from a GD-like, target sensitive birth law to a history-adapted, target independent catch-up law.
This catch-up law is reached only after the accumulator crosses the damping controlled threshold.

\paragraph{Decay of an unsupported mode.}

For $\mu=0$, the first integral reduces to
\[
d(\sigma)
=
d_0+\frac{s^2-\sigma^2}{4L}
=
c_s-\frac{\sigma^2}{4L}, \qquad
\text{where} \qquad
c_s
=
d_0+\frac{s^2}{4L}.
\]
The decay dynamics are
$
\dot\sigma
=
-L
\dfrac{
\sigma^{p+1}
}{
d(\sigma)
}.
$
Therefore, 
\begin{align*}
T_\mathrm{Sh}^{\mathrm{decay}}(s,\delta) &= - \dfrac{c_s}{L} \int_s^\delta x^{-(p+1)} \, dx + \dfrac{1}{4L^2} \int_s^\delta x^{-(p-1)} \, dx \\
&= c_s T_\mathrm{GD}^{\mathrm{decay}}(s,\delta) + \dfrac{1}{4L^2} \int_s^\delta x^{-(p-1)} \, dx
\end{align*}
Note that $p-1=1-2/L$ is less than $1$, thus the second term results in finite limit as $\delta \to 0$ from \Cref{prop:power_passage}. So, the decay is determined by the first term, and so the decay exponent is $\alpha_\mathrm{d}=p+1$, which is the same as Euclidean GD, and $c_\mathrm{d}=L/c_s$. The full solution of the decay time is derived from \Cref{cor:decay_classification,prop:power_passage}:
\[
T_{\mathrm{Sh}}^{\mathrm{decay}}(s,\delta)
=
\begin{cases}
c_s\log\dfrac{s}{\delta}
-
\dfrac{s^2-\delta^2}{8} & L=1,
\\[3mm]
\dfrac{c_s}{Lp}
\left(
\delta^{-p}-s^{-p}
\right)
-
\dfrac{
s^{2-p}-\delta^{2-p}
}{
4L^2(2-p)
} & L>1.
\end{cases}
\]
The mode consequently decays exponentially for $L=1$:
$
\sigma(t)
\asymp
e^{-t/c_s},
$
and algebraic for $L>1$:
$
\sigma(t)
\sim
\left(
\frac{Lp}{c_s}t
\right)^{-1/p}.
$

Thus cumulative Shampoo never removes an unsupported mode in finite time.
Its decay has the same logarithmic-versus-polynomial classification as Euclidean GD, but its coefficient retains the mode's accumulated gradient history.
This produces a historical inertia effect.
At fixed $s$, $\delta$, and $L$,
$$
\frac{\partial}{\partial d_0}
T_{\mathrm{Sh}}^{\mathrm{decay}}(s,\delta)
=
T_{\mathrm{GD}}^{\mathrm{decay}}(s,\delta)>0.
$$
Thus a larger inherited denominator increases every decay passage time when the initial amplitude is held fixed.
Therefore, a mode that has previously received larger gradients begins the decay phase with a larger denominator $d_0$, and therefore a larger limiting constant $c_s$.
Such a mode takes proportionally longer to remove.
Muon has no corresponding cumulative denominator 
and can drive an unsupported aligned mode to zero in finite time.
Cumulative Shampoo instead makes previously active modes increasingly persistent.

Depth strengthens the late decay tail.
For $L>1$, $p=2-\dfrac2L$ increases toward two as $L$ increases.
Accordingly, the tolerance time diverges as $\delta^{-p},$
and the trajectory decays as $t^{-1/p},$
whose exponent decreases toward $1/2$ with depth.

\paragraph{Saturation at a nonzero target.}

At the target $\sigma=\mu$, 
$$
d_\star = 
d(\mu)
=
d_0+\dfrac{(\mu-s)^2}{4L}.
$$
Near the target,
for a residual interval traversed by the trajectory,
$$
T_{\mathrm{Sh},-}^{\mathrm{sat}}(r_0,\delta)
=
\frac1L\int_\delta^{r_0}
\frac{d_\star-r^2/(4L)}{(\mu-r)^p r}\,dr,
\qquad
\text{and}
\qquad
T_{\mathrm{Sh},+}^{\mathrm{sat}}(r_0,\delta)
=
\frac1L\int_\delta^{r_0}
\frac{d_\star-r^2/(4L)}{(\mu+r)^p r}\,dr.
$$
Consequently,
$$
T_{\mathrm{Sh},\pm}^{\mathrm{sat}}(r_0,\delta)
=
\frac{d_\star}{L\mu^p}\log\frac{r_0}{\delta}+O(1).
$$

From \Cref{prop:saturation_phase_law}, Cumulative Shampoo consequently saturates monotonically and exponentially $\gamma=1$ with local rate $\beta = \dfrac{
L\mu^p}{d_\star}$.
Thus Shampoo approaches a nonzero target only asymptotically.
Unlike Muon or SignGD in their ideal memoryless forms, it does not reach the target in finite time.

The dependence of the saturation rate on the target amplitude changes with accumulated history.
If the initial denominator dominates,
$
d_0
\gg
\dfrac{(\mu-s)^2}{4L},
$
then
$
\beta
\approx
\dfrac{
L\mu^p
}{
d_0
}.
$
This has the same target ordering as GD: larger target singular values saturate faster.

If accumulated history dominates,
$
d_0
\ll
\dfrac{(\mu-s)^2}{4L},
$
then
$
\beta
\approx
\dfrac{
4L^2\mu^p
}{
(\mu-s)^2
}.
$
When $s\ll\mu$, this simplifies to
\[
\beta
\approx
4L^2\mu^{p-2}
=
4L^2\mu^{-2/L}.
\]
For modes compared at common $s$ and $d_0$ with $\mu>s$,
$$
\frac{d}{d\mu}\log\beta
=
\frac{p}{\mu}
-
\frac{2(\mu-s)}{4Ld_0+(\mu-s)^2}.
$$
Thus the rate now decreases with the target singular value $\mu$ precisely when
$
4Lp\,d_0
<
(\mu-s)\bigl((2-p)\mu+ps\bigr)
$.
Thus sufficiently adapted Shampoo can reverse the GD saturation ordering:
larger modes, which generated larger cumulative preconditioners, can converge more slowly near their targets.

As depth increases, the exponent $-\frac2L$ approaches zero.
Hence the history dominated saturation rate becomes less sensitive to the target amplitude, even though its overall depth-dependent prefactor remains.
\end{proof}

\paragraph{Depth dependent phase structure.}

Cumulative Shampoo has no single birth exponent valid throughout training.
At fixed positive damping, an infinitesimal mode first follows
$
\dot\sigma
\propto
\mu\sigma^p,
$
with the GD exponent
$
p=2-2/L.
$
After the cumulative state dominates, it follows
$
\dot\sigma
\propto
\sigma^{p-1},
$
with adapted exponent
$
p-1=1-2/L.
$

For $L=1$, the adapted exponent is $-1$.
For $L=2$, it is zero and the adapted singular value grows approximately at constant speed.
For $L>2$, it lies strictly between zero and one, so the adapted phase has a finite hitting time from a vanishing positive mode.
Nevertheless, for any fixed $d_0>0$, the complete birth time as $s\to0^+$ remains governed by the preceding GD-like phase.

The decay exponent remains $p+1$ in the mode equation and produces a polynomial tolerance time of order $\delta^{-p} $ for $L>1$.
The saturation rate is
$
\beta=\dfrac{L\mu^p}{d_\star}
$, which is approximately $4L^2 \mu^{-2/L}$ at saturation. Thus, it can make larger singular modes saturate more slowly, reversing the order during saturation. 
Saturation depends jointly on depth, target amplitude, damping, and the mode's accumulated history.


\noindent
\textbf{Takeaway.}
\emph{Cumulative Shampoo is a two-stage spectral optimizer rather than a fixed interpolation between GD and Muon.
A newly emerging mode first obeys the GD birth law and therefore retains the finite--logarithmic--polynomial depth barrier under fixed damping.
After sufficient gradient history has accumulated, the mode enters a target-independent catch-up phase with exponent $p-1$, weaker singular-magnitude dependence than Muon, and finite adapted-phase hitting time.
The same accumulated state acts as inertia during decay: unsupported modes vanish only asymptotically, and previously active modes become harder to remove.
Near a nonzero target, Shampoo saturates monotonically and exponentially, but strong accumulation can reverse GD's mode ordering by making larger target modes converge more slowly.}

\subsection{K-FAC Phase Laws}
{K-FAC} updates each layer with the per-layer Fisher or Gauss Newton curvature block. As detailed in \Cref{app:optimizer} from \Cref{eq:gn_kfac}, $\mathcal{F}_\ell = (A_\ell A_\ell^\top) \otimes (B_\ell^\top B_\ell)$ where $A_\ell = W_{\ell-1} \cdots W_1$ and $B_\ell = W_L \cdots W_{\ell+1}$.
Therefore, from \Cref{app:balance_consequence} based on the balanced aligned model, the corresponding eigenvalue of the layerwise Gauss Newton block is 
\[
\begin{aligned}
\kappa_\ell =
\left(a^{\ell-1}\right)^2
\left(a^{L-\ell}\right)^2
=
a^{2L-2} = \sigma^p.
\end{aligned}
\]
The curvature eigenvalue is independent of the layer index $\ell$.
Consequently, every layer receives the same scalar update, and balancedness is preserved within this aligned idealization.

The damped layer update is
\[
\dot a_\ell
=
-
\frac{
g_\ell
}{
\kappa_\ell+\lambda_{\mathrm{KF}}
} 
= -\frac{
(\sigma-\mu)\sigma^q
}{
\sigma^p+\lambda_{\mathrm{KF}}
}.
\]
Recall that $\dot\sigma = L\sigma^q\dot a.$ 
Hence, the singular value dynamics in the aligned model for each mode is $$\dot\sigma = \dfrac{L\sigma^p (\mu-\sigma)}{\sigma^p + \lambda_{\mathrm{KF}}}.$$

\paragraph{Relation to exact joint Gauss Newton.}

The preceding derivation uses a block diagonal curvature model that inverts each layer block separately.
It should be distinguished from exact Gauss Newton over all layers jointly.
For one balanced scalar mode, the full product Jacobian with respect to the $L$ layer amplitudes is
\[
J
=
a^{L-1}
\begin{bmatrix}
1 & \cdots & 1
\end{bmatrix}.
\]
Hence the predictor space curvature is
$
JJ^\top
=
L a^{2L-2}
=
L\sigma^p.
$
Exact damped Gauss Newton therefore gives
\[
\dot\sigma_{\mathrm{GN}}
=
\eta
\frac{
L\sigma^p
}{
L\sigma^p+\lambda_{\mathrm{GN}}
}
(\mu-\sigma).
\]
Comparing with the K-FAC dynamics derived above, we observe that
when damping is zero, both cancel the factorization-induced power $\sigma^p$.
They differ only by a common factor $L$. 
With positive damping, the two filters are not identical unless their damping parameters are scaled consistently.

\paragraph{K-FAC phase laws.}
From the dynamics $\dot\sigma = \dfrac{L\sigma^p (\mu-\sigma)}{\sigma^p + \lambda_{\mathrm{KF}}}$, 
the factor
$
\dfrac{
\sigma^p
}{
\sigma^p+\lambda_{\mathrm{KF}}
}
$
determines how much of the depth-dependent factorization mobility is cancelled.

When $\lambda_{\mathrm{KF}}=0$ and $\sigma>0$, the mobility factor cancels:
$
\dot{\sigma}
=
L(\mu-\sigma).
$
Thus undamped K-FAC behaves like direct gradient flow on the effective singular value, up to the common factor $L$.
When $\lambda_{\mathrm{KF}}>0$, the cancellation is incomplete for sufficiently small modes.

\begin{proposition}[K-FAC phase laws]
\label{prop:kfac-phases}

Let $0<s<b<\mu$.
The time required for a positive mode initialized at $\sigma(0)=s$ to reach $b$ is
\[
T_{\mathrm{KFAC}}^{\mathrm{birth}}(s,b)
=
\frac1L
\log
\frac{\mu-s}{\mu-b}
+
\lambda_{\mathrm{KF}} T_{\mathrm{GD}}^{\mathrm{birth}}(s,b)
\]
For an unsupported mode with $\mu=0$, the time required to decay from $s$ to
$\delta\in(0,s)$ is
\[
T_{\mathrm{KFAC}}^{\mathrm{decay}}(s,\delta)
=
\begin{cases}
    (1+\lambda_{\mathrm{KF}})
\log\dfrac{s}{\delta},
&\qquad
L=1, \\[3mm]
\dfrac1L
\log\dfrac{s}{\delta}
+
\dfrac{\lambda_{\mathrm{KF}}}{Lp}
\left(
\delta^{-p}-s^{-p}
\right),
&\qquad
L>1.
\end{cases}
\]
For a nonzero target, the time required to reduce the residual from a fixed $r_0$ to $\delta$ satisfies from either side satisfies,
$$
T_{\mathrm{KF},\pm}^{\mathrm{sat}}(r_0,\delta)
=
\frac1L\log\frac{r_0}{\delta}
+
\lambda_{\mathrm{KF}}
T_{\mathrm{GD},\pm}^{\mathrm{sat}}(r_0,\delta).
$$
Consequently, 
$$
T_{\mathrm{KF},\pm}^{\mathrm{sat}}(r_0,\delta)
=
\frac{\mu^p+\lambda_{\mathrm{KF}}}{L\mu^p}
\log\frac{r_0}{\delta}+O(1).
$$
\end{proposition}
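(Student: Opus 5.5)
The plan is to start from the scalar K-FAC mode equation derived just above,
\[
\dot\sigma=\frac{L\sigma^p(\mu-\sigma)}{\sigma^p+\lambda_{\mathrm{KF}}},
\]
and apply the passage-time calculus of \Cref{prop:general_passage_time}. Everything rests on one algebraic observation: the reciprocal speed splits additively,
\[
\frac{1}{\dot\sigma}=\frac{1}{L(\mu-\sigma)}+\lambda_{\mathrm{KF}}\,\frac{1}{L\sigma^p(\mu-\sigma)}.
\]
The first piece is the undamped, depth-free contribution. The second is exactly $\lambda_{\mathrm{KF}}$ times the reciprocal of the GD mode speed from \Cref{prop:gd}. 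This is the same mechanism used for instantaneous Adam in \Cref{prop:iadam-phases}, where the time splits into a SignGD part plus $\epsilon$ times a GD part.

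\textbf{Birth.} For $0<s<b<\mu$ the vector field is positive on $[s,b]$, so the trajectory is monotone and
\[
T_{\mathrm{KFAC}}^{\mathrm{birth}}(s,b)=\int_s^b \frac{dx}{\dot\sigma(x)}.
\]
Integrating the first piece gives $\frac1L\int_s^b\frac{dx}{\mu-x}=\frac1L\log\frac{\mu-s}{\mu-b}$. The second piece is $\lambda_{\mathrm{KF}}\cdot\frac1L\int_s^b\frac{x^{-p}}{\mu-x}\,dx$. By the exact integral representation in the proof of \Cref{prop:gd}, this equals $\lambda_{\mathrm{KF}}T_{\mathrm{GD}}^{\mathrm{birth}}(s,b)$.

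\textbf{Decay.} Set $\mu=0$, so $\dot\sigma=-L\sigma^{p+1}/(\sigma^p+\lambda_{\mathrm{KF}})$. This is negative for $\sigma>0$, so the decay is monotone and
\[
T_{\mathrm{KFAC}}^{\mathrm{decay}}(s,\delta)=\frac1L\int_\delta^s\bigl(x^{-1}+\lambda_{\mathrm{KF}}x^{-(p+1)}\bigr)\,dx.
\]
For $L=1$ we have $p=0$, so both terms are logarithmic and sum to $(1+\lambda_{\mathrm{KF}})\log(s/\delta)$. For $L>1$, \Cref{prop:power_passage} gives $\frac1L\log\frac s\delta+\frac{\lambda_{\mathrm{KF}}}{Lp}(\delta^{-p}-s^{-p})$. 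Equivalently, the second term is $\lambda_{\mathrm{KF}}T_{\mathrm{GD}}^{\mathrm{decay}}(s,\delta)$.

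\textbf{Saturation.} Write $\sigma=\mu\mp r$, with the sign depending on the side of approach. Then $\dot r=-L\sigma^p r/(\sigma^p+\lambda_{\mathrm{KF}})$. This has a fixed sign, so the residual decreases monotonically and \Cref{prop:general_passage_time} applies:
\[
T^{\mathrm{sat}}_{\mathrm{KF},\pm}(r_0,\delta)=\frac1L\int_\delta^{r_0}\frac{dr}{r}+\lambda_{\mathrm{KF}}\cdot\frac1L\int_\delta^{r_0}\frac{dr}{(\mu\pm r)^p r}.
\]
The first term is $\frac1L\log(r_0/\delta)$. The second term is $\lambda_{\mathrm{KF}}T_{\mathrm{GD},\pm}^{\mathrm{sat}}(r_0,\delta)$, using the exact integrals written in the proof of \Cref{prop:gd}. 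That proof also gives $T_{\mathrm{GD},\pm}^{\mathrm{sat}}=\frac{1}{L\mu^p}\log\frac{r_0}{\delta}+O(1)$. The reason is that $(\mu\pm r)^{-p}-\mu^{-p}=O(r)$, so the correction integrand is bounded near $r=0$. Collecting the two logarithms gives the coefficient $\frac{\mu^p+\lambda_{\mathrm{KF}}}{L\mu^p}$.

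\textbf{Main obstacle.} There is little difficulty beyond justifying monotonicity and well-posedness. For the "from below" saturation case one needs $r_0<\mu$, so that $\mu-r>0$ along the path. In the undamped case the equation should be read on $\sigma>0$, where the cancellation of $\sigma^p$ is legitimate. I would state these restrictions explicitly. I would also check the $O(1)$ remainder uniformly in $\delta$ for fixed $r_0$, which follows from the integrability argument above. Finally, the exponents listed in \Cref{thm:theory_phase_laws} follow by reading off the two limiting regimes $\sigma^p\gg\lambda_{\mathrm{KF}}$ and $\sigma^p\ll\lambda_{\mathrm{KF}}$ and applying \Cref{cor:birth_classification,cor:decay_classification}.
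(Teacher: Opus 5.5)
Your proposal is correct and follows the paper's proof: the paper also splits the reciprocal speed $(\sigma^p+\lambda_{\mathrm{KF}})/(L\sigma^p(\mu-\sigma))$ into an undamped logarithmic piece plus $\lambda_{\mathrm{KF}}$ times the GD integrand, integrates it for birth, decay and saturation, and reuses the exact GD integrals with their $O(1)$ asymptotics. The only blemish is cosmetic: you write $\sigma=\mu\mp r$ but then $(\mu\pm r)^p$ in the integrand, so fix the convention so that the from-below time $T_{\mathrm{KF},-}^{\mathrm{sat}}$ carries $(\mu-r)^p$, as in the paper.
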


\begin{proof}\,
\paragraph{Birth of a positive mode.}

From the dynamics, we get
\begin{align*}
T_{\mathrm{KFAC}}^{\mathrm{birth}}(s,b) &=  \frac{1}{L} \int_s^b (\mu - x)^{-1} \, dx + \frac{\lambda_{\mathrm{KF}}}{L} \int_s^b \frac{\sigma^{-p}}{(\mu - x)} \, dx \\
&= \frac1L
\log
\frac{\mu-s}{\mu-b}
+
\lambda_{\mathrm{KF}} T_{\mathrm{GD}}^{\mathrm{birth}}(s,b).
\end{align*}
Thus, the K-FAC birth time decomposes into two terms.

\paragraph{Undamped K-FAC.}

When $\lambda_{\mathrm{KF}}=0,$
the mode dynamics reduce to $\dot{\sigma}=L(\mu-\sigma).$
The birth time is
$
T_{\mathrm{KFAC}}^{\mathrm{birth}}(s,b)
=
\dfrac1L
\log
\dfrac{\mu-s}{\mu-b}
$, thus finite time emergence.
The exact trajectory is
$
\sigma(t)
=
\mu-(\mu-s)e^{-Lt}.
$

The near-zero birth dynamics are
$
\dot{\sigma}
\approx
L\mu.
$
Hence the undamped K-FAC birth exponent $\alpha_\mathrm{b}$ is zero.
The initial velocity does not vanish with $\sigma$.
Undamped K-FAC therefore removes the depth-dependent small-mode barrier.

For a common absolute threshold $b$ and seed $s$, ideal Muon has target independent birth time whenever $\mu>b$.
Undamped curvature flow instead has birth time
$$
\frac1L\log\frac{\mu-s}{\mu-b},
$$
which decreases with $\mu$.

The target amplitude nevertheless remains in the velocity.
This distinguishes undamped K-FAC from ideal Muon.
Muon removes both the factorization barrier and target-amplitude ordering during aligned birth, whereas undamped K-FAC removes the barrier but preserves target ordering.

The cancellation at exactly $\sigma=0$ requires care.
At an exactly zero mode, the undamped Fisher block is singular, so the update depends on the inverse or pseudoinverse convention.
The finite birth statement above concerns the limit from a positive seed $s\to0^+$.

\paragraph{Damped K-FAC.}

For $\lambda_{\mathrm{KF}}>0$, define the spectral curvature threshold
$
\sigma_{\mathrm{KF}}^\mathrm{b}
=
\lambda_{\mathrm{KF}}^{1/p},
\,\,
L>1.
$
This is the scale at which
$
\sigma^p
\asymp
\lambda_{\mathrm{KF}}.
$
When $\sigma\ll\sigma_{\mathrm{KF}}^\mathrm{b},$ the preconditioning factor is approximately $\dfrac{\sigma^p}{\lambda_{\mathrm{KF}}}.$
For a small positive mode with $\sigma\ll\mu$,
$
\dot{\sigma}
\approx
\dfrac{L\mu}{\lambda_{\mathrm{KF}}}
\sigma^p.
$
Thus damped K-FAC recovers the Euclidean GD birth exponent $\alpha_\mathrm{b} = p$.

When $\sigma\gg\sigma_{\mathrm{KF}}^\mathrm{b},$ the preconditioning factor approaches one.
The mode then follows
$
\dot{\sigma}
\approx
L(\mu-\sigma),
$
and behaves like direct optimization of the effective matrix.

Damped K-FAC therefore has a two-regime birth law: GD-like below $\sigma_{\mathrm{KF}}^\mathrm{b},$ and curvature-corrected above $\sigma_{\mathrm{KF}}^\mathrm{b}$.
%
%
Thus any fixed positive damping restores the Euclidean finite--logarithmic--polynomial birth classification in the infinitesimal limit.

For $L \ge 2$, the limits $\lambda_{\mathrm{KF}}\to0$ and $s\to0$ do not commute.
For every fixed $s>0$,
\[
\lim_{\lambda_{\mathrm{KF}}\to0}
T_{\mathrm{KFAC}}^{\mathrm{birth}}(s,b)
=
\frac1L
\log
\frac{\mu-s}{\mu-b}.
\]
In contrast, for every fixed $\lambda_{\mathrm{KF}}>0$, the limit $s\to0^+$
is logarithmically divergent at $L=2$ and polynomially divergent for $L>2$.
A numerically small damping can therefore be asymptotically decisive for sufficiently small spectral modes.

For $L=1$,
$
\dot\sigma=\dfrac{\mu-\sigma}{1+\lambda_{\mathrm{KF}}},
$
so damping only changes the speed.

For $L>1$, an exactly zero mode satisfies $\dot\sigma=0$
under every fixed positive damping because the numerator contains $\sigma^p$.
Damped K-FAC therefore also requires a positive seed to initiate growth.

\paragraph{Decay of an unsupported mode.}

For zero target mode $\mu=0$, the K-FAC mode dynamics equation becomes
$
\dot{\sigma}
=
-L
\dfrac{
\sigma^{p+1}
}{
\sigma^p+\lambda_{\mathrm{KF}}
}.
$

When $\lambda_{\mathrm{KF}}=0,$ this reduces to 
$\dot{\sigma}
=
-L\sigma.
$
Hence
\[
T_{\mathrm{KFAC},0}^{\mathrm{decay}}(s,\delta)
=
\frac1L
\log\frac{s}{\delta}, \qquad \text{and} \qquad
\sigma(t)
=
s e^{-Lt}.
\]
Undamped K-FAC removes an unsupported mode exponentially in time, but the mode reaches zero only asymptotically.
Its tolerance time is logarithmic.

When $\lambda_{\mathrm{KF}}>0$ and
$
\sigma\gg\sigma_{\mathrm{KF}}^\mathrm{b},
$
the decay is still approximately exponential:
$
\dot\sigma
\approx
-L\sigma.
$
Once $\sigma\ll\sigma_{\mathrm{KF}}^\mathrm{b},$
the damping term dominates and
$
\dot{\sigma}
\approx
-\frac{L}{\lambda_{\mathrm{KF}}}
\sigma^{p+1}.
$

For $L=1$, $p=0$ and damping only rescales the exponential rate:
$
\dot\sigma
=
-\frac{1}{1+\lambda_{\mathrm{KF}}}\sigma,
$
so
\[
T_{\mathrm{KFAC}}^{\mathrm{decay}}(s,\delta)
=
(1+\lambda_{\mathrm{KF}})
\log\frac{s}{\delta}.
\]

For $L>1$, the final decay is algebraic as shown in \Cref{prop:kfac-phases}.
Equivalently, the singular value evolves as 
$
\sigma(t)
\sim
\left(
\dfrac{Lp}{\lambda_{\mathrm{KF}}}t
\right)^{-1/p}.
$
Thus positive damping converts the undamped exponential decay into a polynomial tail for every depth $L>1$.

The curvature threshold governs both birth and decay. 
Above $\sigma_{\mathrm{KF}}^\mathrm{b},$
K-FAC behaves like direct effective-matrix optimization.
Below it, damping restores the same small-mode mobility that appears under Euclidean GD.

\paragraph{Saturation at a nonzero target.}
The exact saturation times from below and above are,
$$
T_{\mathrm{KF},\pm}^{\mathrm{sat}}(r_0,\delta)
=
\frac1L\log\frac{r_0}{\delta}
+
\lambda_{\mathrm{KF}}
T_{\mathrm{GD},\pm}^{\mathrm{sat}}(r_0,\delta).
$$
Consequently,
$
T_{\mathrm{KF},\pm}^{\mathrm{sat}}(r_0,\delta)
=
\dfrac{\mu^p+\lambda_{\mathrm{KF}}}{L\mu^p}
\log\dfrac{r_0}{\delta}+O(1).
$

Therefore, the residual dynamics is
$
\dot r
\approx
-
L
\dfrac{
\mu^p
}{
\mu^p+\lambda_{\mathrm{KF}}
}
r.
$
This implies K-FAC saturates monotonically and exponentially, with the rate $\beta = L
\dfrac{\mu^p}{\mu^p+\lambda_{\mathrm{KF}}}.$

In the undamped case, $\beta = L$.
The rate is independent of the target singular value and of the mobility exponent $p$.
Under the present time normalization it is proportional to depth; after rescaling time by the common factor $L$, every mode has unit saturation rate.

With positive damping, the saturation rate has two regimes.
If $\mu^p\gg\lambda_{\mathrm{KF}},$
then $\beta \approx L$,
so the mode behaves like undamped K-FAC.
If $\mu^p\ll\lambda_{\mathrm{KF}},$
then $\beta \approx \dfrac{L}{\lambda_{\mathrm{KF}}}\mu^p. $
Small target modes therefore recover the GD-like target sensitivity and saturate more slowly.

K-FAC always approaches a nonzero target asymptotically in this model.
It does not have the finite-time relay behavior of ideal Muon. 
\end{proof}

\paragraph{Role of depth.}

Undamped K-FAC removes the depth-dependent factorization exponent from all three phases.
Positive modes have finite birth time, unsupported modes decay exponentially, and nonzero targets saturate exponentially.
Depth remains only through the common multiplicative rate $L$ under the present time convention.

Positive damping restores depth dependence through $p$.
For birth, the asymptotic delay changes from finite at $L=1$ to logarithmic at
$L=2$ and polynomial for $L>2$, with exponent $p-1$.
For unsupported modes, damping produces a polynomial tolerance time
for $L>1$.
For saturation, small target modes have rate 
$
\dfrac{L}{\lambda_{\mathrm{KF}}}\mu^p,
$
whereas sufficiently large modes approach the undamped rate $L$.

Thus damping does more than stabilize curvature inversion.
It determines which spectral modes experience natural-gradient-like cancellation and which retain the depth-dependent mobility of Euclidean GD.

\medskip

\noindent
\textbf{Takeaway.}
\emph{Undamped K-FAC cancels the factorization-induced small-mode barrier while preserving target-amplitude ordering: positive modes are born in finite time, unsupported modes decay exponentially, and all nonzero targets share the same normalized saturation rate.
Additive damping introduces the spectral threshold
$\sigma_{\mathrm{KF}}^\mathrm{b}=\lambda_{\mathrm{KF}}^{1/p}$.
Above this threshold, K-FAC remains curvature corrected; below it, the Euclidean depth laws return.
Consequently, any fixed positive damping restores logarithmic or polynomial birth delays, produces a polynomial zero-target tail for $L>1$, and slows the saturation of small target modes.
Damping is therefore an inductive-bias parameter, not merely a numerical regularizer.}

\section{Experimental Settings}
\label{app:exp_settings}

We use synthetic deep linear models to isolate the effects of optimizer geometry from nonlinear representation effects.
Unless stated otherwise, the effective predictor is
$
W = W_L W_{L-1}\cdots W_1,
$
and all hidden layers have width equal to the input dimension.
We use deterministic full batch optimization and do not use weight decay or explicit regularization.
All experiments use double precision.
We report continuous optimization time as $t = k\eta$, where $k$ is the optimization step and $\eta$ is the step size or learning rate.

For comparisons between optimizers, the learning rate is selected separately for each method because the update magnitudes induced by Euclidean, normalized, spectral, and curvature based methods are not directly comparable.
For each optimizer, we sweep the learning rate over a logarithmic grid and select the largest value that gives stable monotone progress without numerical divergence or persistent finite step oscillations away from the target, as well as reaches a similar loss within an admissible range for all optimizers.
We then adjust the training horizon so that all optimizers reach approximately the same final training loss. In general, we aim to train for $5000$ steps or lower, with each optimizer attaining similar losses.
This prevents differences in optimization progress from being interpreted as differences in spectral bias.
We plot comparisons of spectra, effective rank, and mode trajectories over time.  

\paragraph{Spectral diagnostics.}
Write $W=U\Sigma V^\top$ for a singular value decomposition, with singular values $\sigma_1\geq\cdots\geq\sigma_r\geq0$, where $r=\min(d_0,d_L)$. We measure stable rank and effective rank using
$$
\operatorname{srank}(W)=\frac{\sum_i \sigma_i^2}{\sigma_1^2}, 
\qquad
\operatorname{erank}(W)=\exp{\lp -\sum_k p_k \log p_k \rp}, \,\, p_k = \frac{\sigma_k}{\sum_i \sigma_i}.
$$
These quantities measure different aspects of spectral concentration and are reported together with the training loss.
Thresholds and conventions at $W=0$ are fixed before comparison.

\paragraph{Matrix Factorization.}

The first setting directly matches the full observation model used in the phase analysis.
The input and output dimensions are both $16$, and the network has depth $L=\{2,4,8\}$ with hidden width $16$.
The target matrix is diagonal in the initialization basis,
$$
M
=
\operatorname{diag}
\left(
1.00,
0.80,
0.60,
0.45,
0.32,
0.24,
0.18,
0.12,
0,
\ldots,
0
\right),
$$
and therefore has rank $8$.
This choice gives several positive target modes with distinct amplitudes together with unsupported modes whose target is zero.

We use balanced aligned initialization.
For the phase law experiments, positive target modes are initialized at effective amplitude $10^{-3}$
while zero target modes are initialized at $0.2$.
The corresponding factor amplitudes are initialized as
$
a_i(0)
=
s_i^{1/L},
$
so that
$
\sigma_i(0)
=
a_i(0)^L
=
s_i
$
for every depth.
This initialization preserves the aligned construction used in the theoretical analysis and allows birth and decay to be observed within the same run.

The phase statistics are evaluated every optimization step.
For birth, we track the local mode dynamics over
$
10^{-3}
\leq
\sigma
\leq
10^{-1}.
$
For decay, we follow a zero target mode from its initial active value toward zero.
For saturation, we estimate the residual dynamics
$
r(t)
=
\left|
\mu-\sigma(t)
\right|
$
for every positive target mode.
Local exponents are estimated by linear regression over a short sliding window in logarithmic coordinates.
The default window contains $21$ recorded points.

For instantaneous Adam, we set $\beta_1=\beta_2=0$ so that the update isolates the normalization mechanism studied in the phase analysis.
For Muon, we use exact compact polar normalization and no momentum.
For cumulative Shampoo, the left and right statistics are accumulated without exponential forgetting, the initial accumulator is $\lambda_{\mathrm{Sh}}I$, and the preconditioner is refreshed at every step.
For K-FAC, we use the additive block Gauss Newton model analyzed in the theory.
No update norm clipping is applied to K-FAC.
The reported learning rates are numerical integration parameters rather than quantities used to establish optimizer rankings. 
We train all the optimizers for $5000$ steps or lower, and the hyperparameters are given in \Cref{tab:matrix_factorization_hparams}.

\begin{table}[h!]
\centering
\caption{
Optimizer settings used for the full observation matrix factorization experiments using depths $L=\{2,4,8\}$.
The learning rates are chosen to keep the discrete trajectories close to the corresponding continuous flows while allowing all methods to reach comparable final training loss. Each optimizer is trained for a maximum of $5000$ steps.
}
\label{tab:matrix_factorization_hparams}
\begin{tabular}{lllll}
\toprule
Optimizer
& Learning rate
& Main state parameter
& State value
& Depth
\\
\midrule
GD
& $5\times10^{-2}$
& none
& none
& $L=\{2,4\}$
\\
GD
& $8\times10^{-2}$
& none
& none
& $L=\{8\}$
\\
 Adam
& $1\times10^{-3}$
& $(\beta_1, \beta_2, \epsilon)$
& $(0,0,0.02)$
& $L=\{2,4\}$
\\
 Adam
& $2\times10^{-3}$
& $(\beta_1, \beta_2, \epsilon)$
& $(0,0,0.02)$
& $L=\{8\}$
\\
Muon
& $5\times10^{-4}$
& momentum
& $0$
& $L=\{2,4,8\}$
\\
 Shampoo
& $2\times10^{-4}$
& $\lambda_{\mathrm{Sh}}$
& $10^{-6}$
& $L=\{2,4,8\}$
\\
K-FAC, undamped
& $5\times10^{-4}$
& $\lambda_{\mathrm{KF}}$
& $0$
& $L=\{2,4,8\}$
\\
K-FAC, damped
& $5\times10^{-4}$
& $\lambda_{\mathrm{KF}}$
& $5\times 10^{-3}$
& $L=\{4,8\}$
\\
K-FAC, damped
& $8\times10^{-4}$
& $\lambda_{\mathrm{KF}}$
& $5\times 10^{-3}$
& $L=\{2\}$
\\
\bottomrule
\end{tabular}
\end{table}

\paragraph{Linear Regression.}

The second setting studies optimizer behavior when the observations do not uniquely determine the effective matrix.
We generate noiseless multivariate linear regression data according to
$
Y
=
W_{\mathrm{true}}X,
$
and train the deep linear predictor by minimizing
$
\mathcal{L}(W)
=
\frac{1}{2n}
\left\|
WX-Y
\right\|_F^2.
$

We use input output data dimensions as $64$ and number of training samples $32$,
so the regression problem is underdetermined.
The teacher has rank $8$.
Its nonzero singular values are the same as $M$ in matrix factorization.
The left and right singular directions of the teacher are sampled once from random orthogonal bases and are kept fixed across all optimizer runs.

The entries of $X\in\mathbb{R}^{64\times32}$ are sampled independently from a standard Gaussian distribution.
We do not whiten the data.
Consequently,
$
\Sigma_X
=
\dfrac{1}{n}
XX^\top
$
has rank at most $32$ and is not proportional to the identity.
The effective gradient is therefore
$
G
=
(W-W_{\mathrm{true}})
\Sigma_X,
$
rather than $W-W_{\mathrm{true}}$.
This setting intentionally leaves the balanced aligned scalar regime and tests whether the optimizer dependent spectral behavior persists when the data covariance couples target modes.

The network has depth $L=4$ and hidden width $64$.
Each layer is initialized with a scaled orthogonal matrix.
The initialization scale is $0.1$, giving a small but full rank initial effective predictor while avoiding the extremely slow escape observed for much smaller deep products.
The same initialization is used for all optimizers.

\begin{table}[t]
\centering
\caption{
Optimizer settings used for underdetermined linear regression with depth $L=4$.
The learning rates were selected over a different grid for different optimizers to obtain stable training and comparable final data loss.
}
\label{tab:linear_regression_hparams}
\begin{tabular}{llll}
\toprule
Optimizer
& Learning rate
& Main state parameter
& State value
\\
\midrule
GD
& $3$
& none
& none
\\
Adam
& $0.05$
& $(\beta_1,\beta_2,\epsilon)$
& $(0,0,0.02)$
\\
Muon
& $0.00025$
& momentum
& $0$
\\
Shampoo
& $0.003$
& $\lambda_{\mathrm{Sh}}$
& $10^{-6}$
\\
K-FAC
& $0.0005$
& $\lambda_{\mathrm{KF}}$
& $10^{-4}$
\\
\bottomrule
\end{tabular}
\end{table}

For this experiment, we use similar optimizers as in the matrix factorization experiment.
The specific learning rates are selected for this finite sample problem rather than transferred from the aligned full observation experiments.

Because the optimizers have substantially different effective step sizes, a common iteration count does not imply a common degree of optimization.
We therefore use the final training loss as the primary tuning criterion.
For each optimizer, the learning rate is selected from a different grid for different optimizers, particularly  
smaller values for strongly preconditioned methods when necessary.
Configurations that diverge or produce persistent numerical instability are discarded.
Among the stable configurations, we select the largest learning rate for which the final training loss is within a factor of two of the lowest final loss attained by the compared methods.
We obtain the hyperparameters in \Cref{tab:linear_regression_hparams} such that the training loss for all optimizers is within a range when trained for $5000$ steps or lower. 

Unless stated otherwise, all reported curves use the same random data realization and the same model initialization across optimizers.

\paragraph{Hardware and Code.} 
Each experiment is run on 1 NVIDIA GeForce RTX 4090 and takes about 5-10 minutes to complete for each optimizer.
A functioning code repository is provided as a zip file in the supplementary material. 

\section{Additional Experimental Results}
\label{app:exp}

\subsection{Matrix Factorization}
\label{sec:exp_matrix_factorization}

We first evaluate the optimizer dependent spectral dynamics in the full observation matrix factorization setting described in \Cref{app:exp_settings}, where the theoretical phase laws apply most directly.

\begin{figure}[h!]
    \centering
    \includegraphics[width=\linewidth]{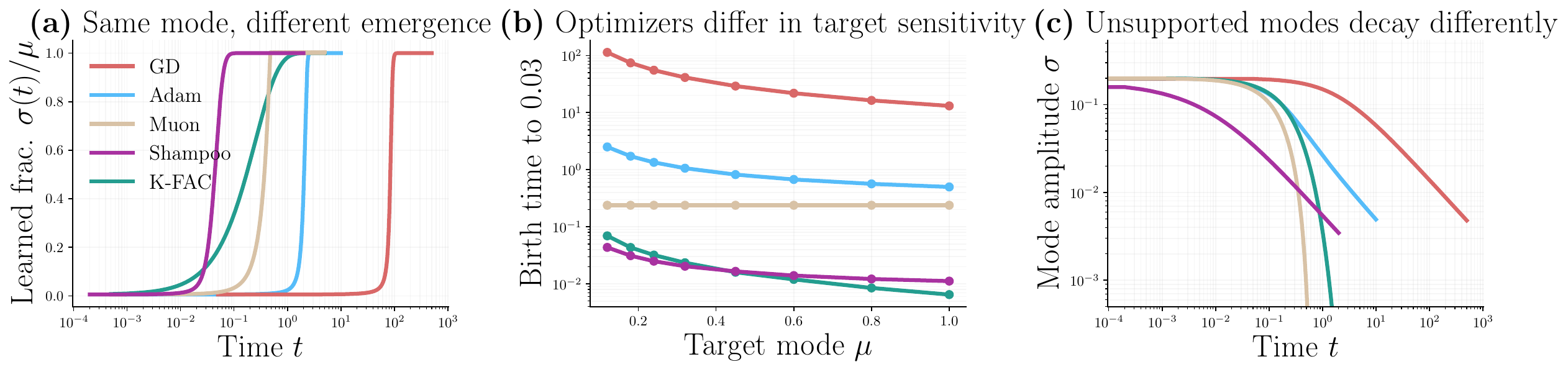}
    \vspace{-0.5cm}
    \caption{\textbf{Optimizer geometry sets the pace of spectral learning evaluation in matrix factorization.} 
    \textbf{(a)} Evolution of a weak target mode with \(\mu=0.18\), shown as the learned fraction \(\sigma(t)/\mu\). GD exhibits the factorization induced small mode barrier, whereas normalization and preconditioning substantially accelerate emergence. 
    \textbf{(b)} Birth time to the common amplitude $b=0.03$ as the target mode $\mu$ varies. While ideal Muon is independent of the target mode ordering during birth, other optimizers are target sensitive. 
    \textbf{(c)} Decay trajectories for an unsupported mode with target $\mu=0$.
    Ideal Muon removes the mode in finite time, undamped K-FAC produces exponential decay, while GD, instantaneous Adam with fixed stabilization, and cumulative Shampoo retain increasingly slow tails near zero. 
    Optimizer geometry therefore determines which spectral components emerge, persist, and disappear.
    }
    \label{fig:intro}
    \vspace{-0.4cm}
\end{figure}

\Cref{fig:intro} provides a broader validation of the theory in addition to the theoretical evaluation done in \Cref{sec:exp}.
\Cref{fig:matrix_factorization_summary} compares the optimization and rank dynamics.
Although the methods eventually reach similar losses and effective ranks, their trajectories differ substantially.
GD learns the target modes sequentially and retains a low effective rank for longer, consistent with the small mode mobility factor $\sigma^p$.
Adam reduces this separation and activates weaker modes much earlier.
Muon produces an even more rapid redistribution of spectral mass, reflecting its normalized spectral update.
Cumulative Shampoo initially behaves more like GD but quickly changes regime as its accumulated preconditioner adapts.
K-FAC exhibits a smoother evolution in which curvature correction reduces the factorization induced separation between mode time scales.
The stable and effective rank curves therefore mainly differ during the transient phase, while their later agreement reflects the unique full observation target rather than identical optimization dynamics.

\begin{figure}[h!]
    \centering
    \includegraphics[width=0.32\linewidth]{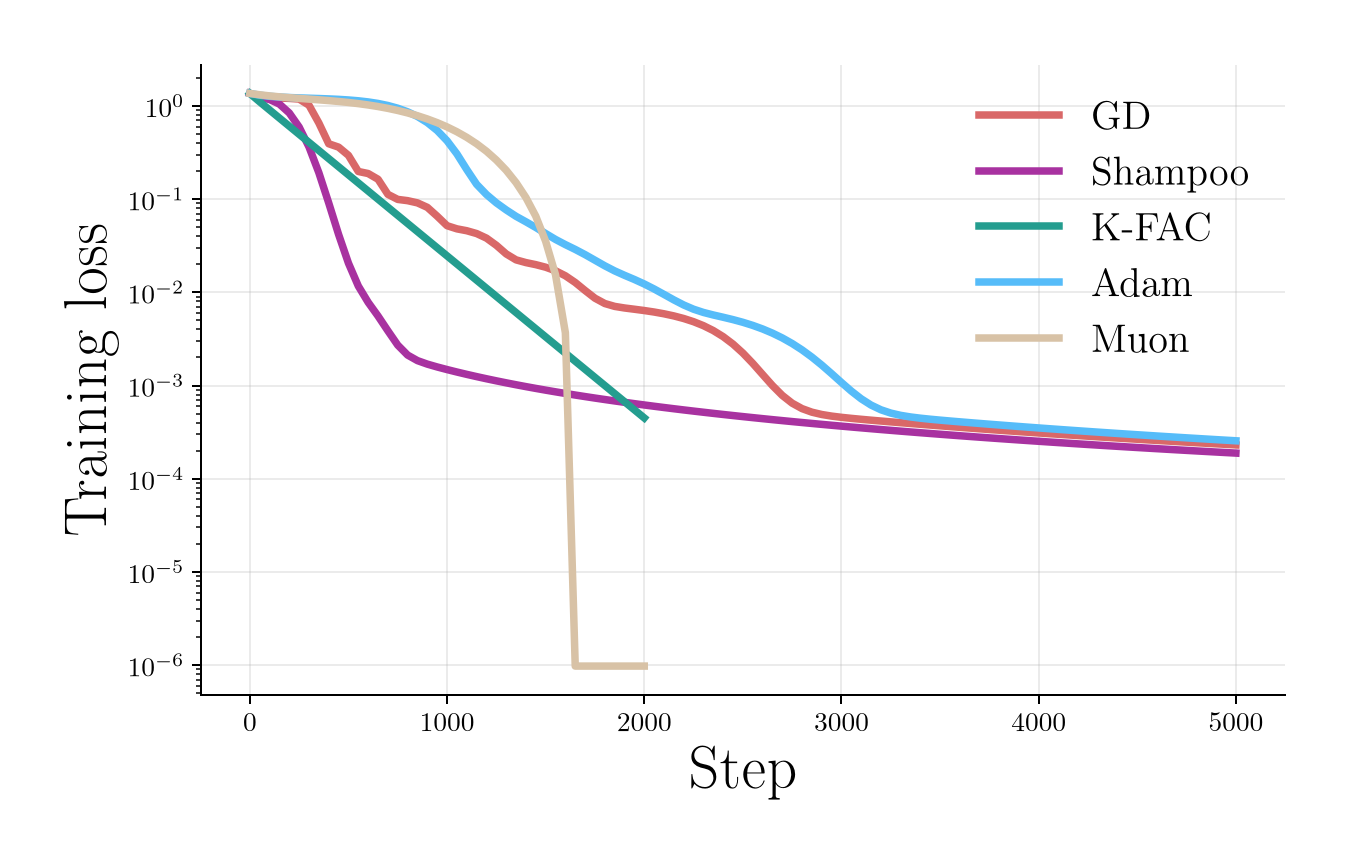}
    \includegraphics[width=0.32\linewidth]{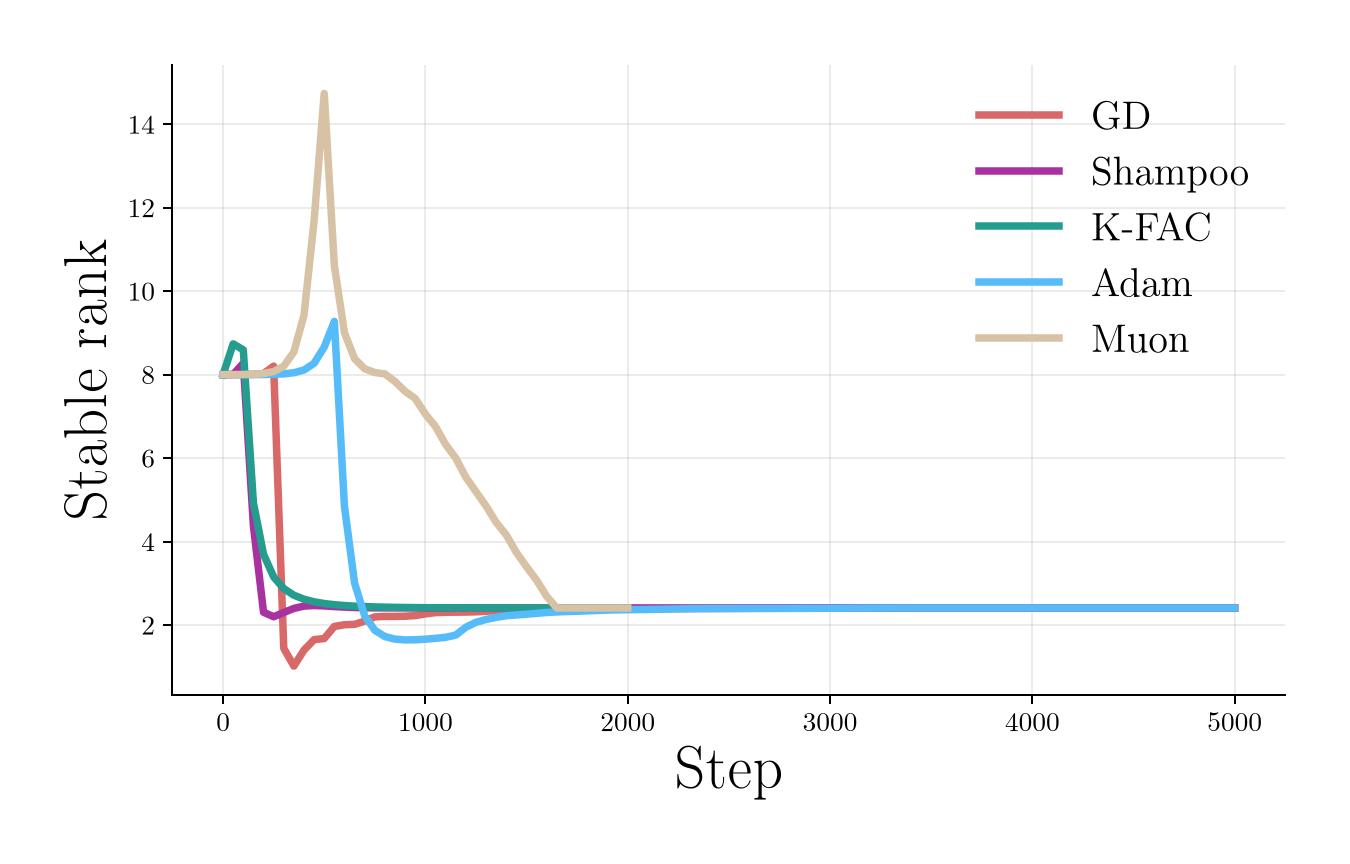}
    \includegraphics[width=0.32\linewidth]{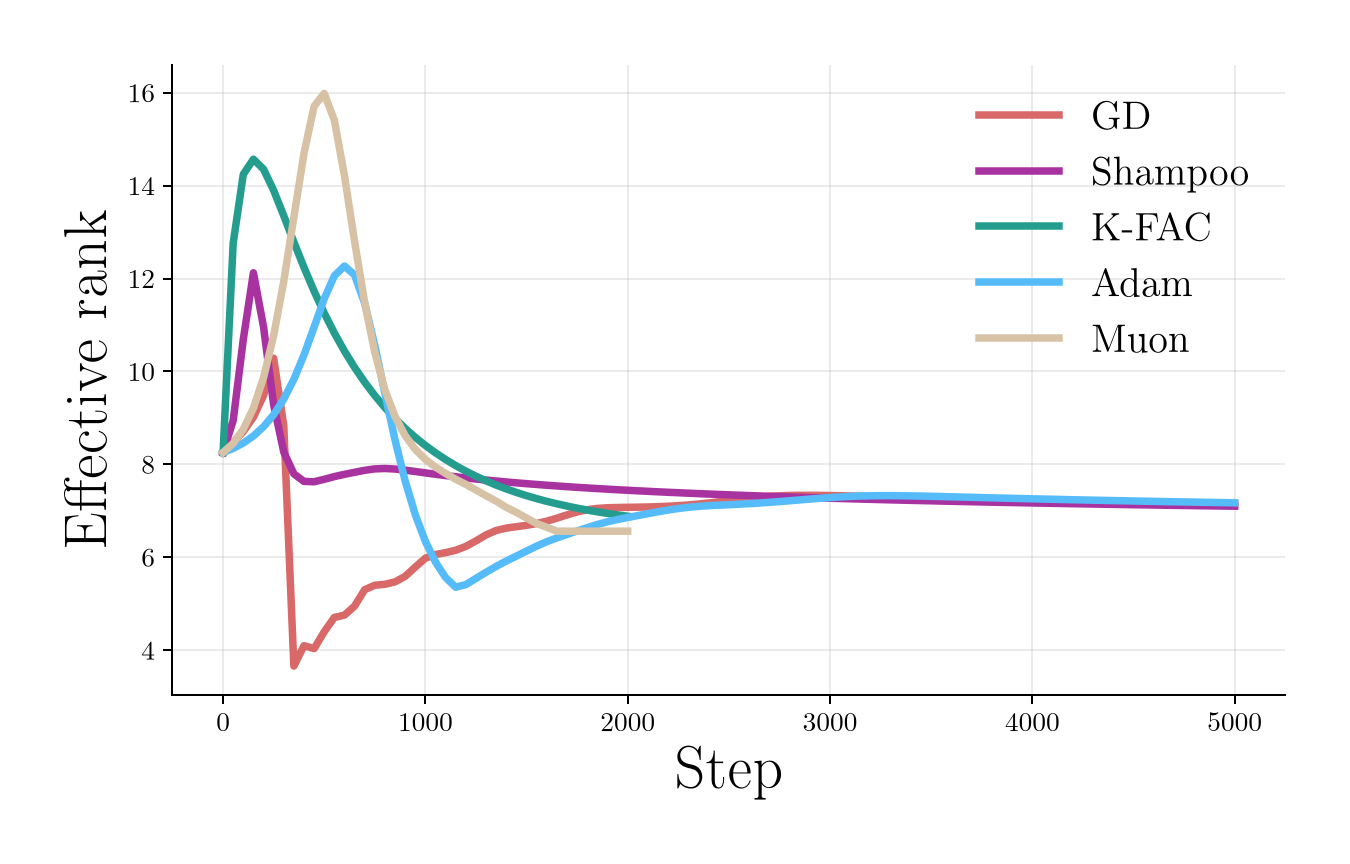}
    \caption{\textbf{Optimization and rank dynamics in matrix factorization.}
All optimizers train the same depth $L=4$ deep linear factorization toward the same rank eight target and reach comparable final training loss.
\textbf{Left:}
training loss over optimization steps.
\textbf{Middle:}
stable rank of $W$.
\textbf{Right:}
effective rank of $W$.
The optimizers exhibit markedly different transient rank dynamics despite converging toward similar final spectra.
GD retains a lower rank representation for longer, whereas Adam, Muon, cumulative Shampoo, and K-FAC alter the rate at which spectral mass is distributed across modes.
}
    \label{fig:matrix_factorization_summary}
\end{figure}

The singular value trajectories in \Cref{fig:matrix_factorization_modes} make these differences more explicit.
Under GD, larger target modes emerge first and weaker modes remain close to initialization for substantially longer times.
This sequential emergence is the dynamical mechanism underlying its transient low rank bias.
Adam substantially compresses these mode dependent delays, while Muon causes the positive modes to follow nearly the same spectral clock until they encounter their different target amplitudes.
Cumulative Shampoo exhibits the predicted two stage behavior: early growth remains separated across modes, after which accumulated gradient history accelerates the weaker modes and reduces target amplitude dependence.
K-FAC produces the weakest separation between mode trajectories, consistent with curvature correction compensating for the factorization induced mobility barrier.
These trajectories illustrate that optimizers reaching similar final solutions can nevertheless pass through very different spectral representations during training.

\begin{figure}[h!]
    \centering
    \begin{subfigure}{0.33\textwidth}
        \centering
        \includegraphics[width=\linewidth]{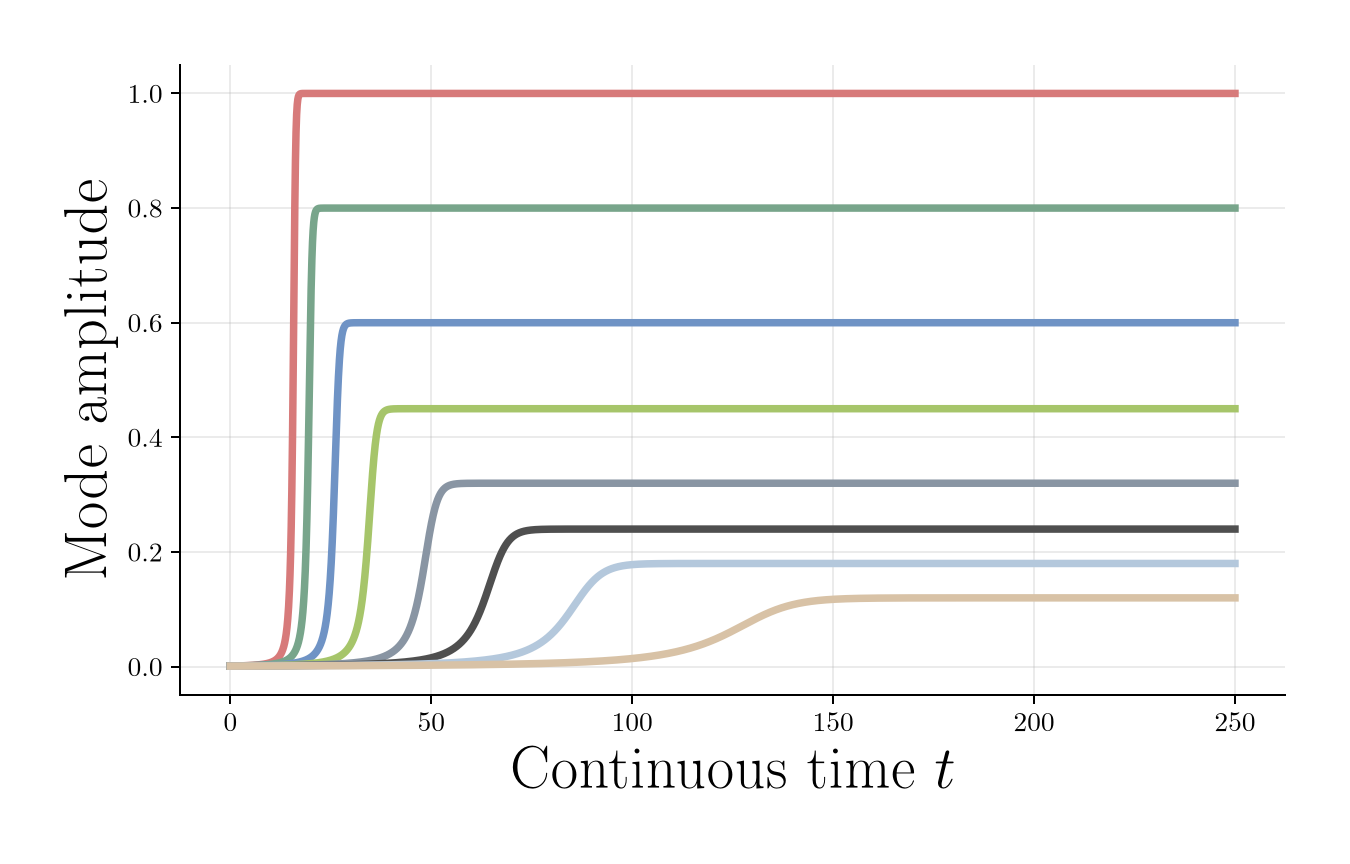}
        \caption{GD}
        \label{fig:gd_mode}
    \end{subfigure}%
    \begin{subfigure}{0.33\textwidth}
        \centering
        \includegraphics[width=\linewidth]{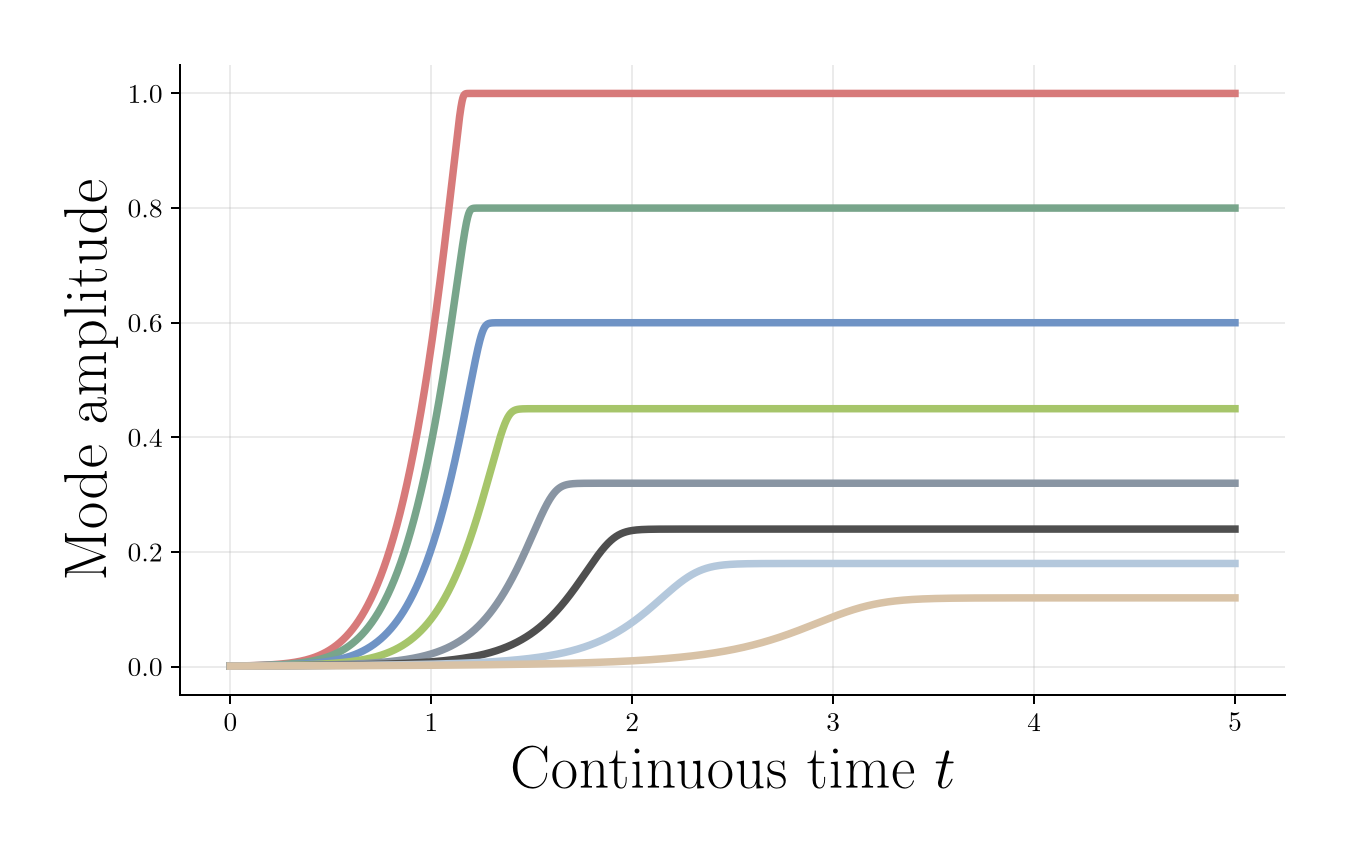}
        \caption{Adam}
        \label{fig:adam_mode}
    \end{subfigure}
    \begin{subfigure}{0.33\textwidth}
        \centering
        \includegraphics[width=\linewidth]{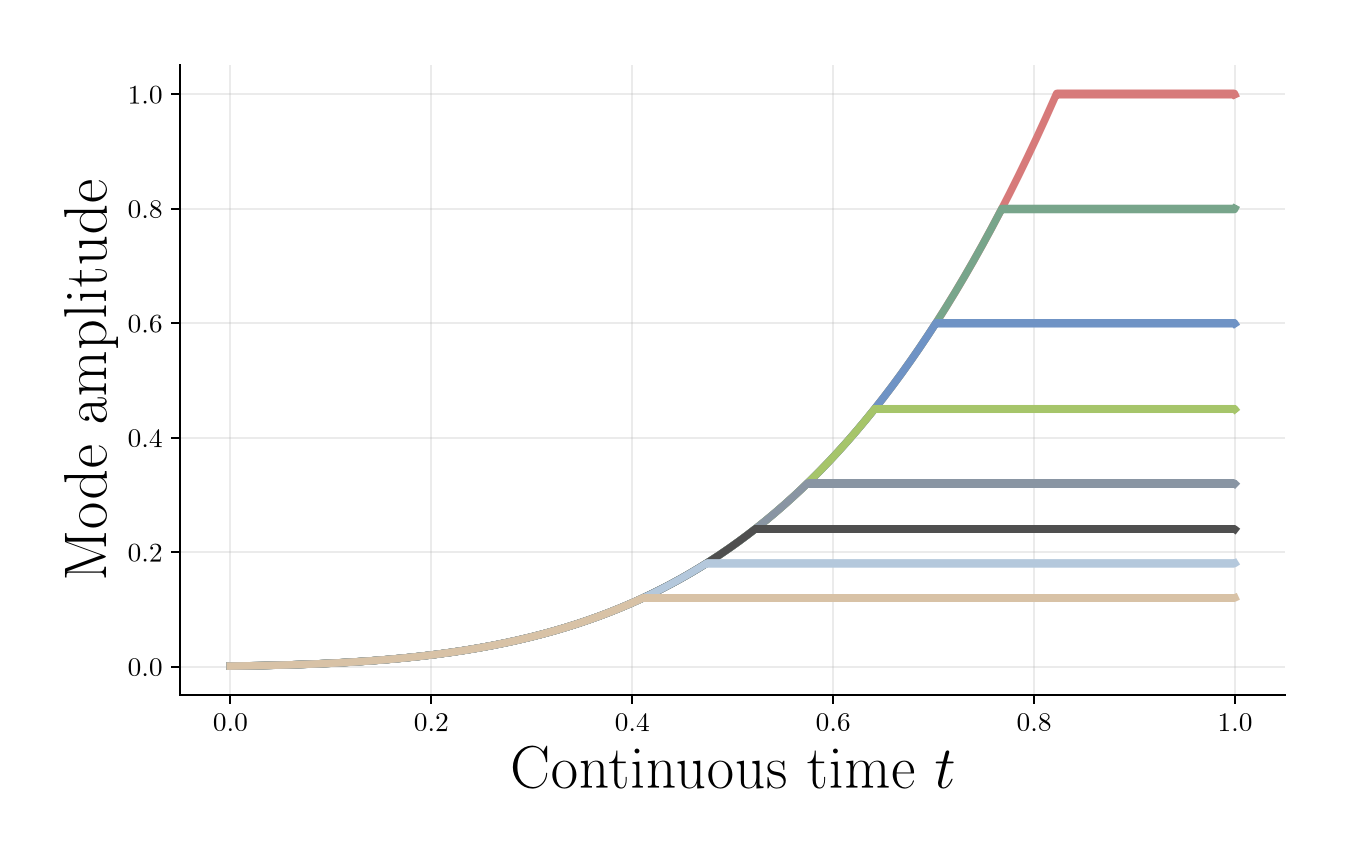}
        \caption{Muon}
        \label{fig:muon_mode}
    \end{subfigure}
    \begin{subfigure}{0.33\textwidth}
        \centering
        \includegraphics[width=\linewidth]{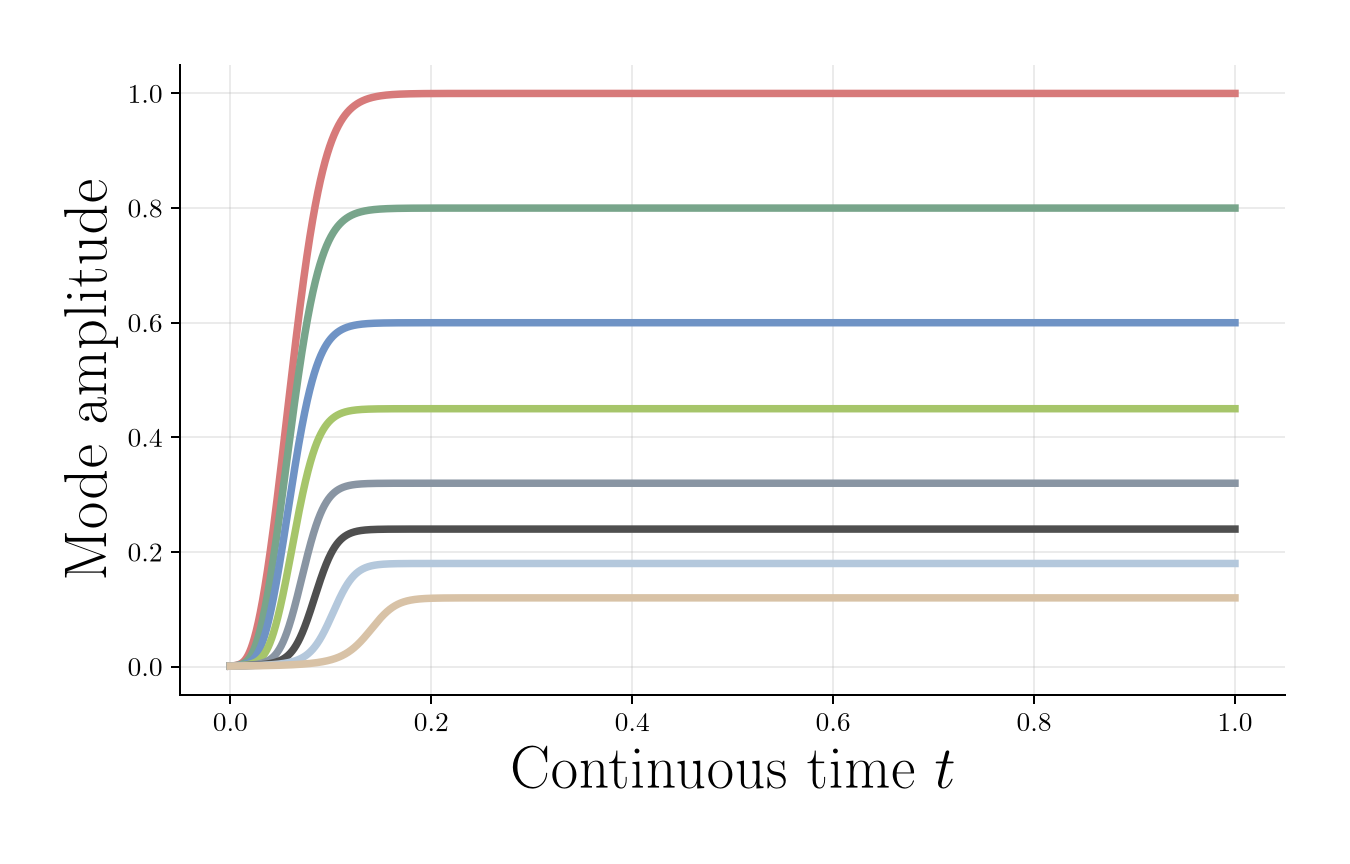}
        \caption{Shampoo}
        \label{fig:shampoo_mode}
    \end{subfigure}
    \begin{subfigure}{0.33\textwidth}
        \centering
        \includegraphics[width=\linewidth]{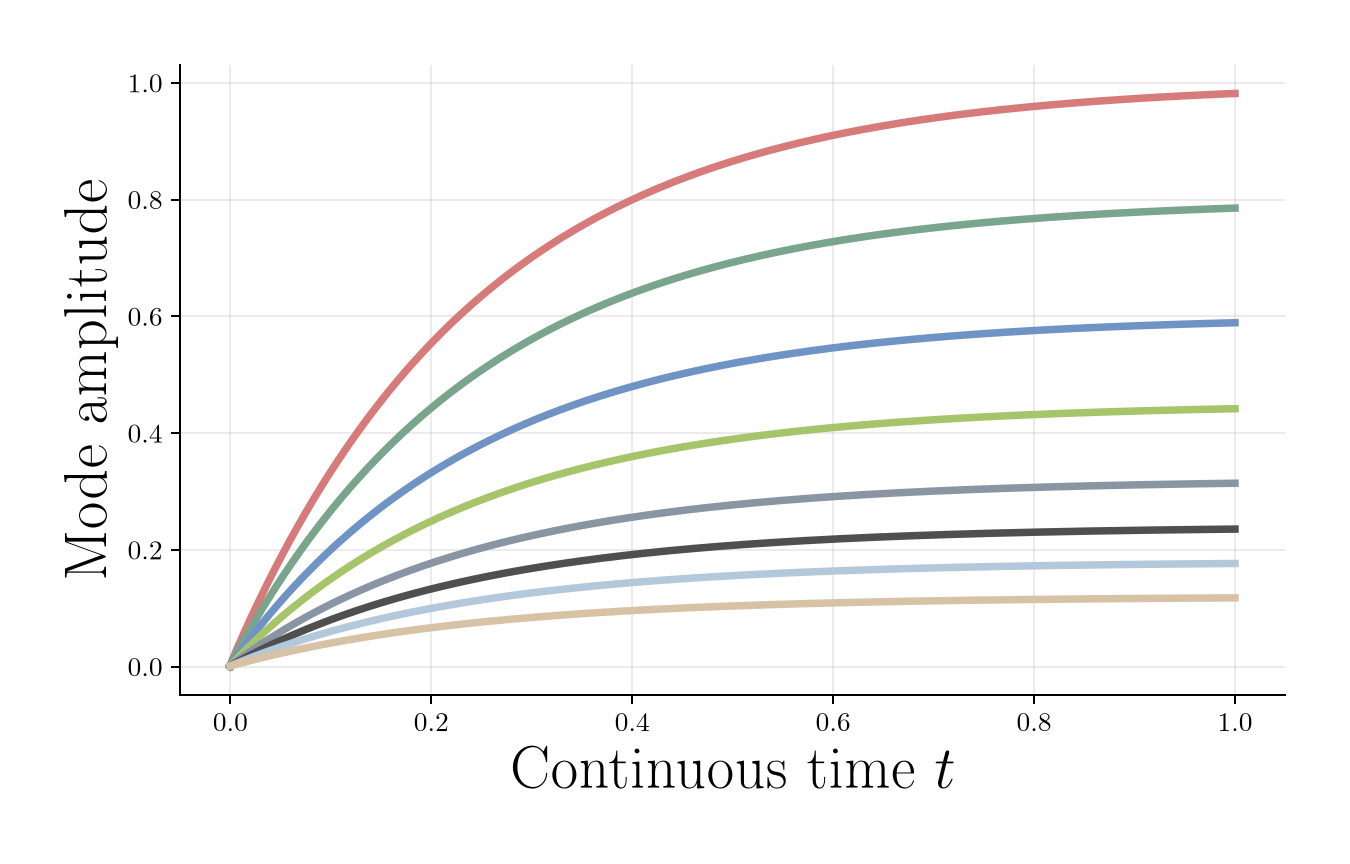}
        \caption{K-FAC}
        \label{fig:kfac_mode}
    \end{subfigure}
    \caption{\textbf{Singular mode trajectories reveal optimizer dependent spectral clocks.} 
GD exhibits strongly sequential learning;
Adam reduces these delays, while Muon makes the modes evolve on nearly the same clock until they approach their respective targets.
Cumulative Shampoo shows 
a faster history adapted growth of weaker modes.
K-FAC further reduces the factorization induced separation between mode time scales through curvature correction.
The common final amplitudes are determined by the unique full observation target, whereas the transient trajectories expose the distinct inductive biases of the optimizers.
}
    \label{fig:matrix_factorization_modes}
\end{figure}

\FloatBarrier

\subsubsection{Role of Depth}
Finally, we vary the factorization depth while keeping the target matrix and effective initialization fixed.
\Cref{fig:depth_phase_transitions} focuses on Adam, cumulative Shampoo, and K-FAC for depths$=\{2,4,8\}$ because their state dependent transitions make this interaction particularly explicit.
For Adam, increasing depth raises both the GD like and normalized exponents and changes the location of the $\epsilon$ controlled crossover.
For Shampoo, depth changes both the initial exponent $p$ and the history adapted exponent $p-1$, while also shifting the point at which accumulated history overtakes the initial damping.
For damped K-FAC, increasing depth strengthens the GD like small mode regime through $p$, whereas the curvature corrected regime continues to approach exponent zero.
The corresponding depth dependence for GD and Muon follows directly from their fixed exponents $p$ and $q$ and is consistent with the phase laws reported in the main text.

\begin{figure}[h!]
    \centering
    \includegraphics[width=0.8\linewidth]{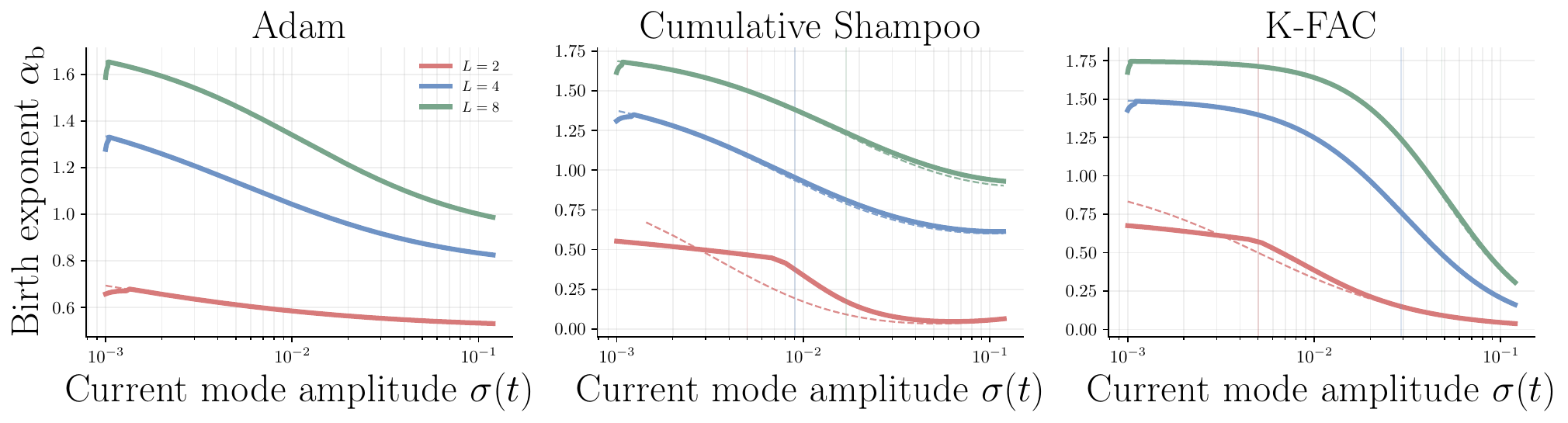}
      \caption{\textbf{Depth changes both the spectral exponents and the optimizer state transitions.}
\textbf{Left:}
for instantaneous Adam, increasing depth changes both the GD like exponent $p=2-2/L$ and the normalized exponent $q=1-1/L$. 
\textbf{Middle:}
for cumulative Shampoo, depth changes the initial GD like exponent $p$, the history adapted exponent $p-1$, and the amplitude at which accumulated history begins to dominate.
\textbf{Right:}
for damped KFAC, deeper factorizations strengthen the small mode GD like barrier while curvature correction still drives the exponent toward zero above the damping controlled mobility threshold.
}
    \label{fig:depth_phase_transitions}
\end{figure}

\Cref{fig:depth_decay_shampoo_saturation} studies the decay exponent of unsupported modes and focuses on Shampoo saturation, where mode order reversal can happen theoretically.
For decay, increasing depth changes the small mode geometry through $p=2-2/L$.
Empirically, GD, Adam, and cumulative Shampoo follow the predicted increase
$
\alpha_d = p+1 = 3-\frac{2}{L},
$
so deeper factorizations produce a steeper polynomial tail near zero.
In contrast, undamped K-FAC remains close to $\alpha_d=1$, reflecting its exponential decay law, while Muon exhibits a much weaker depth dependence, consistent with its normalized geometry.
Thus depth does not affect all optimizers uniformly: it strengthens the Euclidean style barrier for GD like methods, but has much less impact on normalized or curvature corrected decay.

For saturation, we focus on cumulative Shampoo because it is the optimizer for which mode reversal can happen and depth also affects it.
Its asymptotic saturation coefficient depends on both target size and accumulated history,
$
\beta_{\rm Sh}(\mu)=\dfrac{L\mu^p}{D_\star(\mu)}.
$
As a result, the ordering of saturation rates across target modes need not agree with GD.
The right panel shows precisely this effect.
For $L=2$, $\beta$ decreases with $\mu$, so smaller target modes saturate faster than larger ones.
For $L=8$, the trend reverses and larger target modes saturate faster.
The case $L=4$ lies near the transition and becomes weakly nonmonotone.
This depth driven change in ordering is the clearest empirical instance of the Shampoo mode reversal predicted by the theory.

\begin{figure}[h!]
    \centering
    \includegraphics[width=0.38\linewidth]{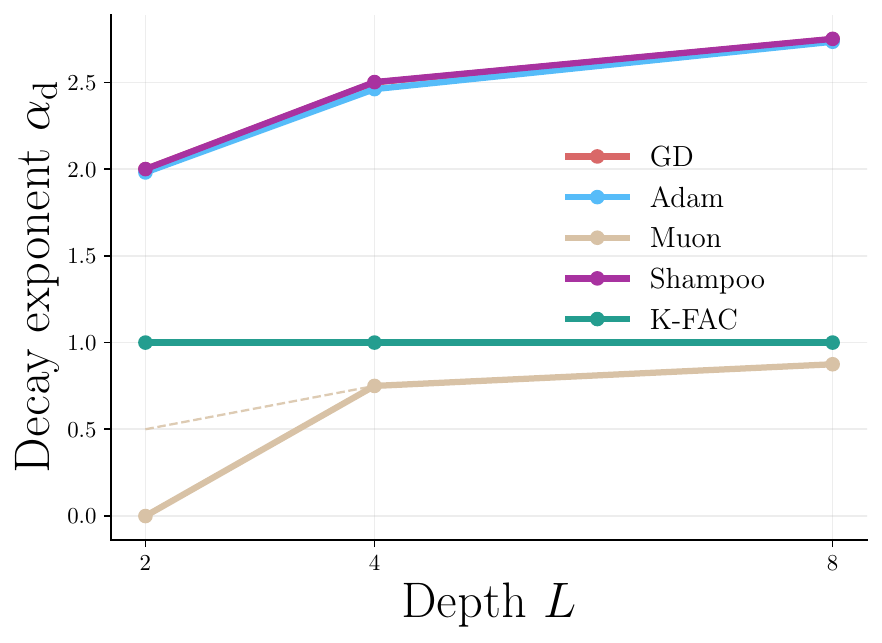}
    \includegraphics[width=0.38\linewidth]{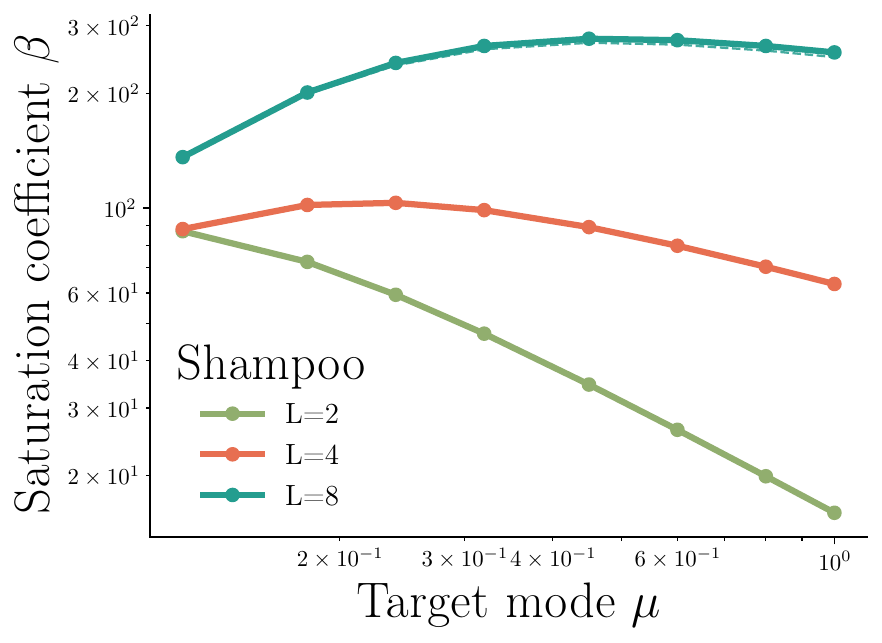}
  \caption{
    \textbf{Effect of depth on decay laws and Shampoo saturation order.}
\textbf{Left:}
empirical decay exponents $\alpha_d$ for unsupported modes as depth varies.
All optimizers closely follow the theoretical predictions in general.
\textbf{Right:}
Shampoo saturation coefficient $\beta$ as a function of the positive target mode $\mu$ for $L\in\{2,4,8\}$.
Shampoo is singled out here because its saturation phase is where mode order reversal can happen theoretically.
At $L=2$, $\beta$ decreases with $\mu$, so smaller target modes saturate faster.
At $L=8$, the ordering reverses and larger target modes saturate faster, while $L=4$ lies near the transition.
These results show that depth influences not only the size of the phase law exponents, but also the ordering of mode fitting under optimizer state dynamics.
}
\label{fig:depth_decay_shampoo_saturation}
\end{figure}

\Cref{fig:L2,fig:L8} show the evolution of training loss and ranks throughout training for depths $L=2$  and $8$, respectively. In both cases, all optimizers converge to similar rank and loss solutions. But their trajectory shows the difference.
Similarly, \Cref{fig:matrix_factorization_modes_l2,fig:matrix_factorization_modes_l8} show the singular mode trajectories for $L=2$ and $8$, respectively. In both cases, we observe that GD exhibits strong sequential learning and at the same time other optimizers also begin to show stronger bias with depth. 

\begin{figure}[h!]
    \centering
    \includegraphics[width=0.32\linewidth]{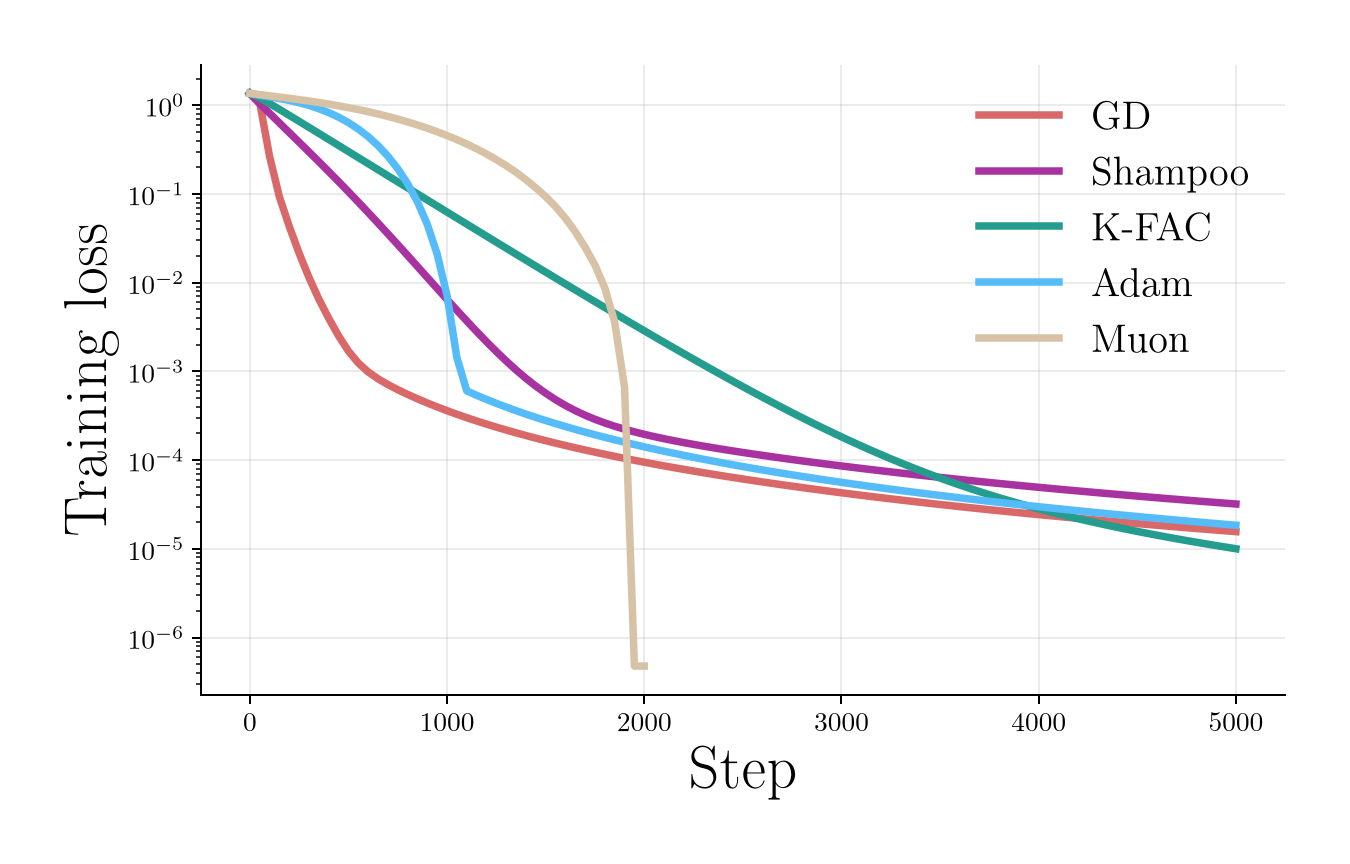}
    \includegraphics[width=0.32\linewidth]{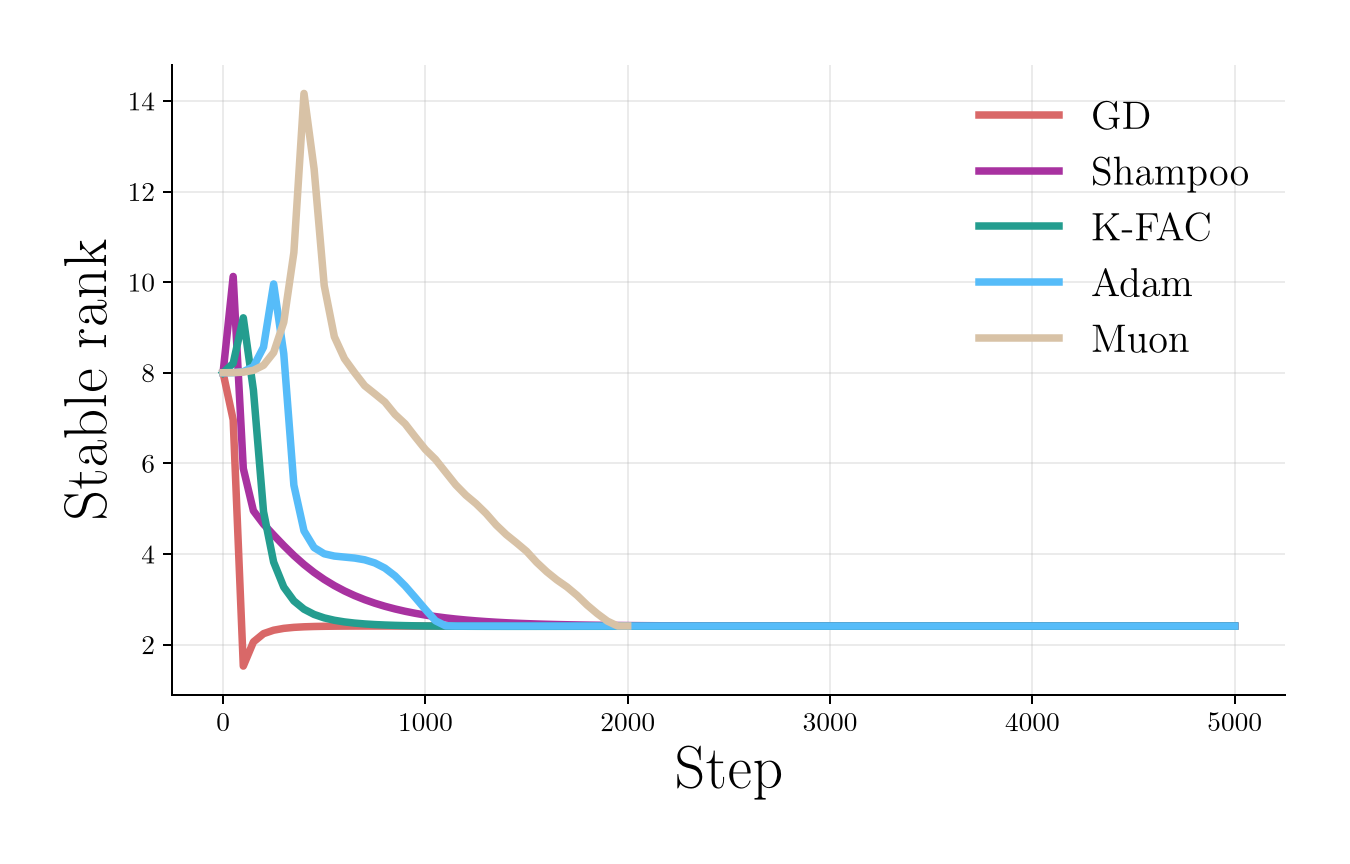}
    \includegraphics[width=0.32\linewidth]{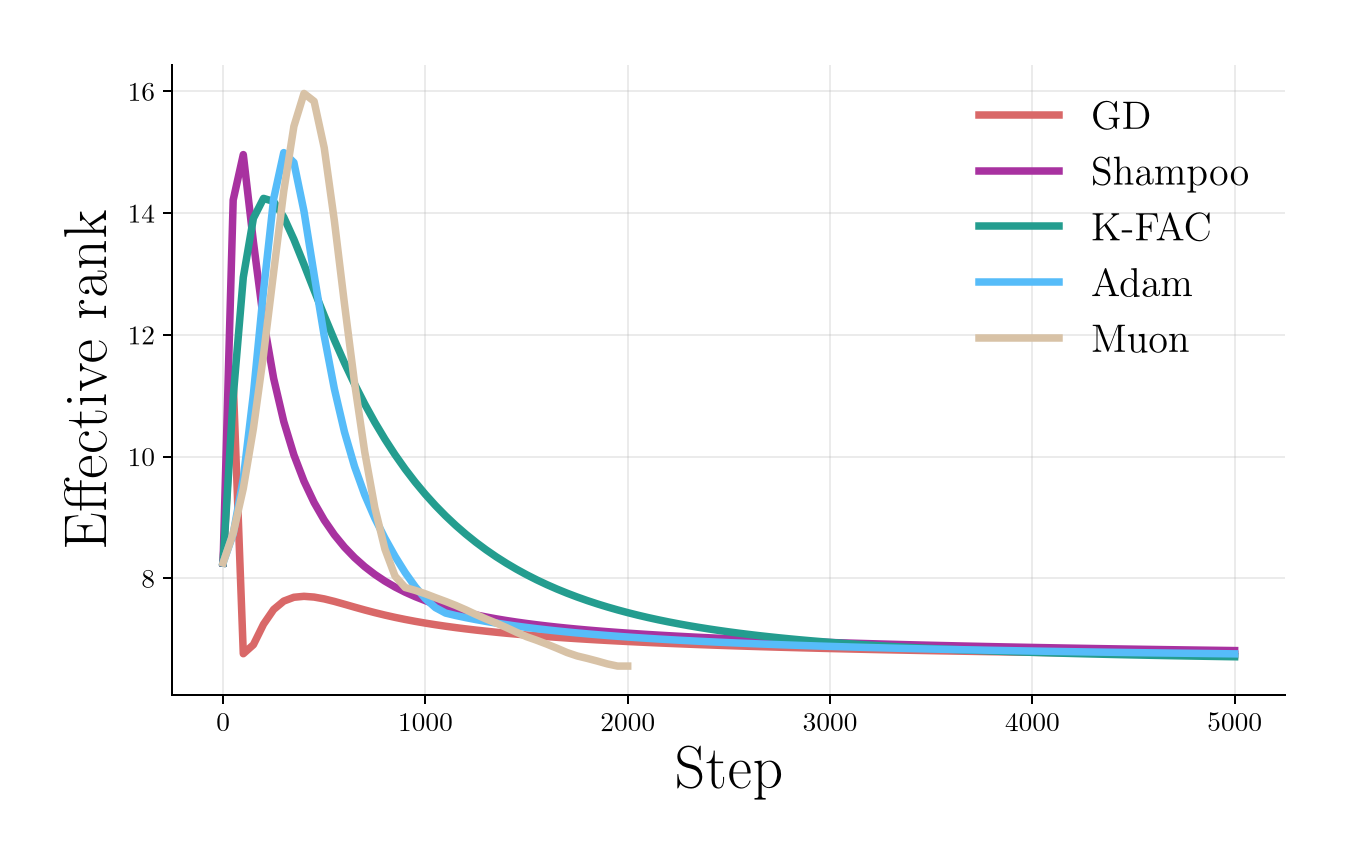}
    \caption{\textbf{Optimization and rank dynamics in full observation matrix factorization.}
All optimizers train the same depth $L=2$ deep linear factorization toward the same rank eight target and reach comparable final training loss.}
    \label{fig:L2}
\end{figure}

\begin{figure}[h!]
    \centering
    \begin{subfigure}{0.33\textwidth}
        \centering
        \includegraphics[width=\linewidth]{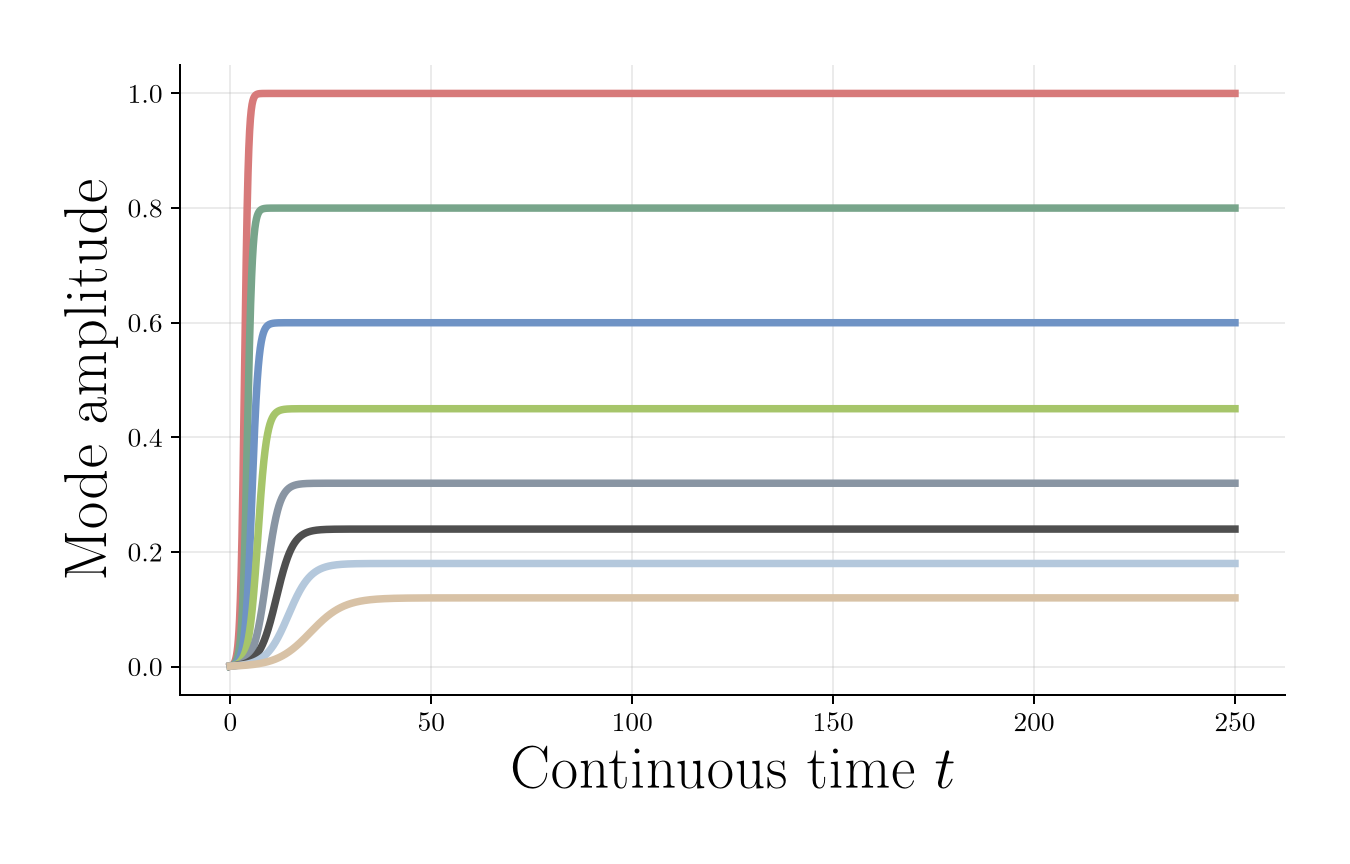}
        \caption{GD}
        \label{fig:gd_mode}
    \end{subfigure}%
    \begin{subfigure}{0.33\textwidth}
        \centering
        \includegraphics[width=\linewidth]{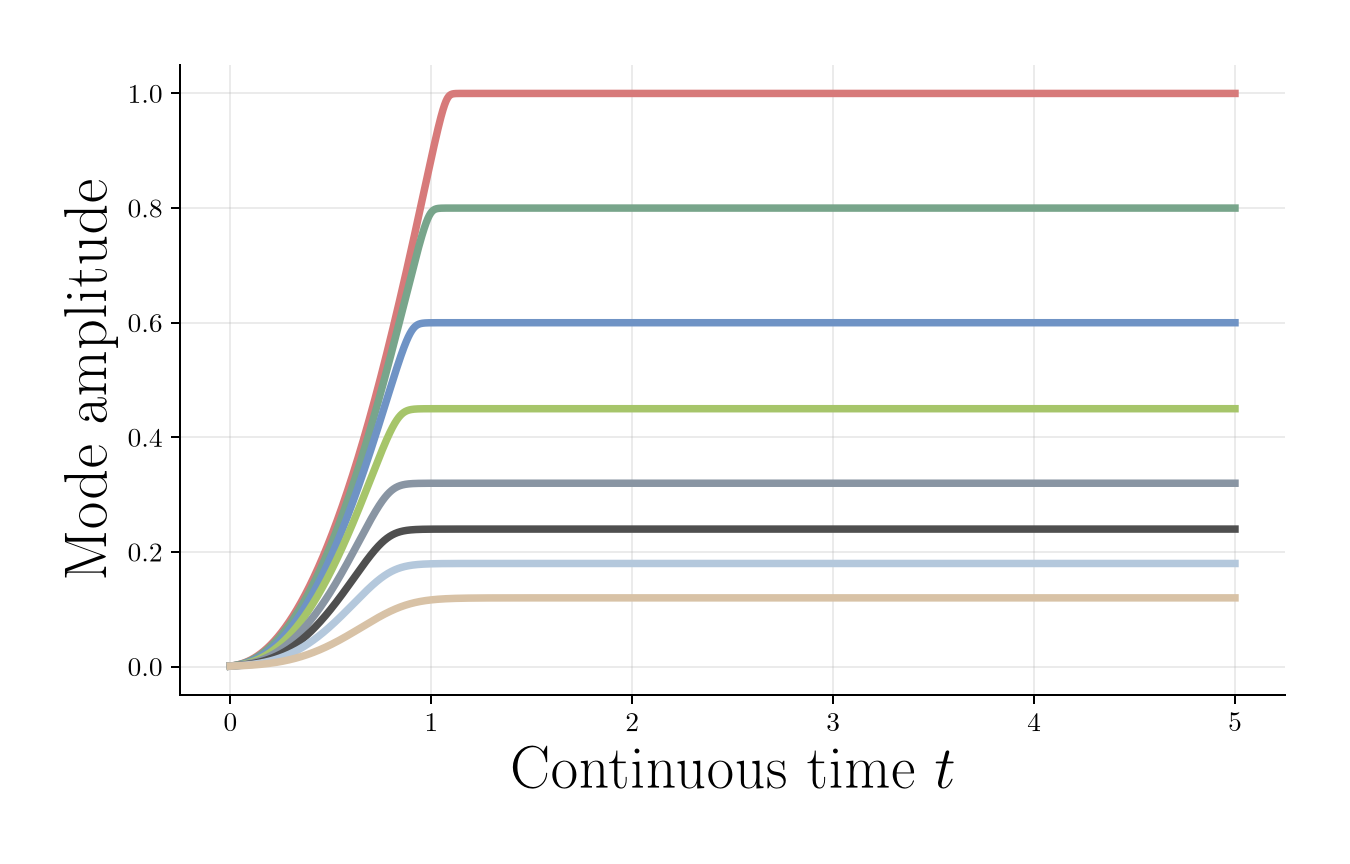}
        \caption{Adam}
        \label{fig:adam_mode}
    \end{subfigure}
    \begin{subfigure}{0.33\textwidth}
        \centering
        \includegraphics[width=\linewidth]{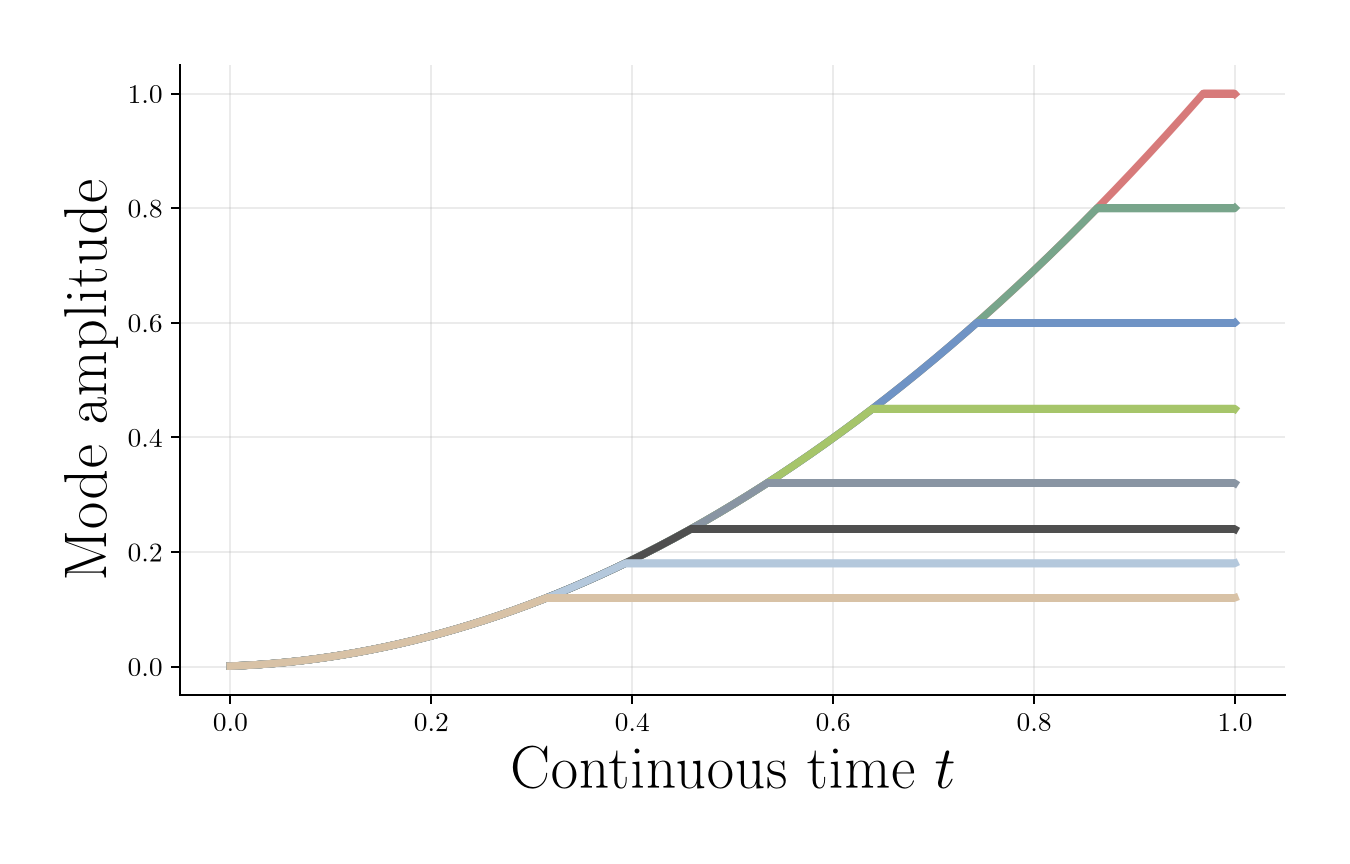}
        \caption{Muon}
        \label{fig:muon_mode}
    \end{subfigure}
    \begin{subfigure}{0.33\textwidth}
        \centering
        \includegraphics[width=\linewidth]{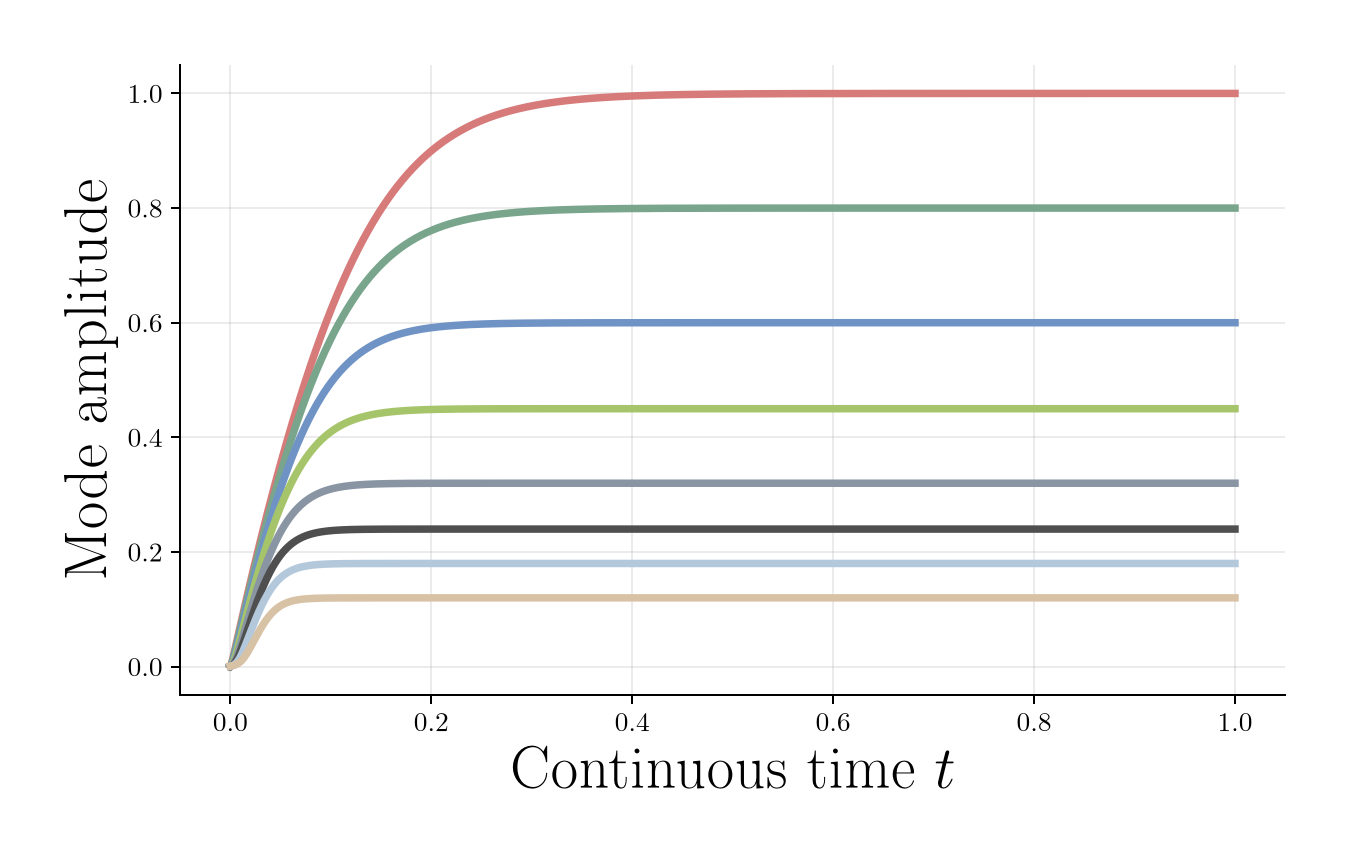}
        \caption{Shampoo}
        \label{fig:shampoo_mode}
    \end{subfigure}
    \begin{subfigure}{0.33\textwidth}
        \centering
        \includegraphics[width=\linewidth]{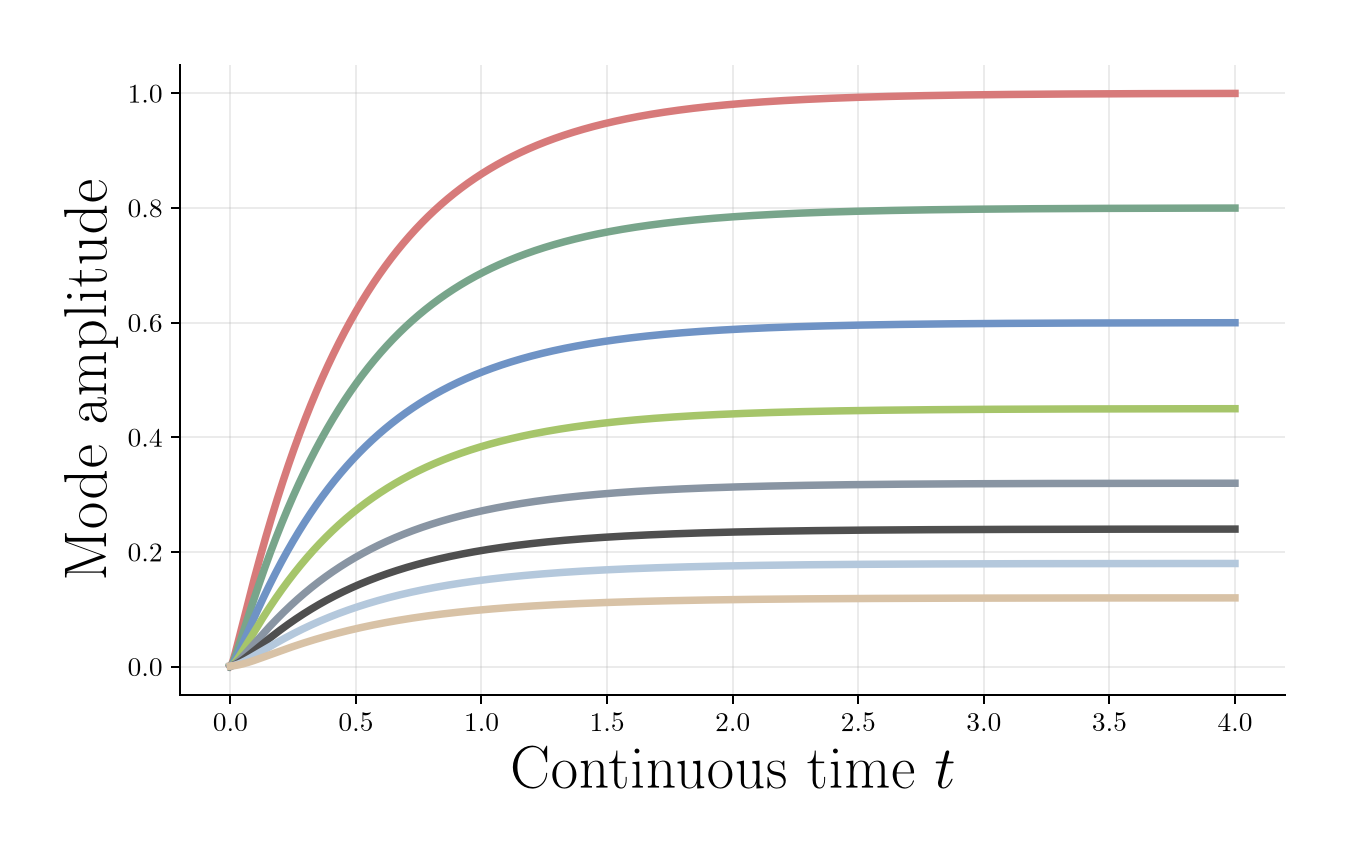}
        \caption{K-FAC (damped)}
        \label{fig:kfac_mode}
    \end{subfigure}
    \caption{\textbf{Singular mode trajectories show optimizer dependent spectral clocks for $L=2$.} }
    \label{fig:matrix_factorization_modes_l2}
\end{figure}

\begin{figure}[h!]
    \centering
    \includegraphics[width=0.32\linewidth]{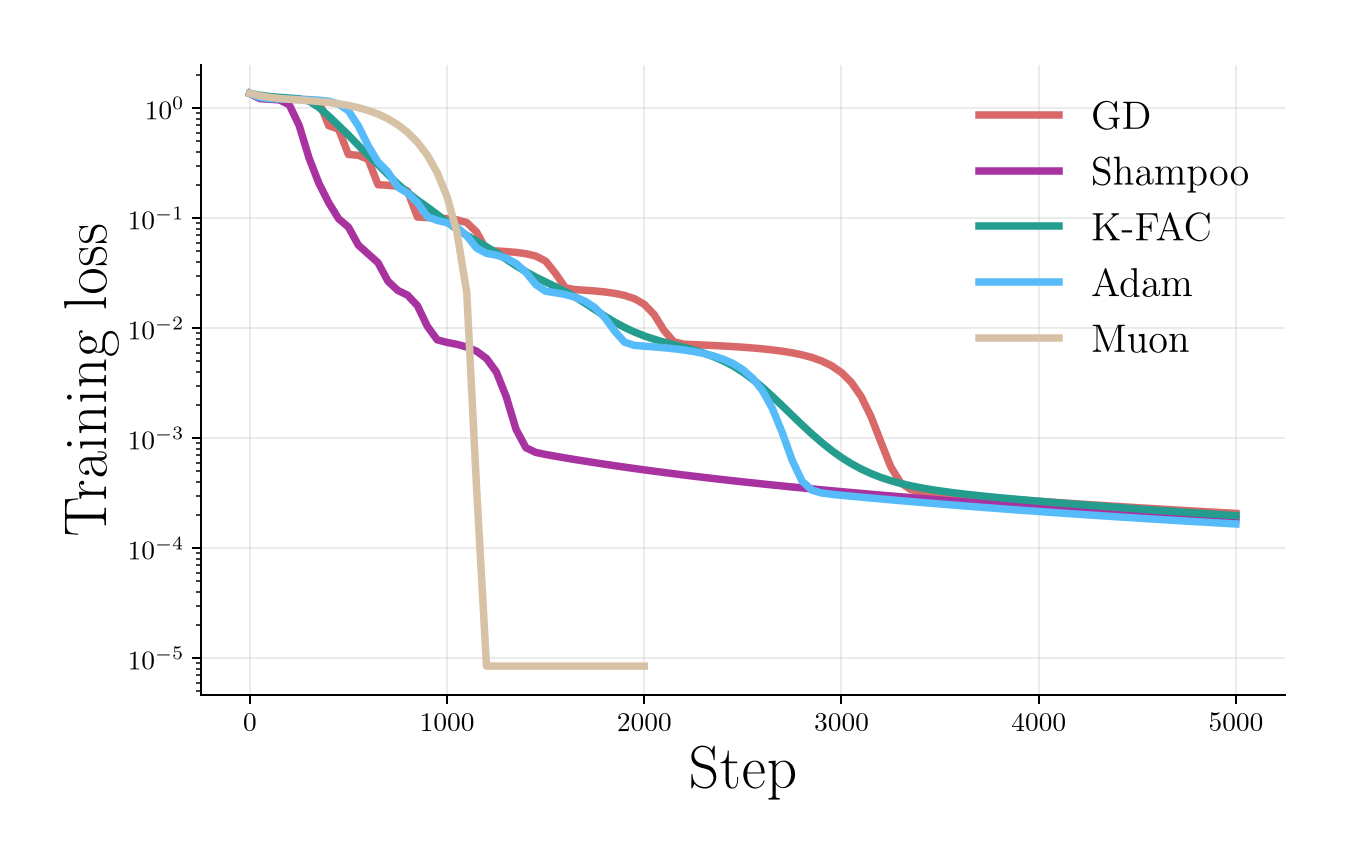}
    \includegraphics[width=0.32\linewidth]{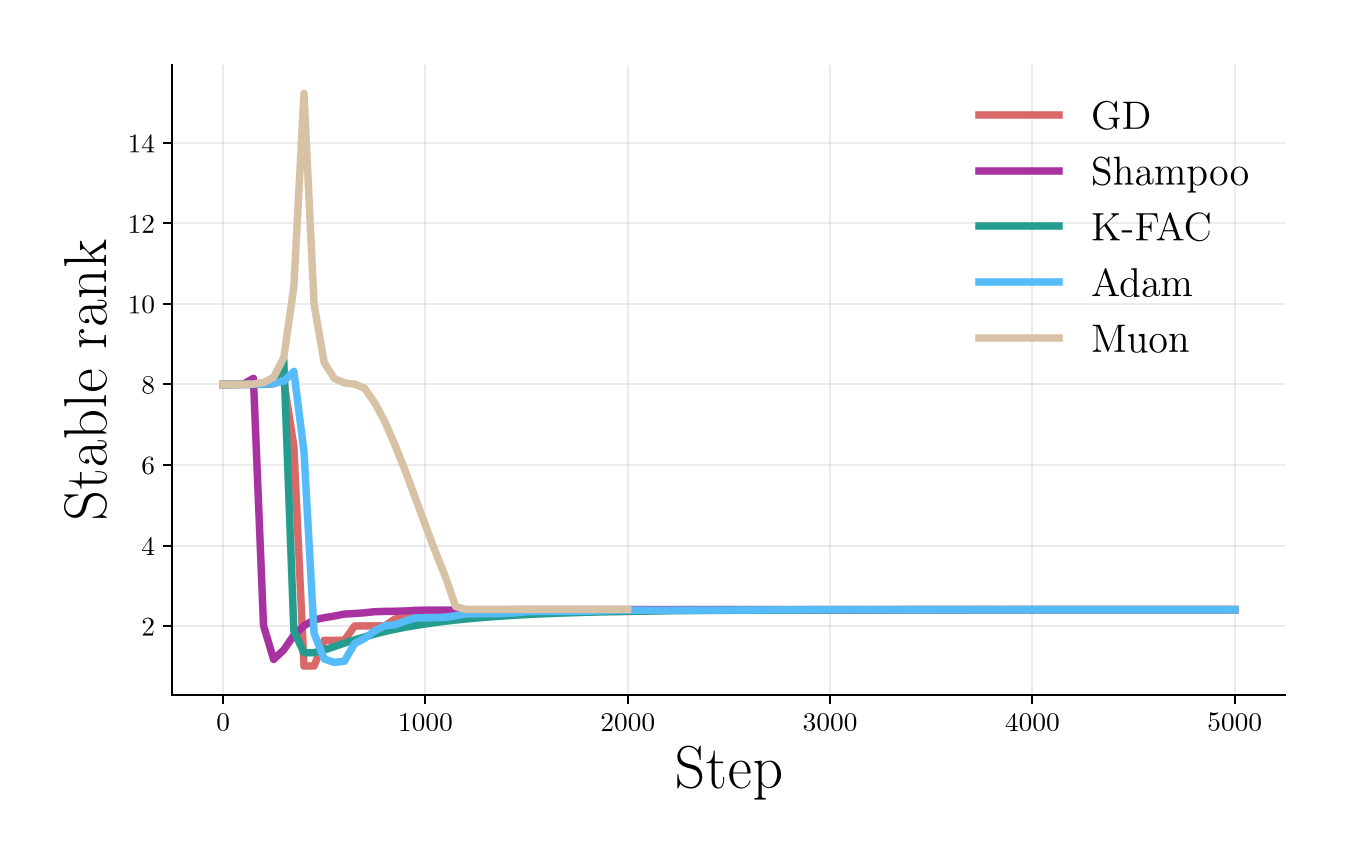}
    \includegraphics[width=0.32\linewidth]{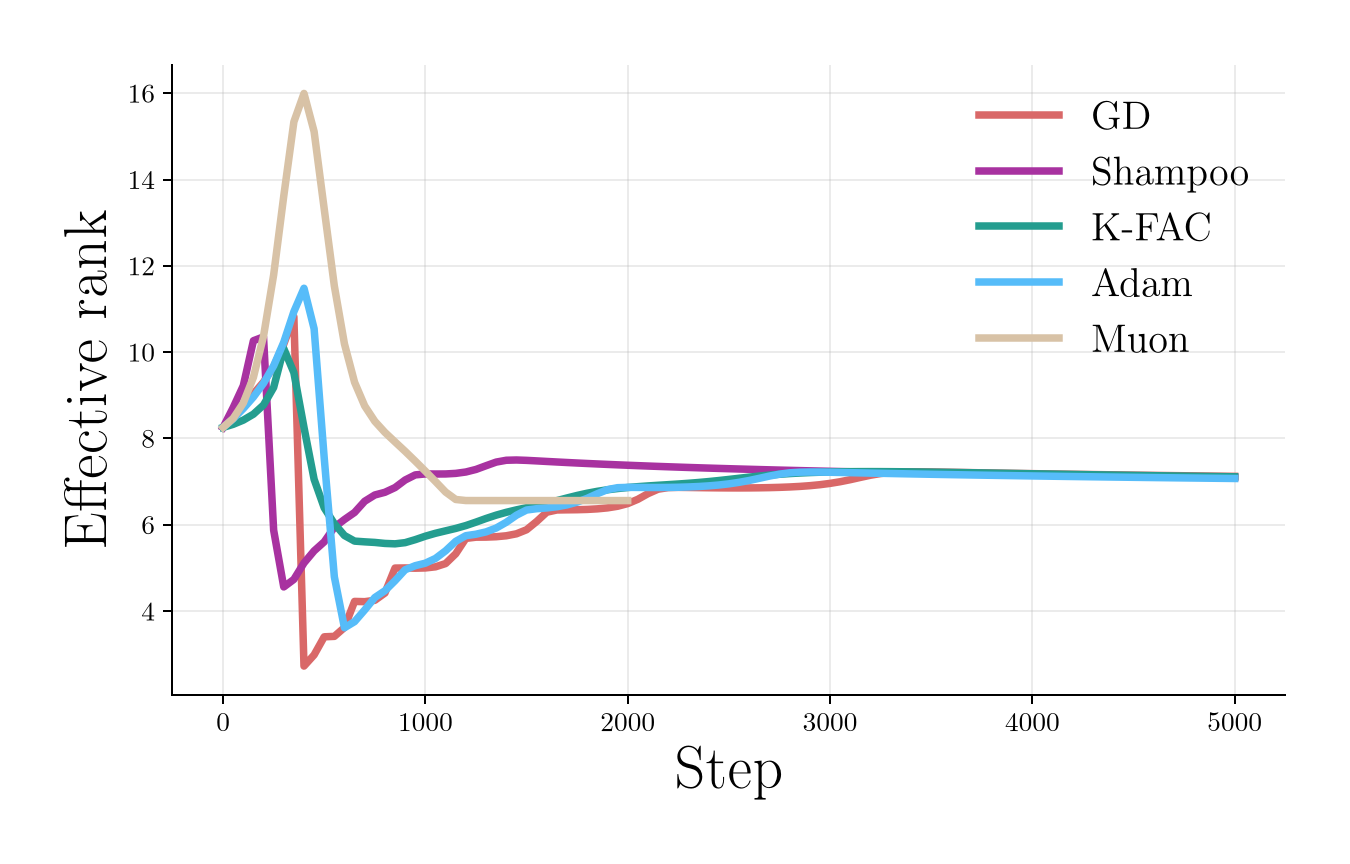}
    \caption{\textbf{Optimization and rank dynamics in full observation matrix factorization.}
All optimizers train the same depth $L=8$ deep linear factorization toward the same rank eight target and reach comparable final training loss.}
    \label{fig:L8}
\end{figure}

\begin{figure}[h!]
    \centering
    \begin{subfigure}{0.33\textwidth}
        \centering
        \includegraphics[width=\linewidth]{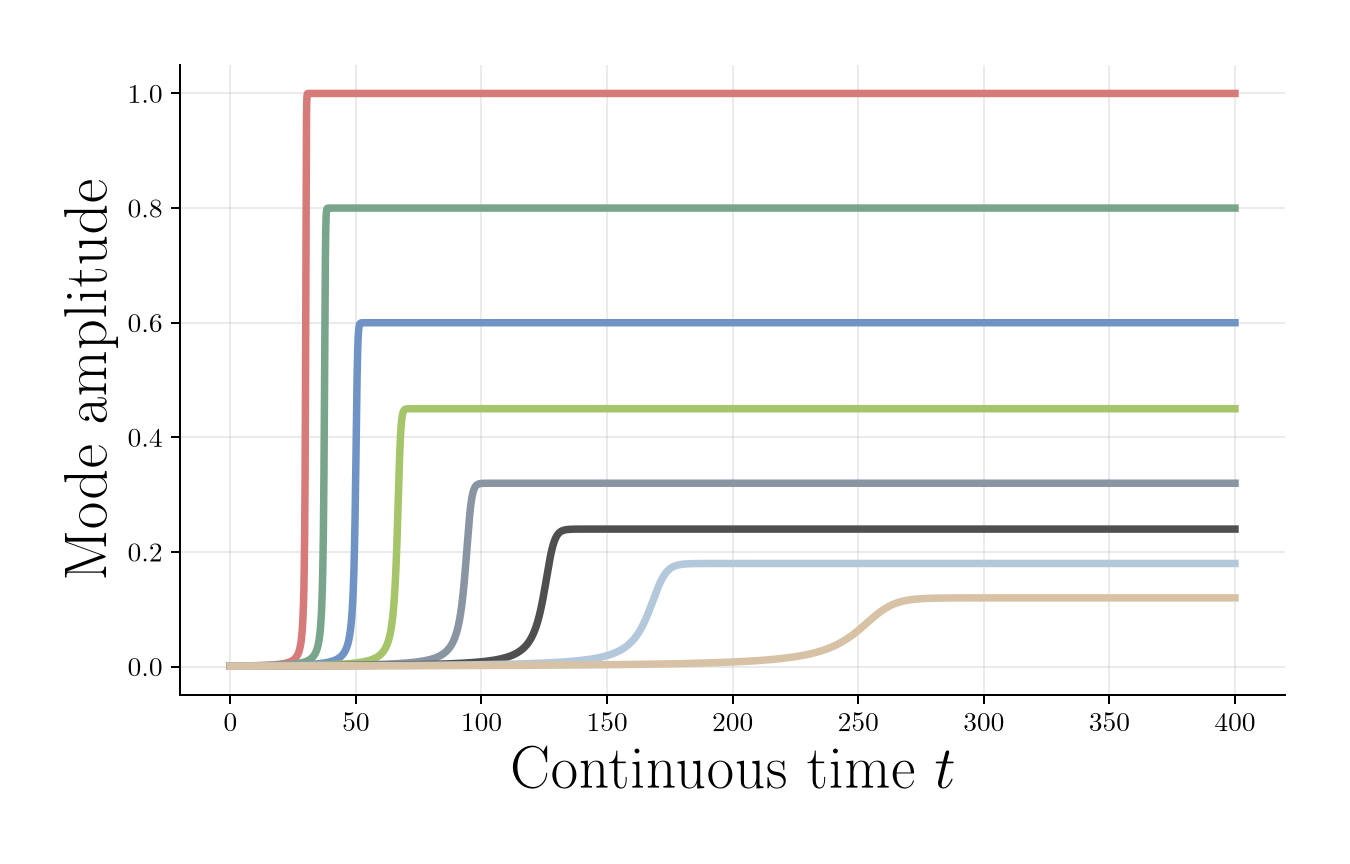}
        \caption{GD}
        \label{fig:gd_mode}
    \end{subfigure}%
    \begin{subfigure}{0.33\textwidth}
        \centering
        \includegraphics[width=\linewidth]{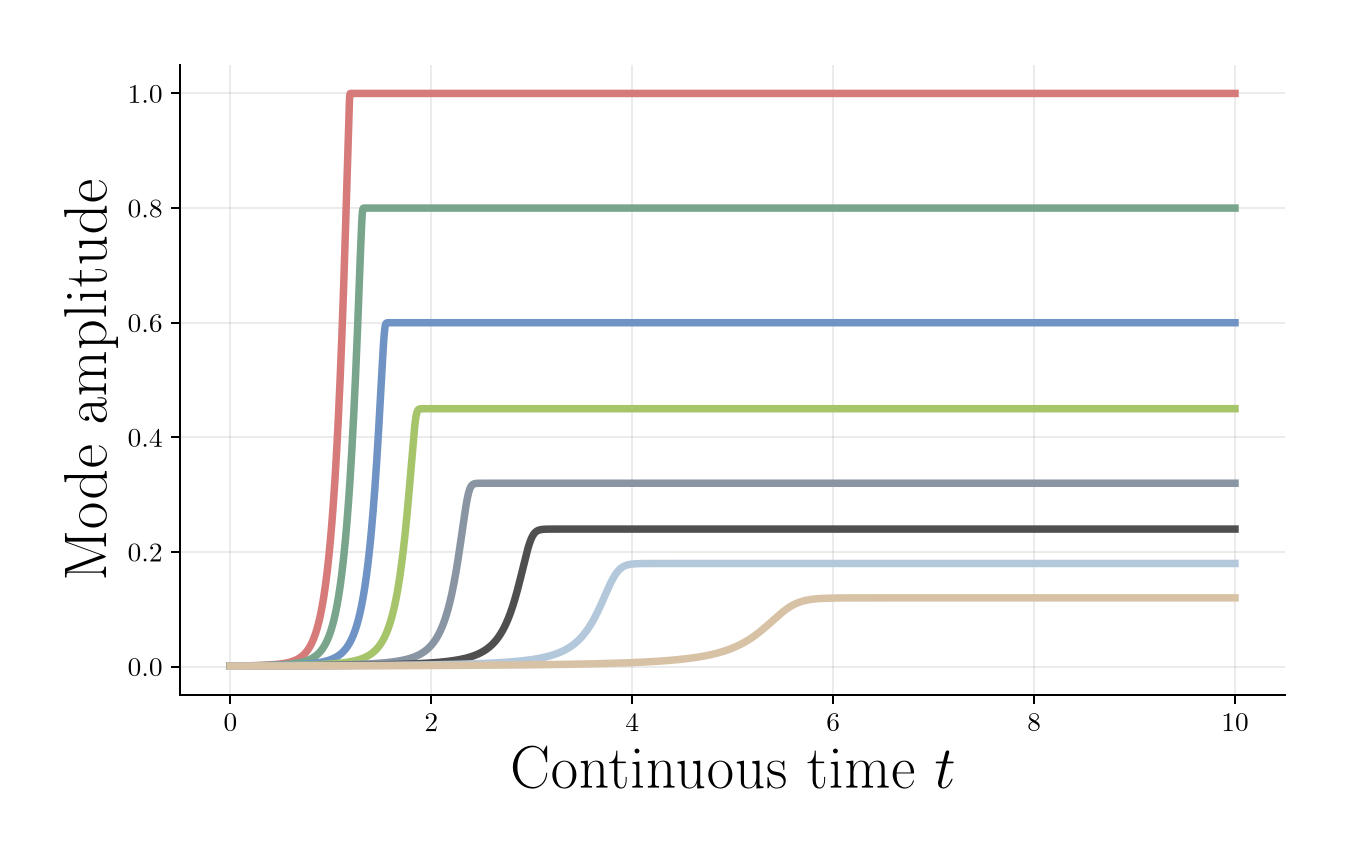}
        \caption{Adam}
        \label{fig:adam_mode}
    \end{subfigure}
    \begin{subfigure}{0.33\textwidth}
        \centering
        \includegraphics[width=\linewidth]{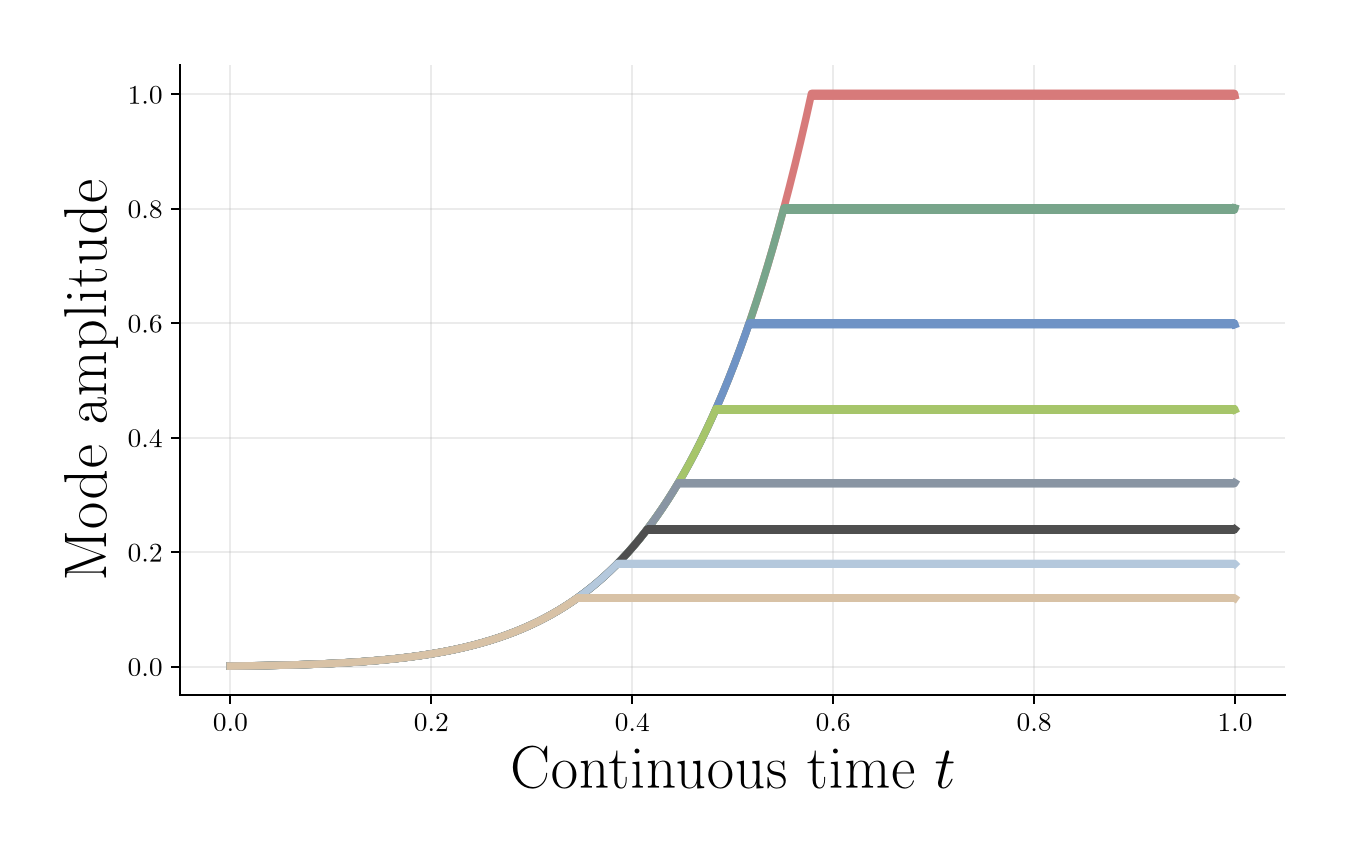}
        \caption{Muon}
        \label{fig:muon_mode}
    \end{subfigure}
    \begin{subfigure}{0.33\textwidth}
        \centering
        \includegraphics[width=\linewidth]{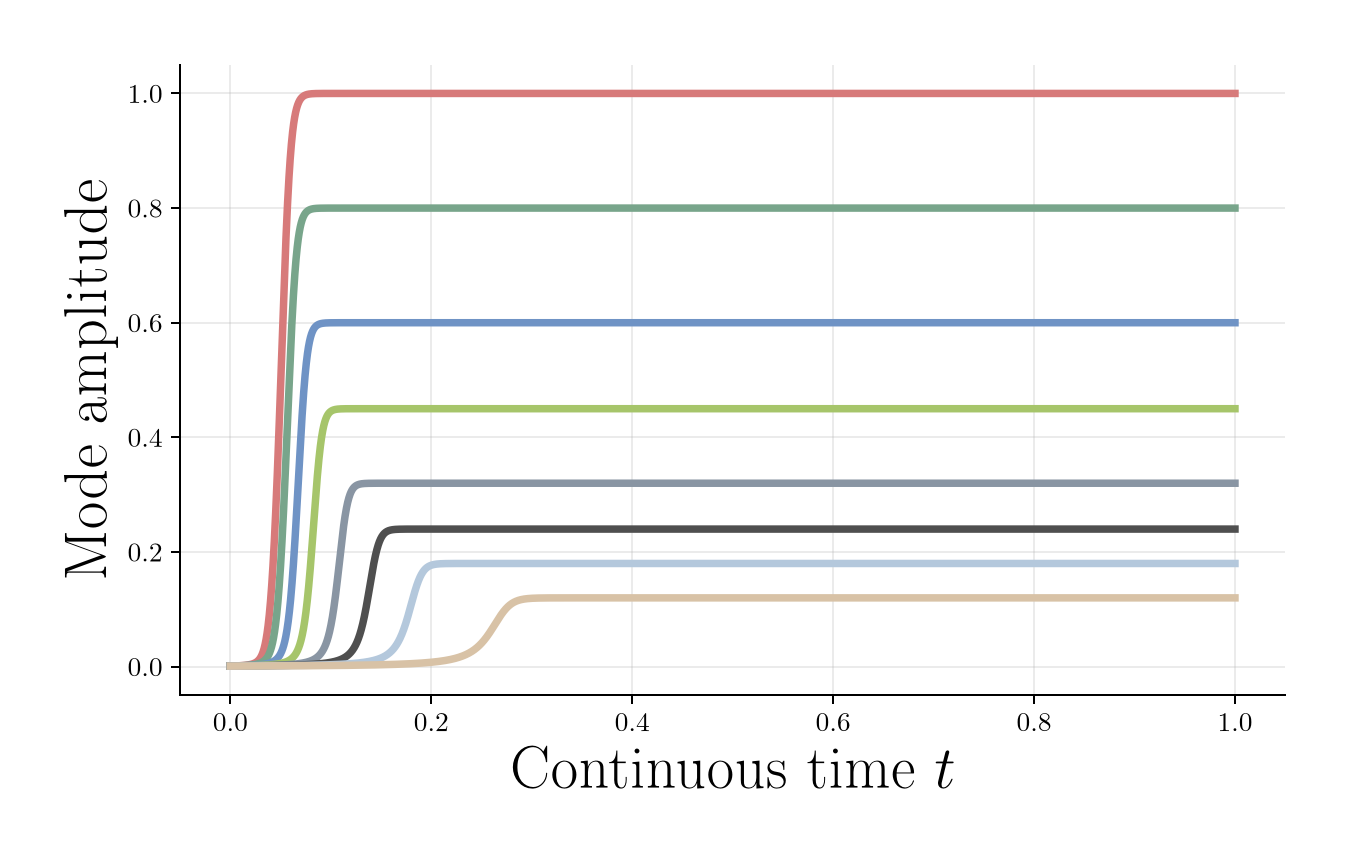}
        \caption{Shampoo}
        \label{fig:shampoo_mode}
    \end{subfigure}
    \begin{subfigure}{0.33\textwidth}
        \centering
        \includegraphics[width=\linewidth]{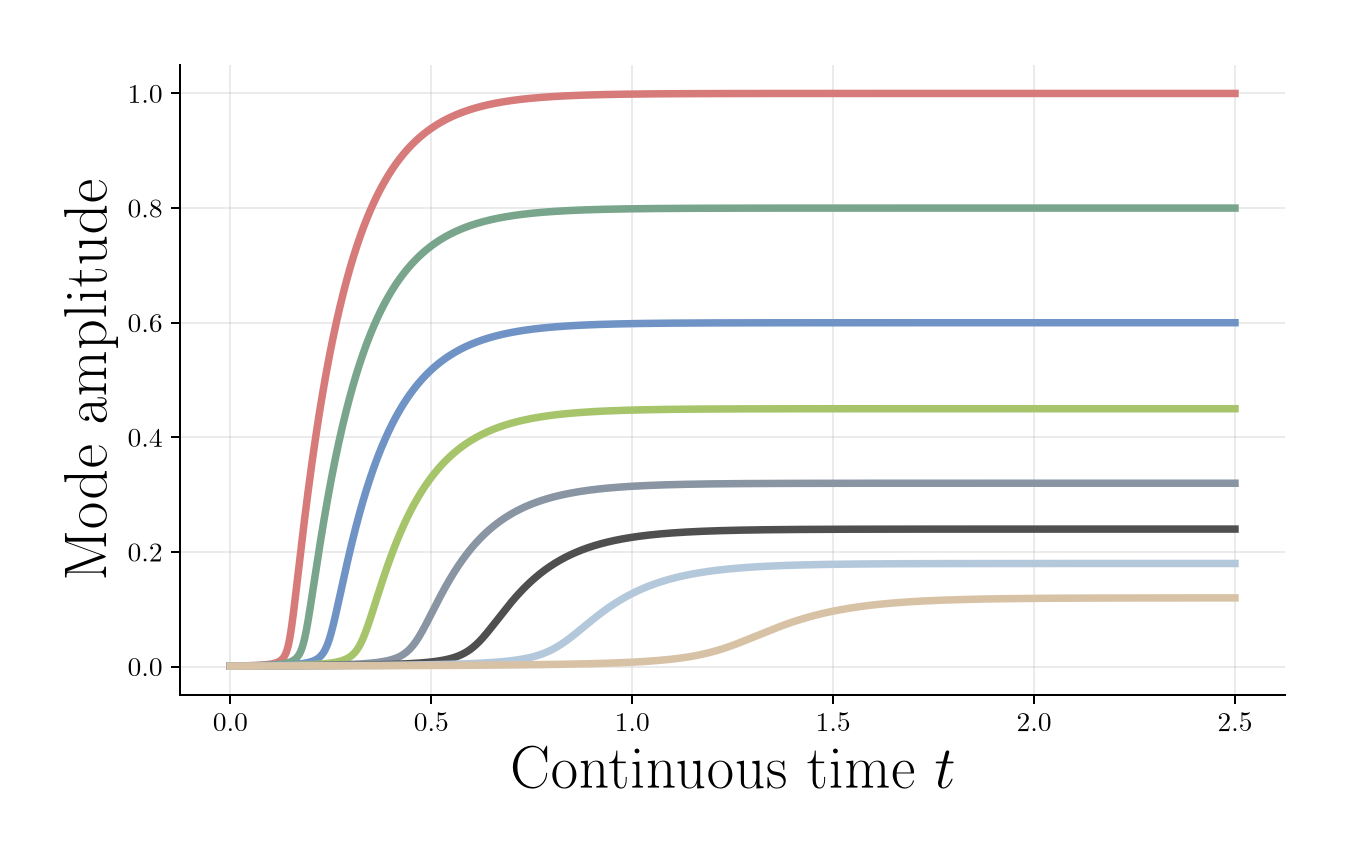}
        \caption{K-FAC (damped)}
        \label{fig:kfac_mode}
    \end{subfigure}
    \caption{\textbf{Singular mode trajectories show optimizer dependent spectral clocks for $L=8$.} 
    }
    \label{fig:matrix_factorization_modes_l8}
\end{figure}

\FloatBarrier

\subsubsection{Adam Phase Transitions}
In addition to the birth exponent plot in \Cref{fig:adam}, here we provide the validation of the theoretical decay exponent and saturation rate in \Cref{fig:adam_decay_beta}. Both align well with the theory.
For saturation rate, we choose to plot normalized residual $x=\dfrac{r\mu^q}{\epsilon}$ vs normalized saturation rate $\dfrac{\beta}{L\mu^p/\epsilon}$ for ease of representation. 
Theoretically,  near the saturation $\sigma \approx \mu$, so $\beta = \dfrac{L \mu^p}{r\mu^q + \epsilon}$. Therefore, the normalized saturation rate reduces to 
$$
\frac{\epsilon}{r\mu^q + \epsilon} = \frac{1}{1+x}.
$$
The normalized representation also clarifies why comparing the raw saturation rates can be misleading.
The limiting rate scales as $1/\epsilon$, so changing $\epsilon$ by several orders of magnitude produces the same change in the vertical scale even when the underlying dynamics are identical after rescaling.
In contrast, the normalized variables remove this trivial dependence.

\begin{figure}[h!]
    \centering
    \includegraphics[width=0.32\linewidth]{figures/iclr/adam/adam_state_transitions.pdf}  
    \includegraphics[width=0.32\linewidth]{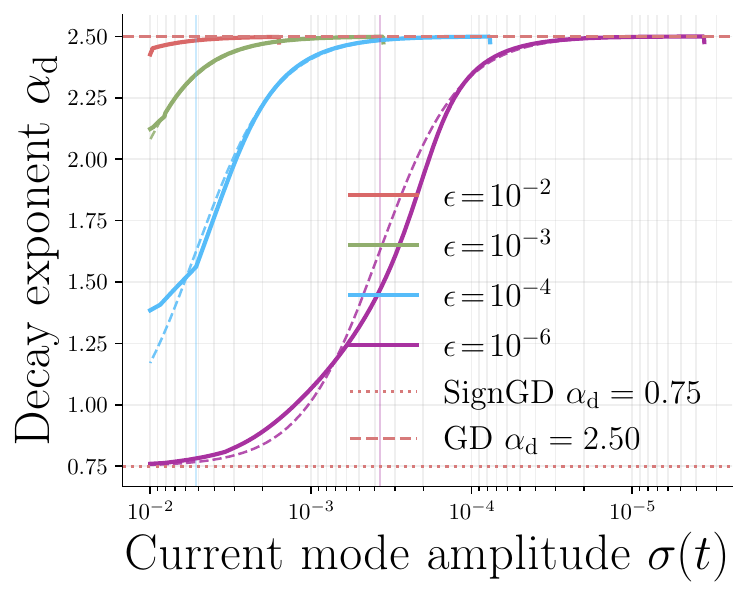}    \includegraphics[width=0.32\linewidth]{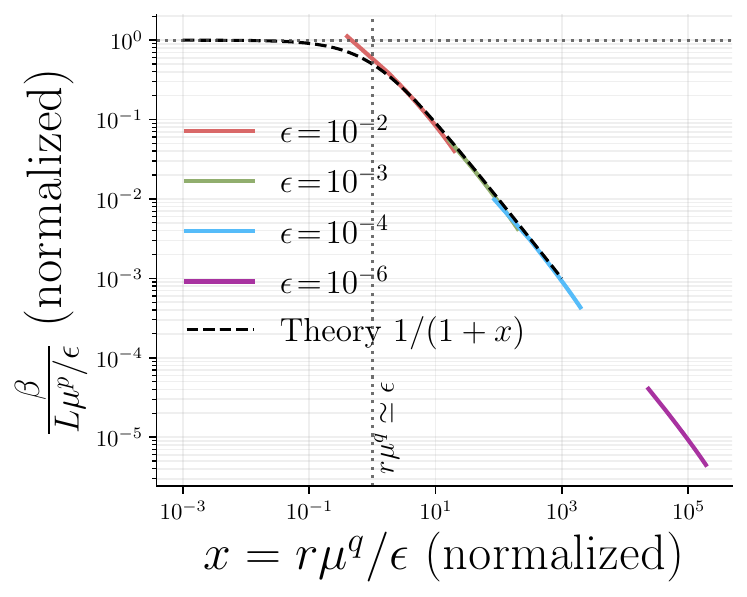}
    \caption{
\textbf{Birth, decay and normalized saturation dynamics of Adam.} Dotted and solid lines mark the theoretical and empirical values for $L=4$, respectively. These values align in both the decay and saturation rate plots across different values of $\epsilon$.
}
    \label{fig:adam_decay_beta}
    \vspace{-0.1cm}
\end{figure}
\FloatBarrier

\subsubsection{Shampoo Phase Transitions}
\Cref{fig:shampoo_phase_transitions} examines how the cumulative Shampoo state changes the birth, decay, and saturation dynamics.
The birth transition from the initial GD exponent $p$ toward the history adapted exponent $p-1$ is discussed in the main text.
Here we focus on the corresponding decay and saturation behavior.
For an unsupported mode with $\mu=0$, different values of $\lambda_{\rm Sh}$ produce different finite amplitude trajectories, but all curves approach the common asymptotic exponent $p+1=2.5$ for $L=4$.
Even in the case of smaller damping, the deviation from $2.5$ is only in the second decimal place.
Thus Shampoo retains a polynomial tail near zero rather than the finite time extinction of ideal Muon.
For a positive target mode, the right panel shows each damping level approaches a distinct constant rate, with the empirical curves closely following the finite amplitude prediction.
The limiting rate depends on the entire accumulated gradient history and this distinguishes Shampoo from memoryless spectral normalization and explains why modes with similar current residuals can saturate at different rates.
\vspace{-0.1cm}
\begin{figure}[h!]
    \centering
    \includegraphics[width=0.32\linewidth]{figures/iclr/shampoo/shampoo_state_transitions.pdf}
    \includegraphics[width=0.32\linewidth]{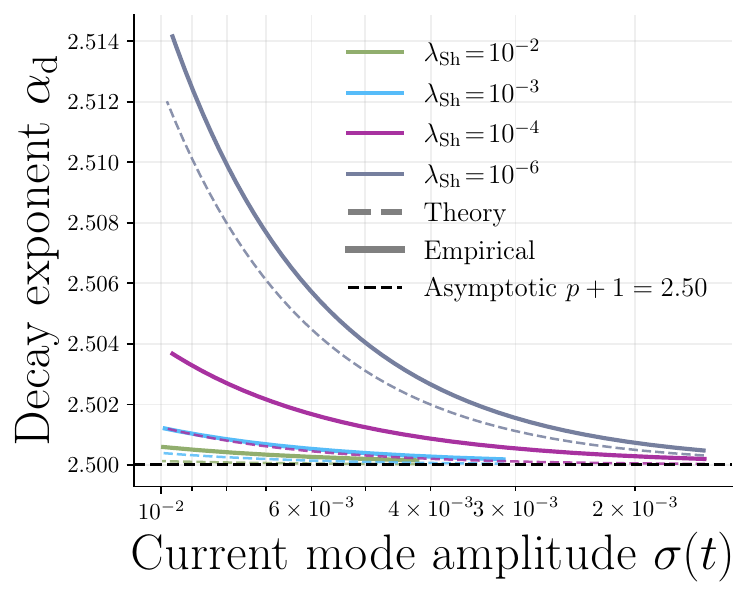}
    \includegraphics[width=0.32\linewidth]{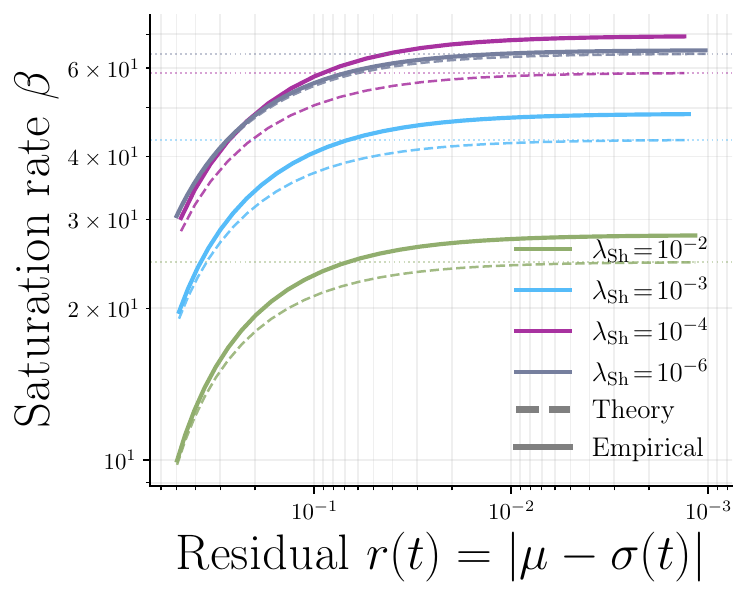}
    \caption{
\textbf{Phase transitions induced by cumulative Shampoo state.}
Solid curves show empirical dynamics and dashed curves show the corresponding aligned theory for depth $L=4$.
Together, the panels show that cumulative Shampoo changes its effective spectral geometry during training while retaining explicit memory of earlier gradient activity.
}
\label{fig:shampoo_phase_transitions}
\end{figure}

\FloatBarrier

\subsubsection{K-FAC Phase Transitions}
\label{sec:exp_kfac_transitions}

\Cref{fig:kfac_phase_transitions} illustrates how additive damping in K-FAC controls the spectral dynamics across all three phases.
For birth, positive damping restores the small mode barrier. 
Reducing $\lambda_{\rm KF}$ shifts this transition toward smaller mode amplitudes, recovering the undamped KFAC regime.

For an unsupported mode, the middle panel confirms that stronger damping causes the polynomial regime to appear at larger amplitudes, following the theory.

Near a positive target, K-FAC's saturation rate is nearly constant along the final approach to the target and approaches the undamped value $L=4$ as $\lambda_{\rm KF}\rightarrow0$.
Thus, K-FAC damping controls not only numerical regularization of the curvature inverse, but also the spectral scale below which GD-like dynamics reappear.

\begin{figure}[h!]
    \centering
    \includegraphics[width=0.32\linewidth]{figures/iclr/kfac/kfac_state_transitions.pdf}
    \includegraphics[width=0.32\linewidth]{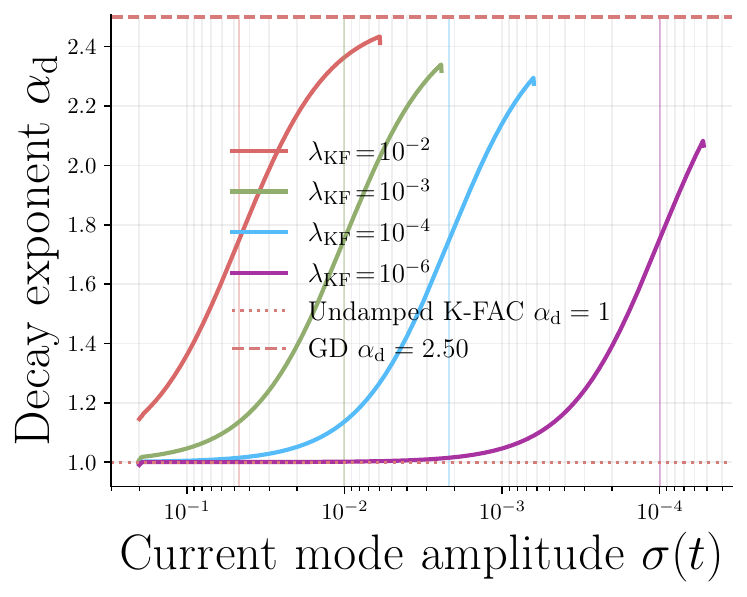}
    \includegraphics[width=0.32\linewidth]{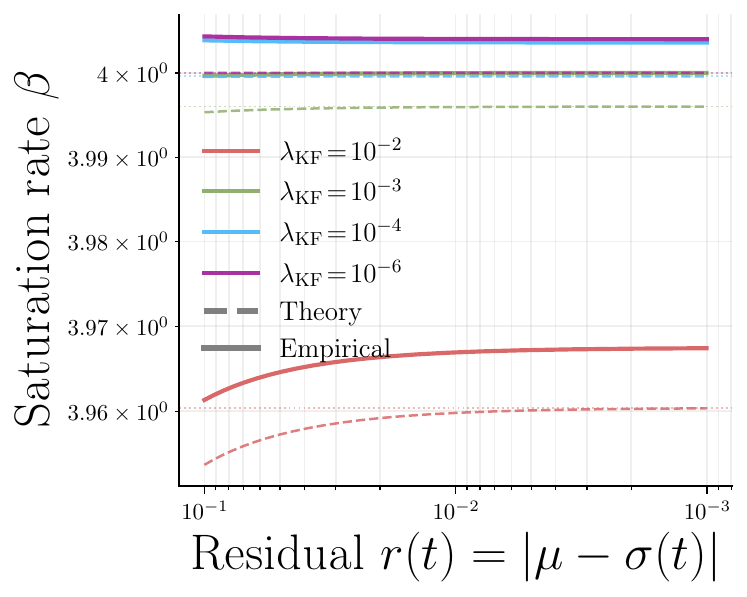}
    \caption{\textbf{Additive K-FAC damping controls the spectral phase transitions.}
Solid curves show empirical dynamics and dashed curves show the corresponding aligned theory for depth $L=4$.
It shows that $\lambda_{\rm KF}$ defines a spectral mobility threshold separating curvature corrected dynamics from GD-like behavior.
}
    \label{fig:kfac_phase_transitions}
\end{figure}

\FloatBarrier

\subsection{Linear Regression}
\label{sec:exp_linear_regression}

We next consider a less structured setting in which the effective predictor is trained by underdetermined linear regression as described in \Cref{app:exp_settings}.
This experiment tests whether the qualitative predictions persist once our simplifying assumptions \Cref{assump:balancedness,assump:aligned_modes,assump:compatible_updates} are removed.

\Cref{fig:linear_regression_summary} shows that the optimizers reach comparable training losses and converge to nearly identical final singular spectra.
This agreement is important: the pronounced differences observed during training are not simply consequences of one method converging to a substantially different level of fit.
Nevertheless, the stable and effective ranks evolve differently.
Thus similar terminal predictors need not imply similar spectral learning dynamics.

\begin{figure}[h!]
    \centering
    \includegraphics[width=0.45\linewidth]{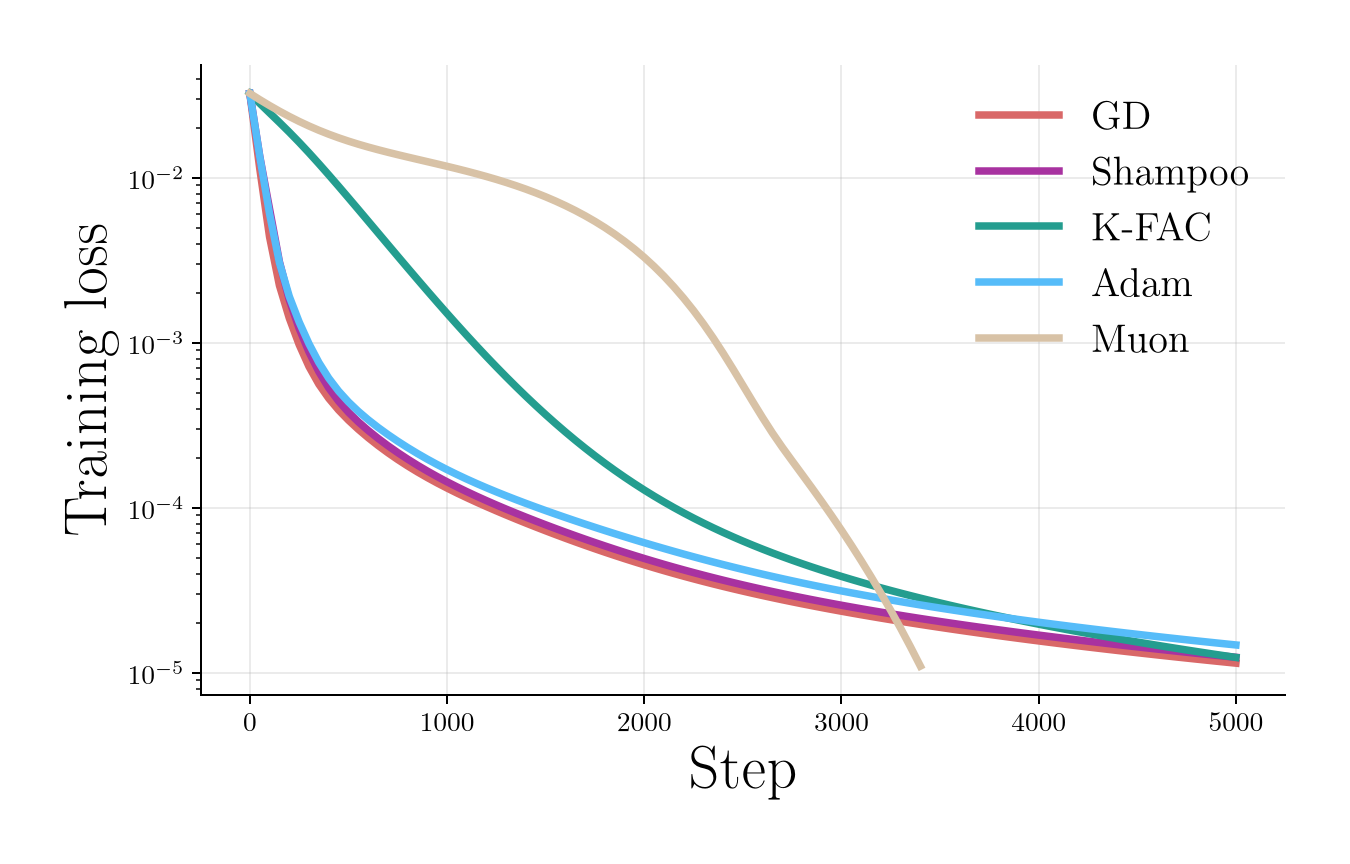}
    \includegraphics[width=0.45\linewidth]{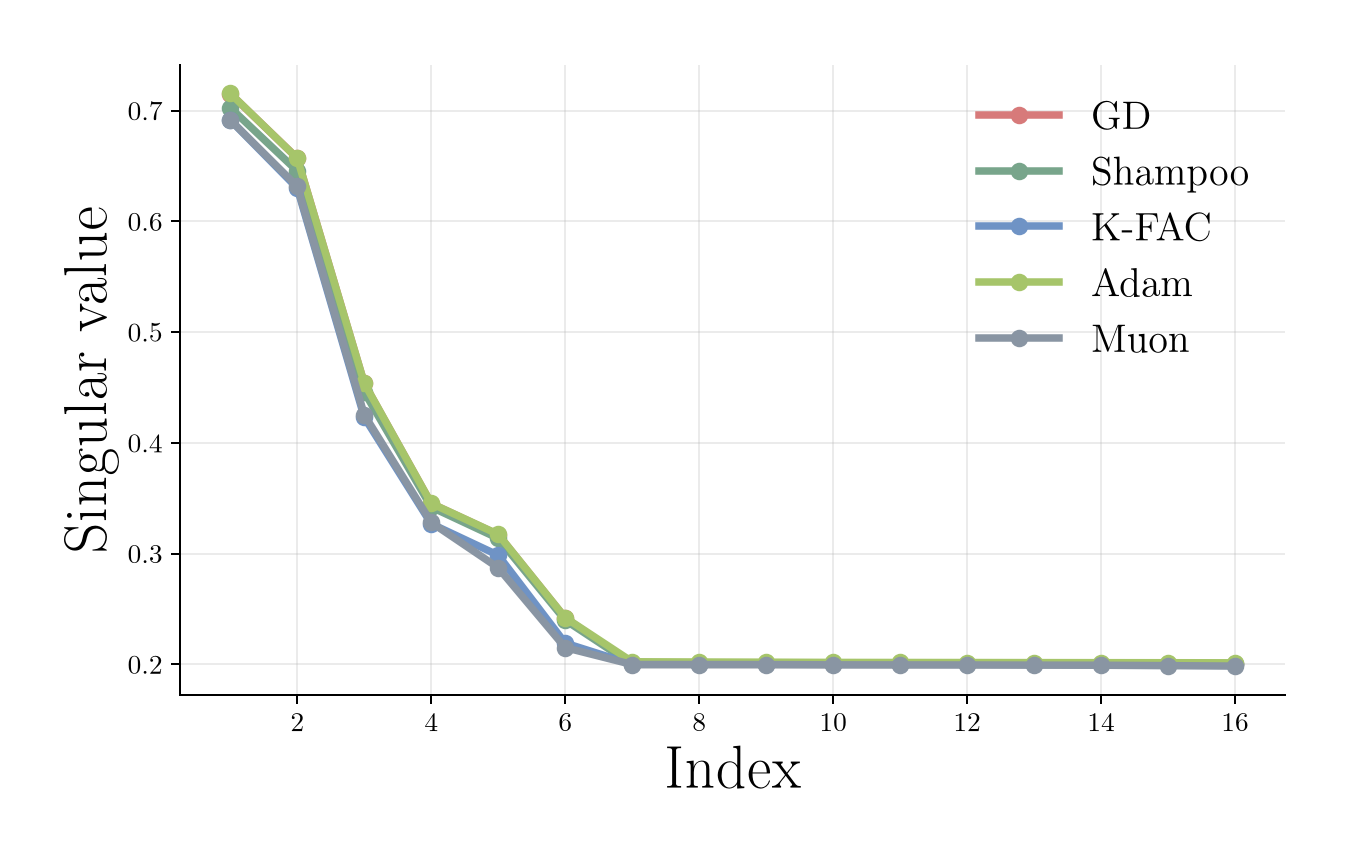}
    \includegraphics[width=0.45\linewidth]{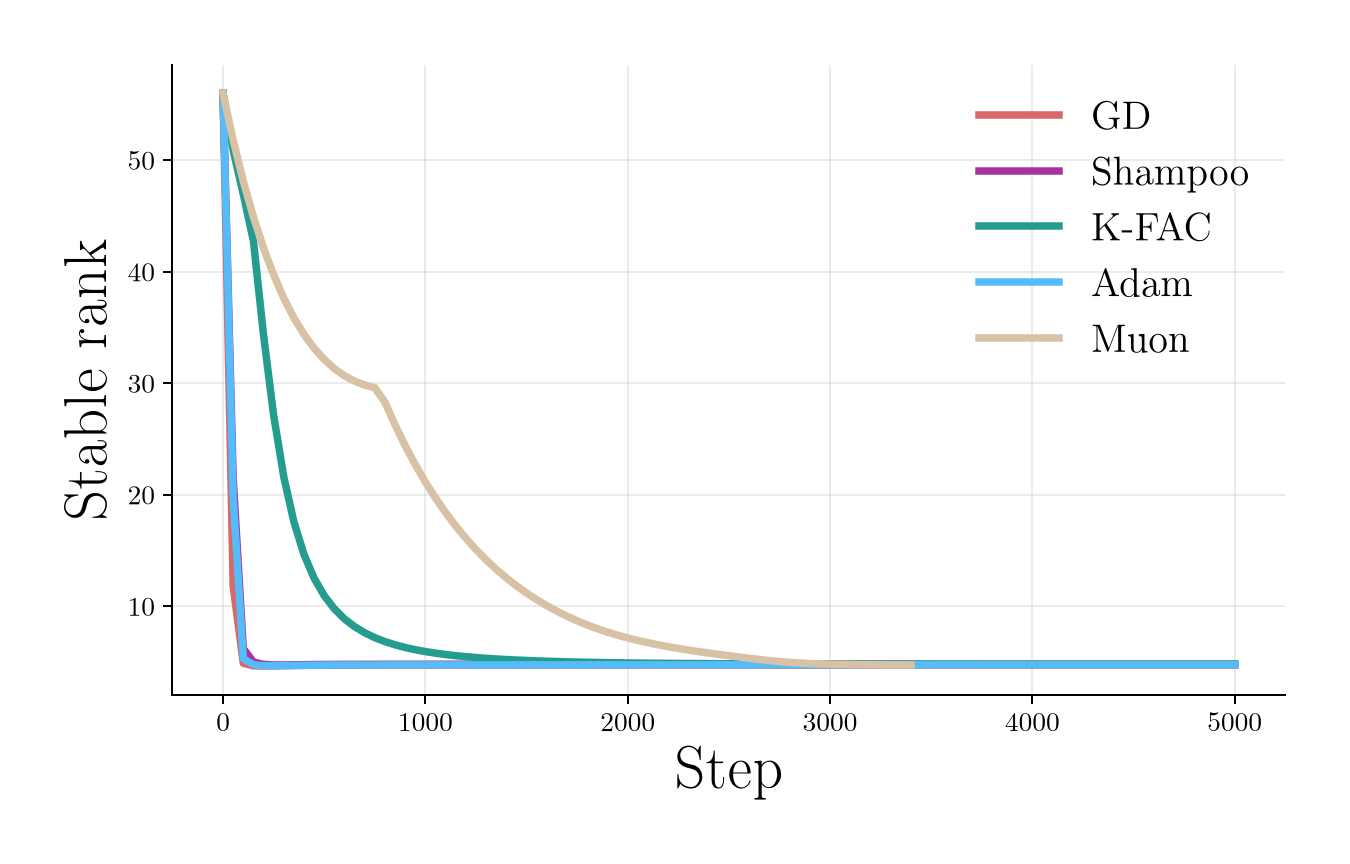}
    \includegraphics[width=0.45\linewidth]{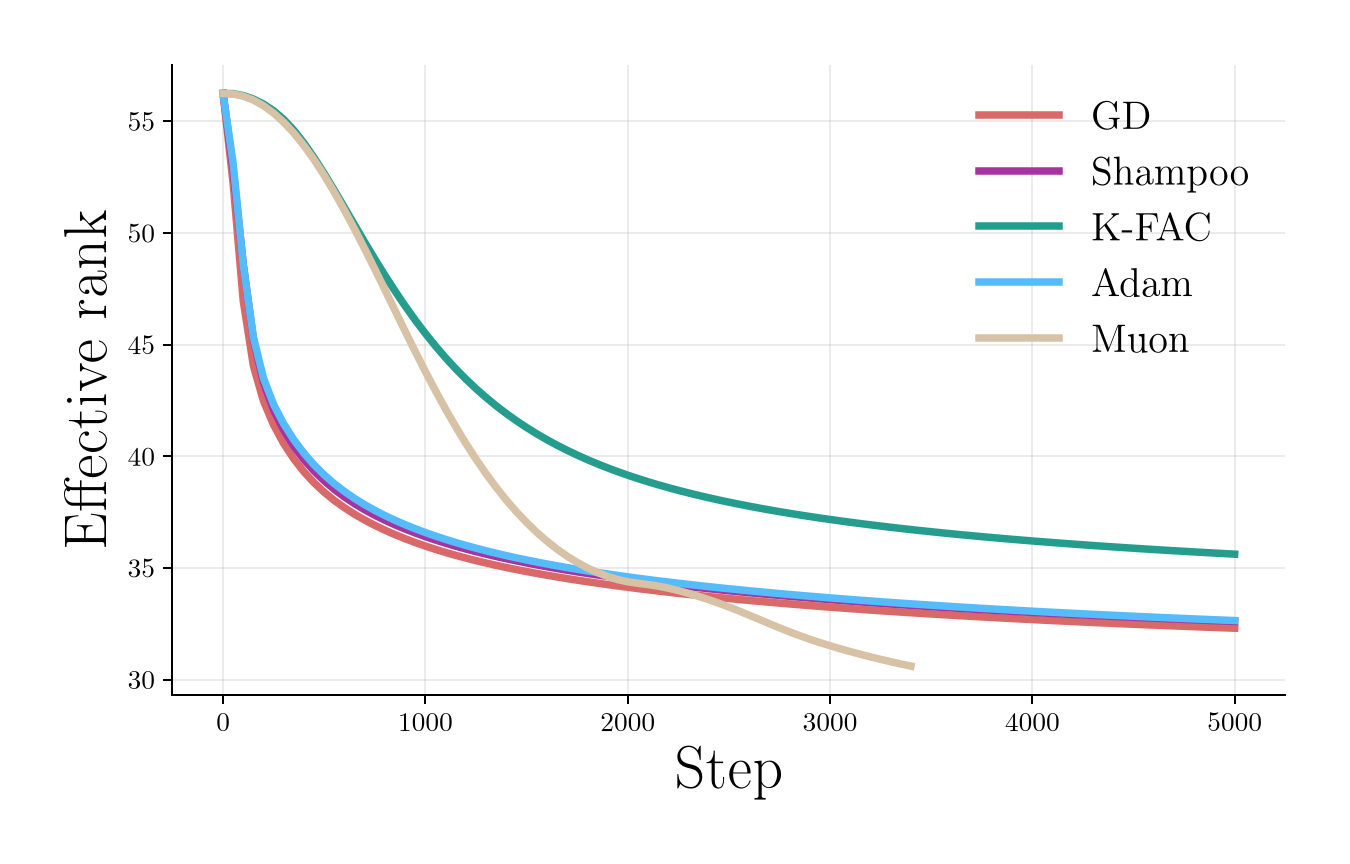}
    \caption{\textbf{Optimizer dependent dynamics persist in underdetermined linear regression.}
We train the same deep linear predictor on an underdetermined, nonwhitened regression problem using GD, Adam, Muon, cumulative Shampoo, and KFAC.
All methods reach comparable final loss and final spectra of $W$ despite substantially different transient dynamics shown by the stable rank and effective rank throughout training.
The optimizer geometry strongly changes how quickly spectral mass concentrates. 
}
\label{fig:linear_regression_summary}
\end{figure}

The singular value trajectories in \Cref{fig:linear_regression_singular_trajectories} make this distinction particularly clear. 
GD exhibits pronounced separation between the emergence times of successive singular values, consistent with the factorization induced small mode barrier derived in the aligned setting.
Adam substantially compresses these delays.
Muon produces a markedly more synchronized growth of the singular spectrum, consistent with normalization reducing mode dependent mobility differences.
Cumulative Shampoo also accelerates weaker spectral components after its preconditioner has accumulated sufficient history, while K-FAC produces comparatively smooth growth across the spectrum through curvature correction.
These effects remain visible even though the singular vectors are not fixed and the data covariance couples the modes.

The experiment therefore supports the interpretation of the theoretical phase laws as mechanisms governing transient spectral learning rather than merely properties of the special aligned model.
Exact exponents depend on the aligned reduction, but the resulting qualitative distinctions 
persist in the more practical underdetermined regression problem.

\begin{figure}[h!]
    \centering
    \begin{subfigure}{0.33\textwidth}
        \centering
        \includegraphics[width=\linewidth]{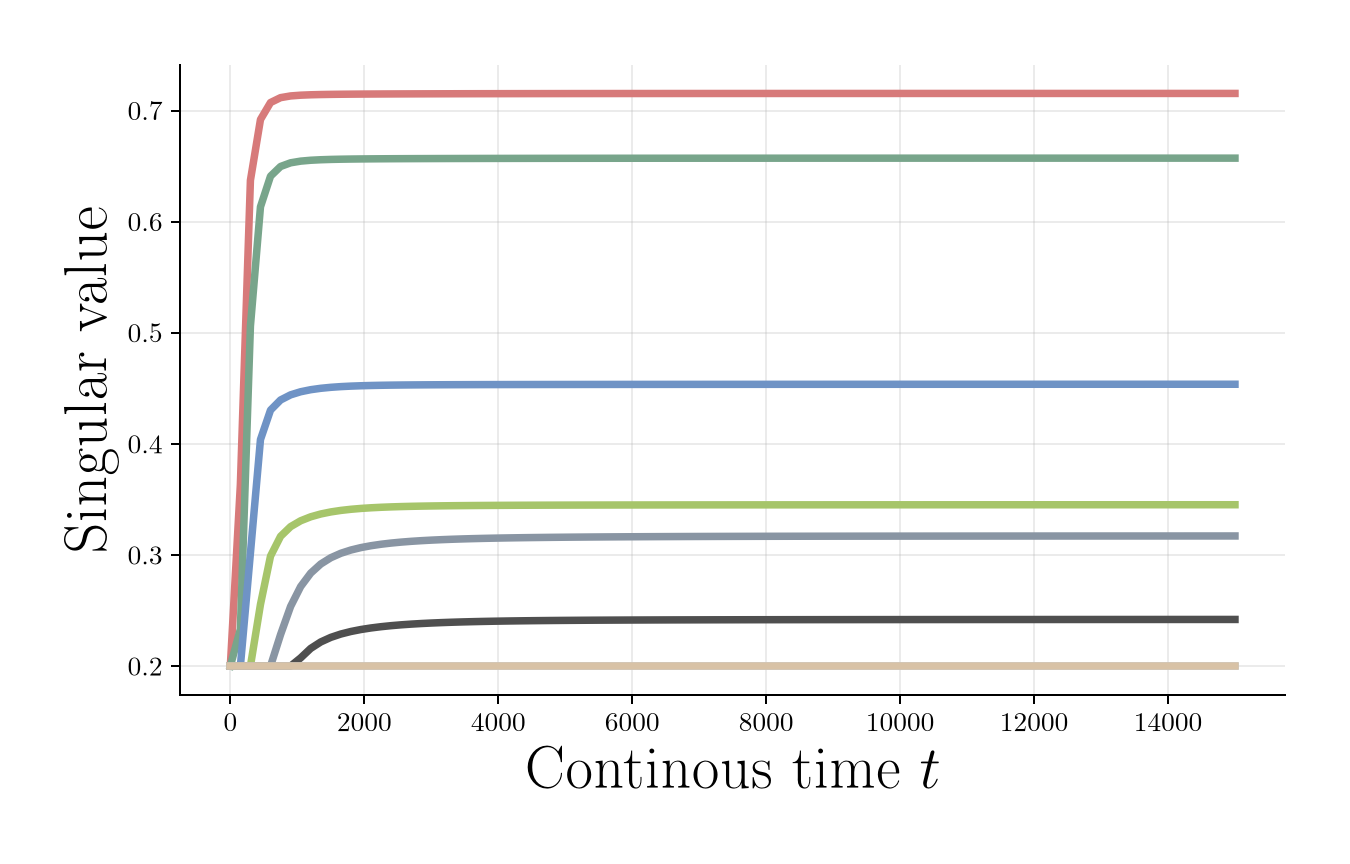}
        \caption{GD}
        \label{fig:gd_mode}
    \end{subfigure}%
    \begin{subfigure}{0.33\textwidth}
        \centering
        \includegraphics[width=\linewidth]{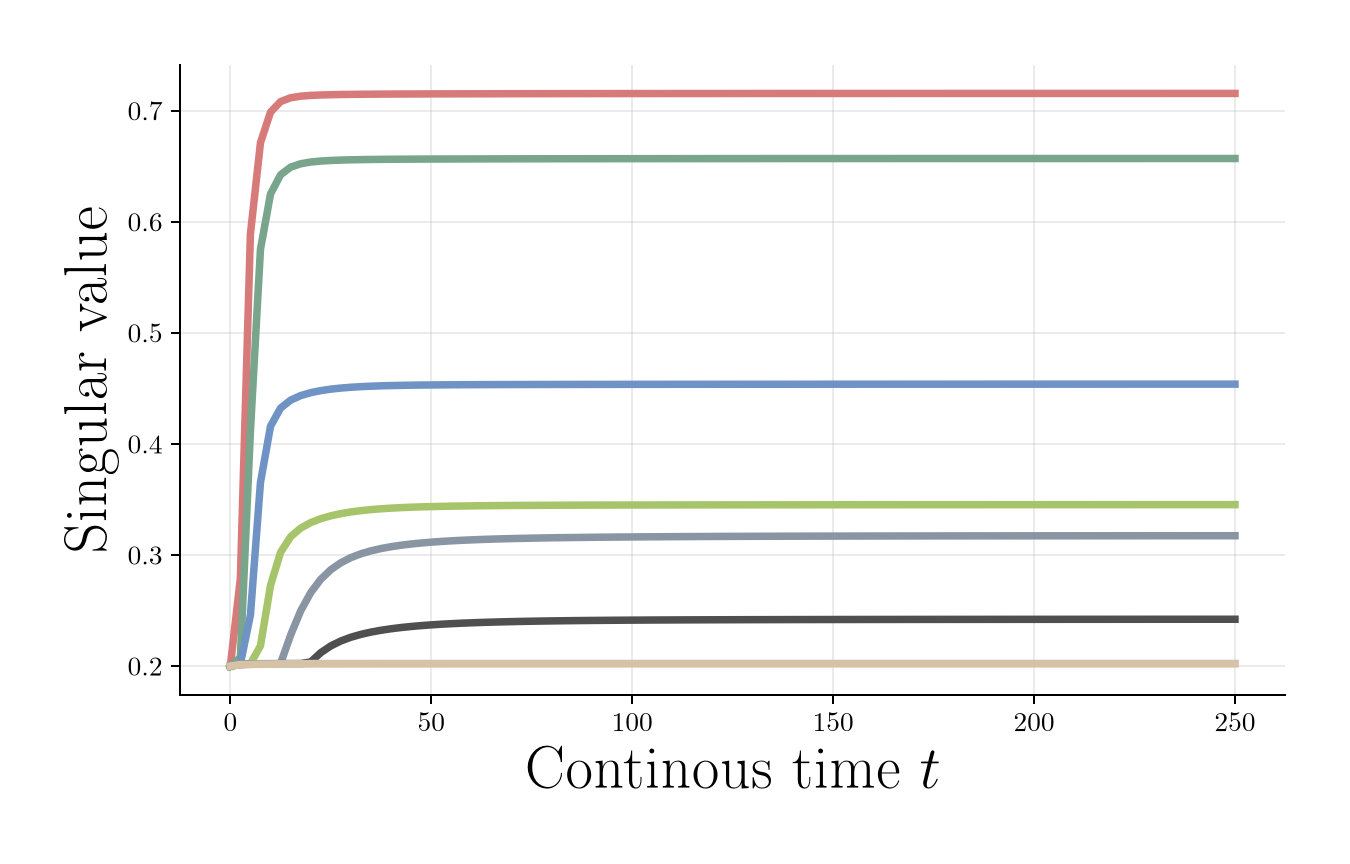}
        \caption{Adam}
        \label{fig:adam_mode}
    \end{subfigure}
    \begin{subfigure}{0.33\textwidth}
        \centering
        \includegraphics[width=\linewidth]{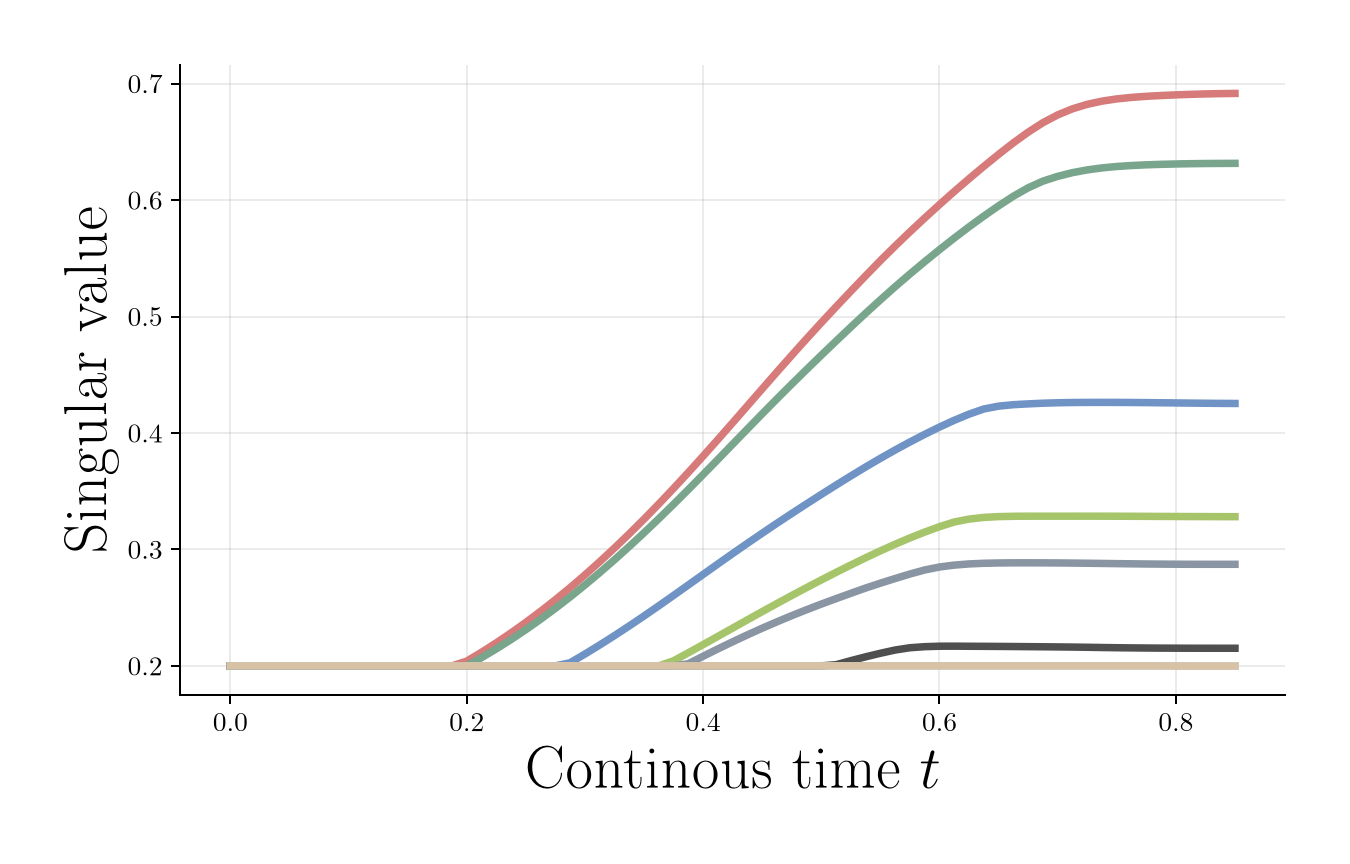}
        \caption{Muon}
        \label{fig:muon_mode}
    \end{subfigure}
    \begin{subfigure}{0.33\textwidth}
        \centering
        \includegraphics[width=\linewidth]{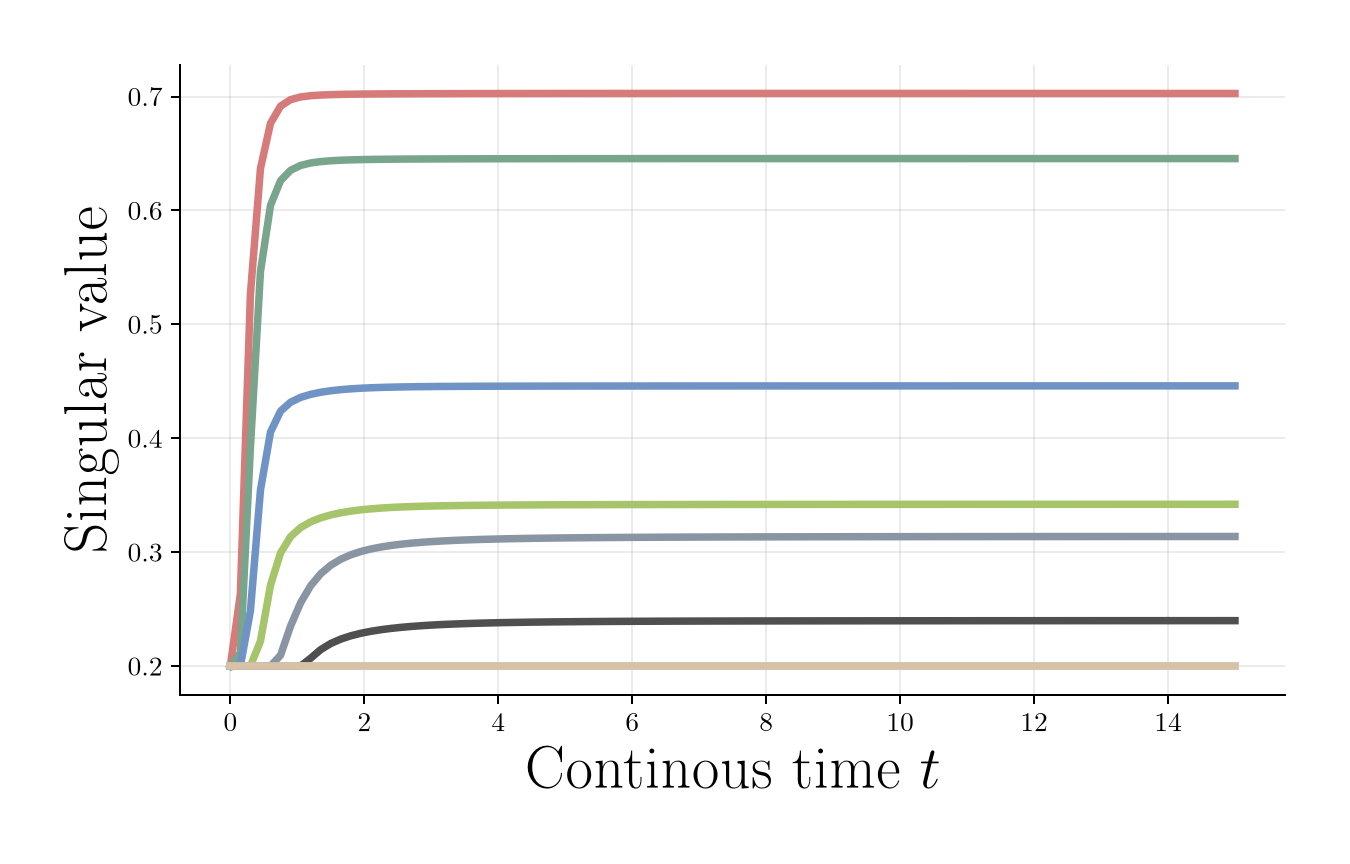}
        \caption{Shampoo}
        \label{fig:shampoo_mode}
    \end{subfigure}
    \begin{subfigure}{0.33\textwidth}
        \centering
        \includegraphics[width=\linewidth]{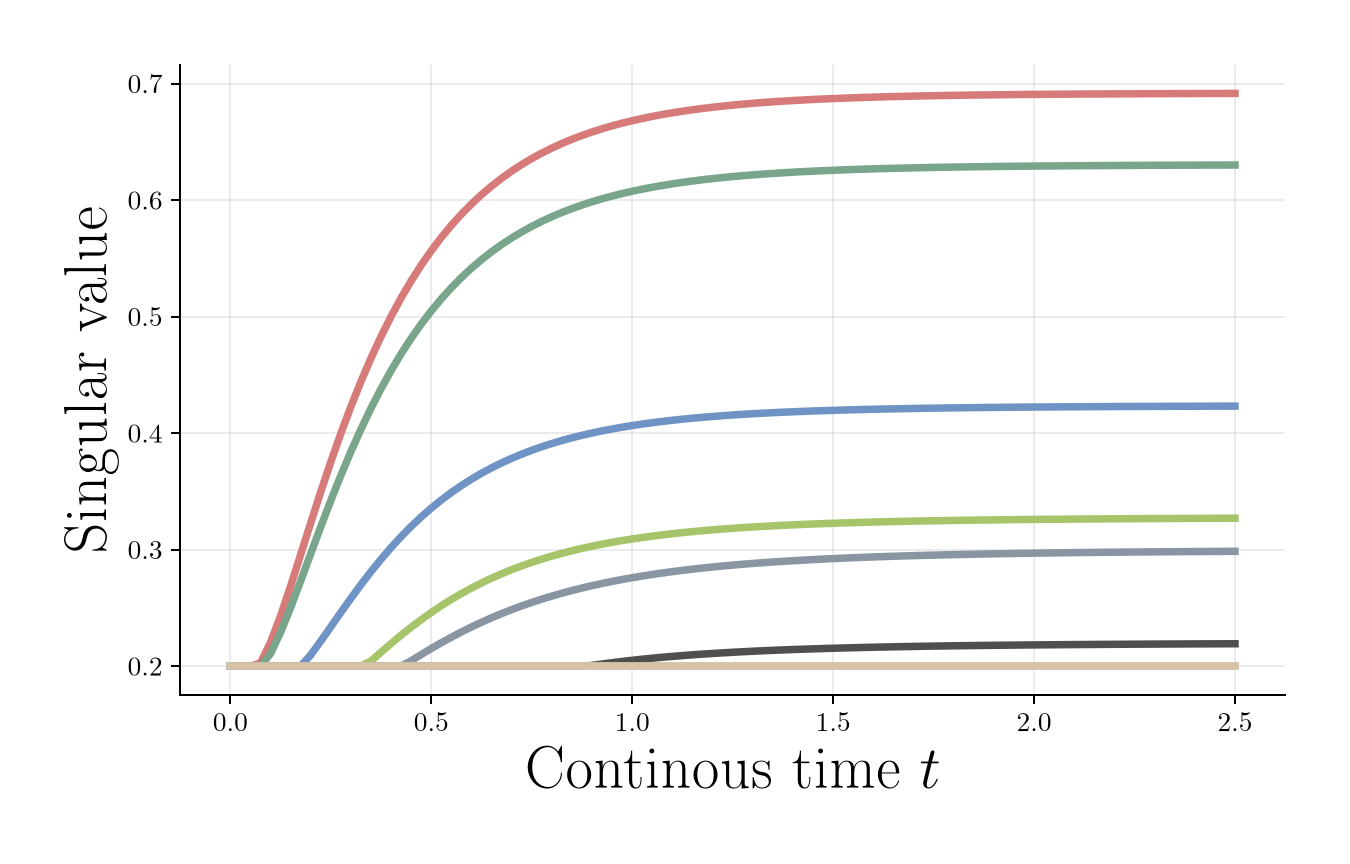}
        \caption{K-FAC}
        \label{fig:kfac_mode}
    \end{subfigure}
    \caption{\textbf{Singular value trajectories in nonaligned underdetermined linear regression.}
Although all methods approach similar final singular spectra, their transient spectrum differ substantially.
The qualitative optimizer dependent dynamics predicted by the aligned phase analysis remain visible even when the empirical data covariance couples singular directions and the exact scalar mode reduction no longer holds.
}
\label{fig:linear_regression_singular_trajectories}
\end{figure}

\FloatBarrier

\end{document}